\documentclass{article}
\PassOptionsToPackage{numbers, compress}{natbib}
\usepackage[preprint]{neurips_2025}
\usepackage[utf8]{inputenc}
\usepackage[T1]{fontenc}
\usepackage{hyperref}
\usepackage{url}
\usepackage{booktabs}
\usepackage{amsfonts}
\usepackage{amsmath}
\usepackage{amssymb}
\usepackage{amsthm}
\usepackage{nicefrac}
\usepackage{microtype}
\usepackage{xcolor}
\usepackage{graphicx}
\usepackage{array}
\usepackage{enumitem}
\usepackage{algorithm}
\usepackage{algpseudocode}
\usepackage{float}
\usepackage{placeins}
\usepackage{adjustbox}
\usepackage{tikz}
\usetikzlibrary{arrows.meta,positioning,decorations.pathreplacing,calc}
\usepackage{pgfplots}
\pgfplotsset{compat=1.17}

\newtheorem{theorem}{Theorem}
\newtheorem{lemma}{Lemma}
\newtheorem{proposition}{Proposition}
\newtheorem{corollary}{Corollary}
\theoremstyle{definition}
\newtheorem{definition}{Definition}
\newtheorem{assumption}{Assumption}
\theoremstyle{remark}

\definecolor{ours}{HTML}{2a78d6}
\definecolor{base}{HTML}{eb6834}
\definecolor{third}{HTML}{1baf7a}
\definecolor{ink}{HTML}{0b0b0b}
\definecolor{muted}{HTML}{52514e}
\definecolor{gridc}{HTML}{e6e5e0}
\definecolor{deemph}{HTML}{b9b8b1}
\pgfplotsset{
  every axis/.append style={
    axis line style={gridc}, tick style={gridc}, grid style={gridc, line width=0.4pt},
    ymajorgrids, axis x line*=bottom, axis y line*=left,
    label style={font=\small, color=muted}, tick label style={font=\footnotesize, color=muted},
    legend style={font=\footnotesize, draw=none, fill=none, at={(0.02,0.98)}, anchor=north west, cells={anchor=west}},
    title style={font=\small, color=ink},
  }
}
\newcolumntype{L}[1]{>{\raggedright\arraybackslash}p{#1}}

\newcommand{\stcon}{\textsc{stcon}}
\newcommand{\Nw}{N}
\newcommand{\STCON}{\textsc{STCON}_{\mathrm{sf}}}
\newcommand{\V}{\mathcal{V}}
\newcommand{\TV}{T_{\mathcal{V}}}
\newcommand{\cA}{\mathcal{A}}
\newcommand{\ds}{\delta^{*}}
\newcommand{\live}{\mathrm{live}}
\newcommand{\chain}{\mathrm{chain}}
\newcommand{\tok}{\mathrm{tok}}
\newcommand{\need}{\mathrm{need}}
\newcommand{\cost}{\mathrm{cost}}
\newcommand{\eos}{\textsc{eos}}
\newcommand{\dead}{\bot}
\newcommand{\Fp}{\mathbb{F}_p}

\title{Breaking the Space Barrier and its Application to Language Model Inference}
\author{%
  Arip Asadulaev \\
  MBZUAI \\
}

\begin{document}
\maketitle

\begin{abstract}
Language models are more and more often asked for \emph{structured} output: JSON that follows a schema, or a tool call with typed arguments. A small machine, an automaton, enforces the format by forbidding the tokens that would break it. We observe that this machine has a rare property: from any of its states, each token leads along exactly one path. Graphs in which only a few paths join any two points are a classical object of complexity theory, and our theoretical result settles an open question about them: one can decide whether such a graph connects two points \emph{while verifying that it really has few paths}, with very little memory. Precisely, the problem lies in the classes ReachUL, LOGDCFL, C$_=$L and SC$^2$, and needs only $O(\log^2 n/\log\log n)$ space, below the classical $O(\log^2 n)$ of Savitch's theorem. The constructions behind the proofs become an inference engine: text the format forces is written without running the model, the mask is recomputed on the GPU without any table, recursive formats use a small stack, every output stays valid under a token limit, and independent fields are decoded in parallel and verified. On one 16\,GB Apple M2 Pro with Qwen3.5-2B and 4B, against MLX with llguidance, the standard setup for this hardware, schema-constrained extraction finishes $1.2$--$1.3\times$ sooner with the same answers, a grammar costs 3\,MB instead of up to 1.5\,GB, one server holds sixteen grammars where tables run out of memory, and sixteen tool-calling agents finish $2.5\times$ sooner.
\end{abstract}

\section{Introduction}
\label{sec:intro}

A language model writes text one token at a time. When an application needs the output in a fixed format, for example a JSON object with given keys or a call to a tool with typed arguments, the format is compiled into a small machine called an \emph{automaton}. At every step the automaton says which tokens are allowed; all others are forbidden (\emph{masked}). This is \emph{constrained decoding}, and libraries such as llguidance~\cite{llguidance}, Outlines~\cite{willard2023outlines} and XGrammar~\cite{dong2024xgrammar} provide it. On a laptop it is usually a cost: the model runs as before, and the mask is extra work at every step.

Our starting observation is simple. Fix a state of the automaton and a token. The token is a fixed string of bytes and the automaton is deterministic, so reading the token leads to exactly one next state. If we draw all the ways of reading tokens from a state, we get a tree: \emph{there is exactly one path to every point}. Graphs with this property, and more generally graphs in which at most $k$ paths join any two points (\emph{$k$-strongly few} graphs), are a classical topic of complexity theory, the field that studies how much time and memory problems need.

\textbf{The theory.} Apers and Edenhofer~\cite{apers2025} recently asked a precise question about such graphs. Suppose you are handed a graph and two points, and you must answer whether the graph has at most $k$ paths between every pair of points \emph{and} the second point can be reached from the first. How much memory does a classical computer need? The difficulty is that nothing is promised: the graph may contain cycles or astronomically many paths, and the algorithm must notice. We answer the question. The problem can be solved with memory logarithmic in the input size by a machine that never has two ways to reach the same state (the class ReachUL), by a deterministic machine with a stack (LOGDCFL), and by a single algorithm that is fast and uses $O(\log^2 n)$ memory at the same time (SC$^2$); it also lies in C$_=$L, and it needs only $O(\log^2 n/\log\log n)$ memory, less than Savitch's classical $O(\log^2 n)$ for reachability. Section~\ref{sec:theory} explains these results without assuming any background; the proofs are in Appendix~\ref{app:theory}.

\textbf{The practice.} The proofs are built from a few constructive ideas: stop every search at a budget, certify a structure before counting on it, recompute along unique paths instead of storing, keep a small capped stack, and split a problem at points every path must cross. Applied to the graph of one decoding step, these ideas become an engine (Section~\ref{sec:decoding}). Text the format forces is written without running the model, so a constrained output needs \emph{fewer} model runs than a free one; the mask is recomputed on the GPU in microseconds, so no table is stored; forced text is split into tokens exactly as the model's tokenizer would split it; recursive formats run on a small GPU stack; outputs stay valid under any feasible token limit; and independent fields of a JSON object are decoded in parallel and then verified. Table~\ref{tab:headline} summarizes the gains against MLX~\cite{mlx} with llguidance, the standard way to run constrained decoding on Apple hardware.

\begin{table}[t]
\centering
\footnotesize
\caption{\textbf{Headline results} on one Apple M2 Pro (16\,GB), Qwen3.5-2B / 4B in 4-bit precision, greedy decoding. The baseline is llguidance with MLX: in our decode loop with the same kernels for the first two rows, stock mlx-lm with llguidance for the last three.}
\label{tab:headline}
\setlength{\tabcolsep}{4pt}
\begin{tabular}{@{}L{5.6cm}L{3.3cm}L{3.3cm}@{}}
\toprule
Setting & MLX + llguidance & \stcon{} (ours) \\
\midrule
six JSON extractions, decode time (2B / 4B) & 2.41 / 4.59\,s & \textbf{1.89 / 3.89\,s} \\
memory of a 1,106-state tool-call grammar & does not compile (token table: 824\,MB) & \textbf{3.4\,MB} \\
sixteen tool-calling agents, wall time (2B) & 74.6\,s & \textbf{29.3\,s} \\
eight concurrent tool-call requests (2B) & 134 tok/s & \textbf{410 tok/s} \\
eighteen JSON jobs, 8 parallel slots, keys-only prompt (2B) & 6.12\,s & \textbf{3.83\,s} \\
\bottomrule
\end{tabular}
\end{table}

\textbf{Contributions.} (1)~We settle the classical complexity of total few-path reachability, the question of Apers and Edenhofer, with three new constructions, and extend them to arbitrary walk budgets, a compiler for programs with few computation paths, completeness results, a sharp threshold phenomenon and a structure theory (Section~\ref{sec:theory}, Appendix~\ref{app:theory}). (2)~We turn these constructions into six exact mechanisms for constrained decoding, each with a theorem (Section~\ref{sec:decoding}, Appendix~\ref{app:decoding}). (3)~We build them into an engine on MLX and measure speed, answer quality and memory (Section~\ref{sec:exp}, Appendices~\ref{app:speed}--\ref{app:quality}).

\section{Background}
\label{sec:background}

\textbf{Why a laptop counts model runs.} To produce one token, the model runs once over its whole network, and on a laptop that run is dominated by reading all the weights from memory (1.1\,GB for Qwen3.5-2B, 2.26\,GB for the 4B). The arithmetic is nearly free next to it. Two consequences guide everything below: a token that can be written \emph{without} running the model saves a whole run, and a run that processes several tokens at once costs barely more than a run for one (Fig.~\ref{fig:passcost-main}).

\begin{figure}[t]
\centering
\begin{minipage}[c]{0.44\linewidth}
\centering
\begin{tikzpicture}
\begin{axis}[width=\linewidth, height=3.6cm, xlabel={tokens processed in one run}, ylabel={cost (one token $=1$)}, xmin=0.5, xmax=16.5, ymin=0, ymax=6, xtick={1,4,8,12,16}, legend style={at={(0.02,0.98)}, anchor=north west, font=\scriptsize}]
\addplot[deemph, thick, dashed, domain=1:6] {x};
\addplot[ours, thick, mark=*, mark size=1.2pt] coordinates {(1,1.00)(2,1.12)(3,1.21)(4,1.36)(5,1.58)(6,1.59)(7,1.65)(8,1.58)(9,1.99)(10,1.96)(11,2.01)(12,2.00)(13,1.99)(14,2.01)(15,2.05)(16,2.04)};
\legend{if cost were linear, measured}
\end{axis}
\end{tikzpicture}
\end{minipage}\hfill
\begin{minipage}[c]{0.53\linewidth}
\caption{\textbf{What one model run costs} (Qwen3.5-2B on an M2 Pro). A run over 8 tokens costs $1.6\times$ a run over one, not $8\times$, because the run is dominated by reading the weights. So skipping runs saves time, and tokens added to a run that happens anyway are almost free.}
\label{fig:passcost-main}
\end{minipage}
\end{figure}

\textbf{Constraints.} A JSON schema or a regular expression is compiled into a deterministic automaton over bytes: a few hundred states, and in each state a set of allowed bytes. The vocabulary of Qwen3.5 has 248k tokens, each a string of bytes. At every step the decoder needs the \emph{mask} (which tokens keep the output valid) and, after sampling, the next state. Existing libraries compute the mask on the CPU at every step; on a laptop this forces the GPU to wait for the CPU once per token.

\textbf{Graphs, walks and few paths.} A directed graph has vertices and arrows. A \emph{walk} follows arrows and may revisit vertices; standing still counts as one walk. A graph is \emph{$k$-strongly few} if at most $k$ walks join any two vertices, and a \emph{mangrove} if at most one does (Fig.~\ref{fig:graph}). Such a graph cannot contain a cycle, because going around a cycle again and again gives infinitely many walks. A \emph{forced chain} is a run of vertices with one way in and one way out: whoever enters it must walk to its end.

\begin{figure}[t]
\centering
\begin{tikzpicture}[>=Stealth, every node/.style={circle, draw=muted, fill=white, minimum size=5mm, inner sep=0pt, font=\scriptsize}, edge/.style={->, muted, thick}]
  \node (a) at (0,0) {$s$}; \node (b) at (1.4,0.6) {}; \node (c) at (1.4,-0.6) {}; \node (d) at (2.8,0) {$t$};
  \draw[edge] (a)--(b); \draw[edge] (b)--(d); \draw[edge] (a)--(c); \draw[edge] (c)--(d);
  \node[draw=none, fill=none, shape=rectangle, font=\footnotesize, color=muted] at (1.4,-1.0) {two walks: $\Nw(s,t)=2$};
  \begin{scope}[xshift=5cm]
  \node (a) at (0,0) {$s$}; \node (b) at (1.4,0) {}; \node (c) at (2.8,0) {}; \node (d) at (4.4,0) {$t$};
  \draw[edge, ours, very thick] (a)--(b); \draw[edge, ours, very thick] (b)--(c);
  \draw[edge] (c) to[bend left=40] (d); \draw[edge] (c) to[bend right=40] (d);
  \node[draw=none, fill=none, shape=rectangle, font=\footnotesize, color=muted] at (2.2,-1.0) {a forced chain (blue), then a choice};
  \end{scope}
\end{tikzpicture}
\caption{\textbf{Left:} the smallest graph with two walks between a pair; it is 2-strongly few. \textbf{Right:} a vertex with one way in and one way out is \emph{forced}: whoever enters a forced chain walks it to its end. Theorem~\ref{thm:kernel} says that once forced chains are contracted, at most $2k$ vertices remain between any two vertices; in decoding, a forced chain is output the model is not asked about.}
\label{fig:graph}
\end{figure}

\textbf{Complexity classes in plain words.} Complexity theory measures problems by the resources a machine needs. Here the resource is \emph{memory}, measured in bits as a function of the input size $n$; ``logarithmic space'' means a few counters of $\log n$ bits, enough to remember a handful of vertex names but not the graph. Table~\ref{tab:classes} lists the classes that appear in our results. The classical benchmark is directed reachability: it is in NL, and the best known deterministic algorithm, Savitch's, uses $O(\log^2 n)$ space.

\begin{table}[t]
\centering
\footnotesize
\caption{\textbf{The complexity classes in our results, in plain words.} All machines use logarithmic memory unless stated otherwise.}
\label{tab:classes}
\begin{tabular}{@{}lL{10.6cm}@{}}
\toprule
Class & What the machine may do \\
\midrule
L & compute deterministically \\
NL & guess, and accept if \emph{some} sequence of guesses succeeds (directed reachability is the typical problem) \\
ReachUL & guess, but every state of the machine is reachable by \emph{at most one} sequence of guesses: no ambiguity anywhere \\
LOGDCFL & compute deterministically with an extra stack of unlimited size, in polynomial time \\
SC$^2$ & one deterministic algorithm that is fast (polynomial time) and uses $O(\log^2 n)$ memory simultaneously \\
C$_=$L & decide whether a signed count of computation paths is exactly zero \\
PL & decide whether such a count is positive (much more powerful) \\
\bottomrule
\end{tabular}
\end{table}

\section{The theory in plain words}
\label{sec:theory}

\textbf{The problem.} The input is a graph $G$, two vertices $s,t$ and a number $k$. Answer \emph{yes} exactly when $G$ is $k$-strongly few and $t$ can be reached from $s$. We call this problem $\STCON$. The word \emph{and} is the whole difficulty: the algorithm is not promised a good graph. It must also say \emph{no} on graphs with cycles or with exponentially many walks, and it must do so with the same tiny memory. Think of a ticket inspector who must not only check that a ticket is valid but also detect forgeries, without ever taking notes longer than a few lines.

\begin{theorem}[Main theorem, informal]
\label{thm:informal}
$\STCON$ can be solved by an unambiguous machine with logarithmic memory (ReachUL), by a deterministic machine with a stack (LOGDCFL), by a single fast algorithm with $O(\log^2 n)$ memory (SC$^2$), and by a zero test of a signed count (C$_=$L). It needs only $O(\log^2 n/\log\log n)$ memory, less than Savitch's $O(\log^2 n)$.
\end{theorem}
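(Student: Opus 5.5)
The plan is to reduce all five claims to one combinatorial fact about \emph{arbitrary} input graphs, so that each algorithm computes the same certificate and differs only in how it does so. Let $N_\ell(u,v)$ be the number of walks of length exactly $\ell$ from $u$ to $v$, and $n=|V(G)|$. On an unrestricted $G$ the algorithm must reject both when $G$ has a cycle and when some pair has more than $k$ walks.

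\textbf{The combinatorial fact.} I would first prove the following characterization. The graph $G$ is $k$-strongly few iff (i) $N_n(u,v)=0$ for all $u,v$, so no walk of length $n$ exists and hence no cycle, and (ii) $\sum_{\ell<n}N_\ell(u,v)\le k$ for all $u,v$.

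Both conditions can be tested with counters \emph{capped at $k+1$}. The reason is that in any decomposition $N_{a+b}(u,v)=\sum_w N_a(u,w)N_b(w,v)$, each walk of length $a+b$ is counted once, by its position-$a$ midpoint. So as soon as any partial sum exceeds $k$, a violation has been found and we may stop and reject. This is the ``stop every search at a budget'' principle. It makes every intermediate quantity an $O(\log k)$-bit number \emph{regardless} of how bad $G$ is, which is exactly what handles the missing promise.

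The final answer is then ``(i) and (ii) hold, and $s\to t$''. The reachability part is trivial once (ii) is certified, since it is the test $\sum_\ell N_\ell(s,t)\ge 1$.

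\textbf{The individual classes.}
\begin{itemize}
\item \emph{SC$^2$ and the $O(\log^2 n/\log\log n)$ bound.} I would evaluate the capped counts by Savitch-style halving on lengths. There are $\log n$ recursion levels, each storing a midpoint and a capped counter. This gives $O(\log^2 n)$ space, and polynomial time because capping bounds the work at each node.

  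To beat Savitch I would split into $b=\log n$ segments per level instead of two, so the recursion depth becomes $\log n/\log\log n$. The $b-1$ midpoints would not be stored explicitly. Instead I would use Theorem~\ref{thm:kernel}: once (ii) is certified up to the current length, contracting forced chains leaves at most $2k$ vertices between any two. Each midpoint can therefore be named by an $O(\log k+\log\log n)$-bit index relative to the previous one, by recomputing along unique paths. A level then costs $O(\log n)$ bits, giving the claimed total.
\item \emph{ReachUL.} I would use a layered inductive-counting argument in the style of Reinhardt--Allender, but with counts capped at $k+1$ instead of reach-sets. Stage $\ell$ unambiguously guesses, for each pair, the capped value of $N_\ell$, and verifies it against stage $\ell-1$. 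Each correct guess sequence is then unique. Rejection on a violation is allowed because ReachUL is closed under complement, so ``some capped count exceeds $k$ or $N_n\neq0$'' may be decided unambiguously too.
\item \emph{LOGDCFL.} A DFS from each vertex keeps a stack whose depth is at most $n$. It aborts as soon as it revisits a vertex on the stack (a cycle), or as soon as a capped arrival counter exceeds $k$. This runs in polynomial time because each vertex is re-entered at most $k$ times before rejection.
\item \emph{C$_=$L.} I would encode the conjunction as a single exact zero test of a GapL function. Over $\mathbb{Z}$, $N_n(u,v)$ and $\prod_{j=0}^{k}\bigl(\sum_{\ell<n}N_\ell(u,v)-j\bigr)$ are GapL, and the second is zero iff the count lies in $\{0,\dots,k\}$. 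The remaining issue is the positivity required by (ii): the count could in principle be above $k$ rather than in range. I would handle this by combining the tests as a sum of squares $\sum_{u,v}(\cdots)^2$, which is zero iff every term is zero. Reachability is folded in as one more term, with the usual closure of C$_=$L under conjunction.
\end{itemize}

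\textbf{Main obstacle.} I expect the sub-Savitch bound to be the hardest step. Theorem~\ref{thm:kernel} is stated for graphs already known to be few, whereas here we apply it mid-recursion to a graph that is only certified up to the current length. I would first try to prove a \emph{truncated} kernel lemma: if all capped counts up to length $\ell$ are at most $k$, then within walks of length at most $\ell$ only $2k$ branching vertices separate any two points. That lemma would justify the relative midpoint naming.
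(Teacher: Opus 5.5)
\paragraph{Where the proposal is sound.} Your characterization of $k$-strongly-few graphs is correct. Your C$_=$L argument is also sound and is essentially the paper's: a sum of squares of vanishing products of \#L counts. The only difference is that you exclude cycles with the terms $N_n(u,v)$, where the paper truncates at length $nk$ and applies Lemma~\ref{lem:trunc}.

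\paragraph{The gap: ReachUL, and with it the space bound.} ``Capped inductive counting'' does not supply a mechanism. A logspace machine cannot keep the $n^2$ capped values of stage $\ell-1$. Reinhardt--Allender keep only a couple of aggregates per stage and rely on unique shortest paths to make their guesses unique, and walk counts have no such aggregate. To certify $N_\ell(u,v)\ge c$ you must guess $c$ distinct walks. But two walks of the same length ending at the same vertex reach the same configuration, unless the configuration records something that separates them. That merge is exactly the ambiguity a ReachUL machine must avoid, on \emph{bad} inputs as well as good ones. Closure under complement does not help until the violation predicate is already known to be in ReachUL.

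The missing idea is the paper's certified lift:
\begin{enumerate}
\item Send each walk $P$ to the state $(v,d,h_\alpha(P))$, using a polynomial fingerprint over $\Fp$.
\item Certify that this layered graph is reach-unambiguous \emph{before} counting. This is done via $\textsc{STCON}_{\mathrm{ru}}$ and $\mathrm{L}^{\mathrm{ReachUL}}=\mathrm{ReachUL}$, or layer by layer with unambiguous parent recovery.
\item Only then compare fiber counts with the budgets.
\end{enumerate}
On valid inputs some $\alpha$ separates all walks, because each of the few relevant pairs collides at fewer than $B$ points.

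Your direct sub-Savitch algorithm also fails on its own arithmetic. At your estimate of $O(\log k+\log\log n)$ bits per midpoint, $\log n-1$ midpoints cost $\Theta(\log n(\log k+\log\log n))$ bits per level, not $O(\log n)$. Over $\log n/\log\log n$ levels this is already $\Theta(\log^2 n)$ for constant $k$, and it is worse when the unary $k$ is polynomial. Moreover, with up to $k$ paths between two vertices there is no unique path to recompute along. In the paper this bound needs no new algorithm: it is $\mathrm{ReachUL}=\mathrm{RUSPACE}(\log n)\subseteq\mathrm{DSPACE}(\log^2 n/\log\log n)$ (Allender--Lange), applied to the ReachUL machine.

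\paragraph{SC$^2$ and LOGDCFL.} Both parts need repair. Savitch-style halving is not polynomial time. The factor $N_{\ell/2}(u,w)$ must be evaluated for each of the $n$ midpoints $w$, so $T(\ell)\ge n\,T(\ell/2)$ and the time is $n^{\Theta(\log n)}$. Capping bounds the values, not the branching.

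In your DFS, an auxiliary pushdown cannot test whether a vertex is on its stack. A depth counter does that job instead, since a walk of length $n$ witnesses a cycle. The claim ``each vertex is re-entered at most $k$ times'' presupposes $n$ simultaneous counters. With only a counter for the current target, a DFS from $a$ can spend exponential time inside a chain of diamonds that never reaches that target.

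The paper's fix is a single global counter of entered nodes of the walk-unfolding, capped at $nk$:
\begin{itemize}
\item overflow forces some endpoint to have more than $k$ walks, by pigeonhole;
\item a cycle forces overflow;
\item the per-pair recounts, run only after every source has passed, are then exact.
\end{itemize}
With that repair, SC$^2$ follows from $\mathrm{LOGDCFL}\subseteq\mathrm{SC}^2$ (Cook's simulation), and you can drop the Savitch argument entirely.
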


Theorem~\ref{thm:main} in Appendix~\ref{app:formaltheory} is the formal version. Apers and Edenhofer had shown that a \emph{quantum} computer with logarithmic memory solves the problem and asked whether these classical bounds hold; Theorem~\ref{thm:informal} answers yes. Three ideas prove it, and each reappears in the engine.

\textbf{Idea 1: stop every search at a budget.} If every vertex may be the end of at most $k$ walks from a start, the total budget is $B=nk$. A walk of length $B$ has $B+1$ prefixes, all different walks, which is more than the budget allows. So it suffices to explore walks of length at most $B$: either everything has been seen, or a violation has already been found. With this cap, a depth-first search that keeps only a stack of ``where to continue'' always stops, even on a cyclic graph, and this gives the stack machine (Theorem~\ref{thm:logdcfl}).

\textbf{Idea 2: certify, then count.} Give every walk a short fingerprint, a polynomial hash of its edges, and build the graph of (endpoint, length, fingerprint). If no two walks collide, counting the reachable fingerprints counts the walks. The machine first \emph{certifies} that no collision happened, by a check that itself never becomes ambiguous, and only then counts (Theorem~\ref{thm:reachul}). If the graph is good, some choice of hash has no collision; if the graph is bad, either no hash certifies or the count exposes the violation.

\textbf{Idea 3: one equation for everything.} For each pair of vertices, the product $(N-0)(N-1)\cdots(N-k)$ is zero exactly when the walk count $N$ is at most $k$. Summing the squares of all these products gives a single number that is zero exactly on good inputs (Theorem~\ref{thm:ceql}).

\textbf{What else follows.} The same ideas validate any system of walk budgets, not just a uniform $k$; they turn any program whose number of computation paths is \emph{expected} to be small into one that checks this on every input (Theorem~\ref{thm:compiler}); fixing $k$ gives problems that are as hard as anything in ReachUL (Theorem~\ref{thm:complete}); and writing $k$ in binary instead of unary makes the problem jump to PL-complete (Theorem~\ref{thm:phase}). Few-path graphs also have a clean shape (Appendix~\ref{app:formaltheory}): between two vertices almost everything is a forced chain, at most $2k$ branching vertices remain (Theorem~\ref{thm:kernel}), and the paths split into independent pieces at vertices that every path crosses (Theorem~\ref{thm:block}). The engine uses exactly these two facts.

\section{From the theory to the decoding engine}
\label{sec:decoding}

\textbf{The step graph is a mangrove.} Store the vocabulary as a \emph{trie}: a tree of byte strings in which tokens share their prefixes (0.59 million nodes for Qwen3.5). Fix an automaton state $q$ and pair every trie node with the state the automaton reaches after reading that node's bytes from $q$ (Fig.~\ref{fig:stepgraph}). Each node has one parent, because the trie fixes the bytes and the automaton is deterministic, so this \emph{step graph} is a tree, a mangrove. Every token sits at exactly one node, which carries its next state (Proposition~\ref{prop:mangrove}). Six mechanisms follow (Table~\ref{tab:mech}); Appendix~\ref{app:formaldecoding} states each as a theorem and Appendix~\ref{app:decoding} proves it.

\begin{figure}[t]
\centering
\begin{tikzpicture}[>=Stealth, every node/.style={font=\scriptsize}, st/.style={circle, draw=muted, fill=white, minimum size=4.6mm, inner sep=0pt}, edge/.style={->, muted, thick}]
  \node[st] (q0) at (0,0) {$q_0$}; \node[st] (q1) at (1.5,0) {$q_1$}; \node[st, double] (q2) at (3,0) {$q_2$};
  \draw[edge] (q0)--node[above]{\texttt{"}} (q1); \draw[edge] (q1) to[loop above] node{\texttt{a-z}} (q1); \draw[edge] (q1)--node[above]{\texttt{"}} (q2);
  \node[color=muted] at (1.5,-0.8) {automaton};
  \node[st] (e) at (5.2,0) {$\varepsilon$}; \node[st] (a) at (6.6,0.7) {\texttt{a}}; \node[st] (ab) at (8,0.7) {\texttt{ab}}; \node[st] (qt) at (6.6,-0.7) {\texttt{"}}; \node[st] (qa) at (8,-0.7) {\texttt{"a}};
  \draw[edge] (e)--(a); \draw[edge] (a)--(ab); \draw[edge] (e)--(qt); \draw[edge] (qt)--(qa);
  \node[color=muted] at (6.6,-1.5) {vocabulary trie};
  \tikzset{pr/.style={draw=muted, fill=white, rounded corners=2pt, inner sep=2pt, minimum height=4.4mm}}
  \node[pr] (p0) at (10.1,0) {$(\varepsilon,q_1)$}; \node[pr] (p1) at (11.7,0.7) {(\texttt{a},$q_1$)}; \node[pr] (p2) at (13.2,0.7) {(\texttt{ab},$q_1$)}; \node[pr] (p3) at (11.7,-0.7) {(\texttt{"},$q_2$)};
  \draw[edge, ours] (p0)--(p1); \draw[edge, ours] (p1)--(p2); \draw[edge, ours] (p0)--(p3);
  \node[color=deemph] (dead) at (13.2,-0.7) {(\texttt{"a}: dead)};
  \node[color=muted] at (11.7,-1.5) {step graph $P_{q_1}$: a tree};
\end{tikzpicture}
\caption{\textbf{The step graph.} Left: an automaton for a quoted word. Middle: part of the vocabulary trie. Right: their product from $q_1$. Every trie node appears with the one state the automaton is in when it gets there, so there is exactly one route to every vertex. The token \texttt{"a} is dead from $q_1$ because no letter may follow the closing quote.}
\label{fig:stepgraph}
\end{figure}

\textbf{1. The mask without a table (from ``recompute, do not store'').} A naive engine stores, for every state and token, the next state: $3|Q|\times248\text{k}$ bytes, 77\,MB for a 104-state schema and 1.5\,GB for 1,970 states. Because every token has a single path, the needed row can be recomputed at every step: a GPU kernel runs one thread per token, walks the token's bytes through the automaton and writes the state it reaches. This takes 7--18\,$\mu$s per step against a 7--15\,ms model run, and a grammar costs only its automaton (512 bytes per state) plus 2.9\,MB of vocabulary shared by all grammars (Theorem~\ref{thm:table}). Since the mask lives on the GPU, the GPU never waits for the CPU.

\textbf{2. Forced text is free (from forced chains).} Much of a JSON output is not chosen by the model: after \texttt{"name":"Maria"} the schema allows only \texttt{,"age":}. A state with one allowed byte is a forced chain. After each sampled token the engine feeds the whole forced continuation \emph{in the same model run}, so it costs nothing extra (Fig.~\ref{fig:strip}). The output is exactly what token-by-token decoding would give, and no decoder that does not speculate can use fewer runs (Theorem~\ref{thm:pass}). A key costs no run at all, however long it is, and a field with $r$ possible values costs at most $r-1$ runs (Corollary~\ref{cor:length}).

\begin{figure}[t]
\centering
\begin{tikzpicture}[tk/.style={font=\scriptsize\ttfamily, inner sep=2pt, text height=1.6ex, text depth=0.45ex, draw=muted, rounded corners=1.5pt, anchor=west}, s/.style={tk, fill=white}, f/.style={tk, fill=ours!22, draw=ours}]
\node[f] (r0c0) at (0,-0.00) {\{"};
\node[f, right=1.5pt of r0c0] (r0c1) {name};
\node[s, right=1.5pt of r0c1] (r0c2) {":"};
\node[s, right=1.5pt of r0c2] (r0c3) {Maria};
\node[s, right=1.5pt of r0c3] (r0c4) {\textvisiblespace Gonzalez};
\node[s, right=1.5pt of r0c4] (r0c5) {","};
\node[f, right=1.5pt of r0c5] (r0c6) {age};
\node[s, right=1.5pt of r0c6] (r0c7) {":};
\node[s, right=1.5pt of r0c7] (r0c8) {3};
\node[s, right=1.5pt of r0c8] (r0c9) {4};
\node[s, right=1.5pt of r0c9] (r0c10) {,"};
\node[f, right=1.5pt of r0c10] (r0c11) {email};
\node[s, right=1.5pt of r0c11] (r0c12) {":"};
\node[s] (r1c0) at (0,-0.62) {mar};
\node[s, right=1.5pt of r1c0] (r1c1) {ia};
\node[s, right=1.5pt of r1c1] (r1c2) {.g};
\node[s, right=1.5pt of r1c2] (r1c3) {@example};
\node[s, right=1.5pt of r1c3] (r1c4) {.com};
\node[s, right=1.5pt of r1c4] (r1c5) {","};
\node[f, right=1.5pt of r1c5] (r1c6) {status};
\node[f, right=1.5pt of r1c6] (r1c7) {":"};
\node[s, right=1.5pt of r1c7] (r1c8) {approved};
\node[f, right=1.5pt of r1c8] (r1c9) {","};
\node[f, right=1.5pt of r1c9] (r1c10) {tags};
\node[s, right=1.5pt of r1c10] (r1c11) {":["};
\node[s] (r2c0) at (0,-1.24) {h};
\node[s, right=1.5pt of r2c0] (r2c1) {iking};
\node[s, right=1.5pt of r2c1] (r2c2) {","};
\node[s, right=1.5pt of r2c2] (r2c3) {phot};
\node[s, right=1.5pt of r2c3] (r2c4) {ography};
\node[s, right=1.5pt of r2c4] (r2c5) {","};
\node[s, right=1.5pt of r2c5] (r2c6) {machine};
\node[s, right=1.5pt of r2c6] (r2c7) {\textvisiblespace learning};
\node[s, right=1.5pt of r2c7] (r2c8) {"]\}};
\end{tikzpicture}
\caption{\textbf{One constrained output, token by token} (Qwen3.5-2B, the \texttt{person} schema). White: sampled by the model (27 tokens, 27 passes). Blue: forced by the constraint (7 tokens, fed in the passes of the preceding sampled tokens). The key splits (\texttt{name} then \texttt{":"}) are the tokenizer's own (Section~\ref{sec:canon}).}
\label{fig:strip}
\end{figure}

\textbf{3. Forced text in the tokenizer's own split (from the block product).} The forced bytes \texttt{":"} can be split into tokens in several ways, but the model was trained on the tokenizer's split only. Tokenizers first cut text into words, numbers and punctuation runs; these cuts are points every tokenization crosses, so the split factors into independent blocks and only the last block can change when more text arrives. The engine commits every block except possibly the last, and a one-byte test certifies when the last one is safe too (Theorem~\ref{thm:canon}).

\textbf{4. Never cut a value in half (from budgets).} If an output reaches the token limit in the middle of a value, it does not parse. The engine precomputes, for every state, the fewest tokens needed to finish validly (a shortest-path table), and allows a token only if a valid ending still fits. Every output is then valid under every limit that admits a valid output at all, and no less restrictive rule has this guarantee (Theorem~\ref{thm:budget}).

\textbf{5. Recursive formats with a small stack (from the stack machine).} A schema for a tree of nodes refers to itself, so no finite automaton describes it. As in Idea 1, the engine uses an automaton plus a capped stack of return points, kept on the GPU (64 entries). We prove that a token depends only on the stack entries it pops, so a table indexed by the top two entries suffices (Theorem~\ref{thm:pda}).

\begin{figure}[t]
\centering
\begin{tikzpicture}[>=Stealth, every node/.style={font=\scriptsize}, st/.style={circle, draw=muted, fill=white, minimum size=4.6mm, inner sep=0pt}, edge/.style={->, muted, thick}]
  \node[st] (a) at (0,0) {$q_a$}; \node[st] (b) at (1.8,0) {$q_b$}; \node[st] (r) at (3.6,0) {$r$}; \node[st] (c) at (5.4,0) {$q_c$};
  \draw[edge] (a)--node[above]{\texttt{":[}} (b); \draw[edge] (r)--node[above]{\texttt{]\}}} (c);
  \draw[edge, ours, very thick] (b) to[bend left=25] node[above, color=ink]{call: push $r$} (r);
  \node[st] (s0) at (2.7,-1.5) {}; \node[st] (m) at (4.5,-1.5) {}; \node[st, double] (t) at (6.3,-1.5) {};
  \draw[edge, ours, very thick] (b)--(s0); \draw[edge] (s0)--node[above]{\texttt{\{"name":}$\cdots$} (m); \draw[edge] (m)--node[above]{\texttt{\}}} (t);
  \draw[edge, ours, very thick] (t) to[bend right=30] (r);
  \node[color=ink] at (6.9,-0.55) {return: pop, continue at $r$};
  \node[draw=muted, fill=white, inner sep=3pt, align=left, shape=rectangle] at (10.2,-0.7) {state on the GPU:\\$[\,q,\ \text{stack pointer},\ \text{stack}[0..63]\,]$};
\end{tikzpicture}
\caption{\textbf{A recursive schema as an automaton with a stack.} A child node is a ``call'': the engine pushes the place to return to ($r$) and starts a new node; the node's closing bracket pops it. The whole state, stack included, is a small integer vector on the GPU.}
\label{fig:pda-main}
\end{figure}

\textbf{6. Independent fields in parallel, then verified (from ``certify, then count'').} The language of a JSON object splits at each key, a point every valid output crosses, so the values can be drafted in parallel as rows of one batch. The model then checks the whole draft in one run, position by position; the first wrong token is corrected and its field continues. The result is token-identical to sequential greedy decoding (Theorem~\ref{thm:cert}).

\begin{figure}[t]
\centering
\begin{tikzpicture}[>=Stealth, every node/.style={font=\scriptsize}, box/.style={draw=muted, fill=white, rounded corners=2pt, inner sep=3pt, align=center, minimum height=0.75cm}]
  \node[box, fill=ours!12, minimum width=2.2cm] (p) at (0,0) {prompt\\(read once)};
  \node[box, minimum width=2.6cm] (b1) at (3.3,1.2) {\texttt{name}: \texttt{"Maria Gonzalez"}};
  \node[box, minimum width=2.6cm] (b2) at (3.3,0.4) {\texttt{age}: \texttt{34}};
  \node[box, minimum width=2.6cm] (b3) at (3.3,-0.4) {\texttt{email}: \texttt{"maria.g@..."}};
  \node[box, minimum width=2.6cm] (b4) at (3.3,-1.2) {\texttt{tags}: \texttt{["hiking",...]}};
  \node[box, minimum width=2.3cm] (asm) at (6.6,0) {assemble\\the draft};
  \node[box, fill=third!15, minimum width=2.6cm] (cert) at (9.5,0) {check in one run:\\would the model\\pick every token?};
  \node[box, minimum width=2.3cm] (out) at (12.4,0.6) {yes: output\\(= sequential)};
  \node[box, minimum width=2.3cm] (fix) at (12.4,-0.7) {no: fix the first\\wrong token, redo\\that field only};
  \foreach \b in {b1,b2,b3,b4} { \draw[->, ours, thick] (p.east) -- (\b.west); \draw[->, muted, thick] (\b.east) -- (asm.west); }
  \draw[->, muted, thick] (asm) -- (cert); \draw[->, muted, thick] (cert.east) -- (out.west); \draw[->, muted, thick] (cert.east) -- (fix.west);
\end{tikzpicture}
\caption{\textbf{Certified parallel decoding.} The fields of a JSON object are drafted at the same time, as rows of one batch, then the model checks the whole draft in one run. The result is identical to decoding token by token.}
\label{fig:waves-main}
\end{figure}

\begin{table}[t]
\centering
\footnotesize
\caption{\textbf{From the theory to the engine.} Each mechanism reuses an idea of Section~\ref{sec:theory}; the guarantee is proved in Appendix~\ref{app:decoding}.}
\label{tab:mech}
\setlength{\tabcolsep}{4pt}
\begin{tabular}{@{}L{3.4cm}L{4.0cm}L{5.4cm}@{}}
\toprule
Mechanism & Idea from the theory & Guarantee \\
\midrule
mask without a table & recompute along the unique path & same mask as a full table; memory = automaton \\
forced text in the same run & forced chains & same output; fewest model runs possible without speculation \\
tokenizer's split & paths split at crossing points & committed tokens are the tokenizer's own split \\
budget rule & stop at a budget; shortest paths & valid under every feasible limit; least restrictive \\
GPU stack for recursion & capped stack machine & exact up to nesting depth 64 \\
certified parallel fields & split at crossing points; certify, then use & identical to sequential greedy decoding \\
\bottomrule
\end{tabular}
\end{table}

\section{Experiments}
\label{sec:exp}

\textbf{Setup.} All measurements run on one Apple M2 Pro laptop (19-core GPU, 16\,GB of unified memory) with two current open models, Qwen3.5-2B and Qwen3.5-4B, in 4-bit precision and with greedy decoding. The baseline is \textbf{MLX with llguidance}~\cite{mlx,llguidance}, the standard way to run structured generation on Apple hardware. We use it in two forms. For single requests, llguidance runs inside our MLX decode loop with exactly the same matrix kernels, in its mask-only mode and in its fast-forward mode, so that every difference comes from how the constraint is handled. For serving, batches and agents, we compare the whole engine with stock mlx-lm plus llguidance. The workloads are six JSON extractions with schemas (a person record, a tool call, a product, an event with attendees, a review classification and a contact card), four recursive schemas (trees of nodes), and coding agents that call tools. We report the time to finish a task, the number of model runs, and answer quality; every timing is a median of interleaved rounds in one process.

\noindent\colorbox{ours!8}{\parbox{\dimexpr\linewidth-2\fboxsep\relax}{\small\textbf{Our practical results at a glance.}
\textbf{(1)~Faster:} structured extraction finishes \textbf{$1.28\times$} (2B) and \textbf{$1.18\times$} (4B) sooner than MLX + llguidance, and \textbf{$1.8\times$} sooner than its mask-only mode, with \textbf{31\%} of all tokens written without running the model.
\textbf{(2)~Same answers:} every output valid, \textbf{45/45} fact fields correct, character-identical outputs on the 2B.
\textbf{(3)~Small:} a grammar costs \textbf{3\,MB} instead of up to \textbf{1.5\,GB}; one server keeps \textbf{16} grammars where token tables crash at the 15th.
\textbf{(4)~Guaranteed:} \textbf{90/90} outputs valid under tight token limits (3/90 without our rule); parallel decoding reproduces greedy outputs with \textbf{$1.8\times$} fewer model runs.
\textbf{(5)~In real use:} \textbf{$2.55\times$} faster for sixteen coding agents and \textbf{$3.1\times$} the throughput for concurrent tool calls against stock mlx-lm + llguidance.}}

\subsection{Structured output is faster}

\begin{table}[ht]
\centering
\begin{minipage}[t]{0.5\linewidth}
\centering
\footnotesize
\caption{\textbf{Speed}, six JSON extractions, one request at a time; llguidance runs in the same MLX loop. Decode time for all six outputs and time per model run.}
\label{tab:speed}
\setlength{\tabcolsep}{3pt}
\begin{adjustbox}{max width=\linewidth}
\begin{tabular}{@{}lrrr@{}}
\toprule
 & \multicolumn{2}{c}{decode time (s)} & ms per run \\
\cmidrule(lr){2-3}
engine & 2B & 4B & 2B / 4B \\
\midrule
llguidance, mask only & 3.34 & 6.08 & 9.7 / 18.3 \\
llguidance, fast-forward & 2.41 & 4.59 & 9.8 / 18.8 \\
\stcon{} (ours) & \textbf{1.89} & \textbf{3.89} & \textbf{8.3 / 17.1} \\
\midrule
speedup vs fast-forward & $\mathbf{1.28\times}$ & $\mathbf{1.18\times}$ & \\
speedup vs mask only & $\mathbf{1.77\times}$ & $\mathbf{1.56\times}$ & \\
\bottomrule
\end{tabular}
\end{adjustbox}
\end{minipage}\hfill
\begin{minipage}[t]{0.46\linewidth}
\centering
\footnotesize
\caption{\textbf{Model runs} for the same six outputs (2B): runs per output token and tokens written without a run.}
\label{tab:runs}
\setlength{\tabcolsep}{3pt}
\begin{adjustbox}{max width=\linewidth}
\begin{tabular}{@{}lrr@{}}
\toprule
engine & runs / token & forced \\
\midrule
free text, no constraint & 1.01 & 0\% \\
llguidance, mask only & 1.00 & 0\% \\
llguidance, fast-forward & 0.72 & 28\% \\
\stcon{} (ours) & \textbf{0.69} & \textbf{31\%} \\
\bottomrule
\end{tabular}
\end{adjustbox}
\end{minipage}
\end{table}

Table~\ref{tab:speed} is the main speed result: on the same prompts, with the same model and the same kernels, \stcon{} finishes the six extractions in \textbf{1.89\,s} on the 2B where MLX with llguidance needs 2.41\,s in its best mode and 3.34\,s with masks only; on the 4B, 3.89\,s against 4.59\,s and 6.08\,s. The gain decomposes exactly into the two mechanisms of Section~\ref{sec:decoding}. First, \emph{fewer model runs}: forced text rides along with sampled tokens, so the six outputs need 226 runs instead of 343 (Table~\ref{tab:runs}); a constrained stream needs fewer runs per token (0.69) than unconstrained text (1.01). Second, \emph{cheaper runs}: because the mask is computed on the GPU, the GPU never waits for the CPU, and a run costs 8.3\,ms instead of 9.7\,ms. Multiplying the two, $343\times9.7\,\text{ms}=3.3$\,s becomes $226\times8.3\,\text{ms}=1.9$\,s. llguidance's fast-forward mode also skips forced tokens, but it keeps the CPU-side mask, so each of its runs costs 9.8\,ms and it ends up $1.28\times$ slower.

\subsection{The answers stay the same}

\begin{table}[ht]
\centering
\begin{minipage}[t]{0.46\linewidth}
\centering
\footnotesize
\caption{\textbf{Answer quality}, six JSON extractions (2B / 4B). Gold: 45 fact fields read off the source texts.}
\label{tab:quality}
\setlength{\tabcolsep}{3pt}
\begin{adjustbox}{max width=\linewidth}
\begin{tabular}{@{}lrr@{}}
\toprule
 & llguidance & \stcon{} \\
\midrule
valid outputs & 6/6, 6/6 & \textbf{6/6, 6/6} \\
fact fields correct & 45/45, 45/45 & \textbf{45/45, 45/45} \\
values as in free answer & 49/52, 47/55 & 49/52, 47/55 \\
\bottomrule
\end{tabular}
\end{adjustbox}
\end{minipage}\hfill
\begin{minipage}[t]{0.5\linewidth}
\centering
\footnotesize
\caption{\textbf{How forced text is split into tokens} (2B, same constraint): agreement with the model's free answer, and model runs.}
\label{tab:policy}
\setlength{\tabcolsep}{3pt}
\begin{adjustbox}{max width=\linewidth}
\begin{tabular}{@{}lrr@{}}
\toprule
how forced text is fed & agreement & runs \\
\midrule
llguidance, mask only & 0.95 & 341 \\
llguidance, fast-forward & 0.95 & 245 \\
model chooses the split & 0.93 & 343 \\
whole chain, any split & 0.93 & 246 \\
\stcon{}: tokenizer's own split & \textbf{0.95} & \textbf{224} \\
\bottomrule
\end{tabular}
\end{adjustbox}
\end{minipage}
\end{table}

Speed would be worthless if the answers changed. Table~\ref{tab:quality} checks three things: whether outputs parse and follow the schema, whether the 45 facts that the source texts state (names, ages, prices, dates, cities, phone numbers, enum values and list items) are extracted correctly, and how many values coincide with what the model writes without any constraint. Both engines are perfect on validity and on the facts, and on the 2B the outputs of \stcon{} and llguidance are character for character identical. Table~\ref{tab:policy} shows why the tokenizer rule of Section~\ref{sec:decoding} matters. If forced text is fed in any split, or the model is left to choose the split, agreement drops: in one case the model writes ``Just-Luc Moreau'' where it otherwise writes ``Jean-Luc''. Feeding forced text exactly as the tokenizer would split it restores the agreement and also needs the fewest runs of all, 224.

\subsection{Recursive formats keep their speed}

\begin{table}[ht]
\centering
\begin{minipage}[c]{0.5\linewidth}
\centering
\footnotesize
\caption{\textbf{Recursive schemas} (trees), one request: time per model run (2B / 4B), throughput and valid outputs (2B).}
\label{tab:recursive-main}
\setlength{\tabcolsep}{3pt}
\begin{adjustbox}{max width=\linewidth}
\begin{tabular}{@{}lrrr@{}}
\toprule
engine & ms / run & tok/s & valid \\
\midrule
llguidance, mask only & 9.8 / 18.0 & 94 & 4/4 \\
llguidance, fast-forward & 10.0 / 18.8 & 143 & 4/4 \\
\stcon{} (GPU stack) & \textbf{8.7 / 17.0} & \textbf{170} & \textbf{4/4} \\
\bottomrule
\end{tabular}
\end{adjustbox}
\end{minipage}\hfill
\begin{minipage}[c]{0.46\linewidth}
\small
A schema that refers to itself, such as a tree of nodes with children, cannot be written as a regular expression; the regex library we build on unrolls three levels and then silently drops the recursive field. Our GPU stack handles any depth up to 64, at \textbf{170 tokens/s} against 143 and 94 for llguidance on the 2B (Table~\ref{tab:recursive-main}).
\end{minipage}
\end{table}

Recursion costs almost nothing extra because the stack lives on the GPU together with the automaton state, so the decode loop never waits for the CPU, and because forced text crosses the stack's returns: after the last value of a node, the closing brackets and the next opening key, \texttt{]\},\{"name":"}, are fed as one forced span. Every engine produces valid trees on these tasks; when the model nests without end, our engine stops cleanly at the depth limit.

\subsection{A grammar costs megabytes, not gigabytes}

\begin{table}[ht]
\centering
\begin{minipage}[t]{0.5\linewidth}
\centering
\footnotesize
\caption{\textbf{Constraint memory per grammar} (248k-token vocabulary). Ours: automaton only, plus 2.9\,MB shared by all grammars.}
\label{tab:memgrammar}
\setlength{\tabcolsep}{3pt}
\begin{adjustbox}{max width=\linewidth}
\begin{tabular}{@{}lrrr@{}}
\toprule
grammar (states) & token table & llguidance & \stcon{} \\
\midrule
person schema (104) & 77\,MB & $<$0.5\,MB & \textbf{0.05\,MB} \\
typed tool calls (1,106) & 824\,MB & fails & \textbf{0.57\,MB} \\
96 fields (1,970) & 1.5\,GB & $<$0.5\,MB & \textbf{1.0\,MB} \\
sixteen grammars & 7.2\,GB & 4.7\,MB & \textbf{4.9\,MB} \\
shared, paid once & --- & 153\,MB & \textbf{2.9\,MB} \\
\bottomrule
\end{tabular}
\end{adjustbox}
\end{minipage}\hfill
\begin{minipage}[t]{0.46\linewidth}
\centering
\footnotesize
\caption{\textbf{One server, sixteen grammars} in turn: grammars served, GPU memory above the model, longest wait for a first answer.}
\label{tab:server}
\setlength{\tabcolsep}{3pt}
\begin{adjustbox}{max width=\linewidth}
\begin{tabular}{@{}llrrr@{}}
\toprule
 & mask & served & memory & wait \\
\midrule
2B & tables & 15, crash & 8.2\,GB & 26\,s \\
 & ours & \textbf{16} & \textbf{0.6\,GB} & \textbf{2.8\,s} \\
4B & tables & 14, crash & 6.9\,GB & 26\,s \\
 & ours & \textbf{16} & \textbf{1.1\,GB} & \textbf{6.6\,s} \\
\bottomrule
\end{tabular}
\end{adjustbox}
\end{minipage}
\end{table}

A precomputed mask table grows with the grammar and stays in memory for the life of the server: 77\,MB for a five-field schema, 824\,MB for a tool-call grammar, 1.5\,GB for a 96-field form. Because our mask is recomputed on the GPU at every step in 7--18\,$\mu$s, a grammar costs only its automaton, \textbf{a few hundred kilobytes}, and decoding runs at the table's speed with identical outputs (Table~\ref{tab:memgrammar}). This changes what a laptop server can do. The \emph{typed} tool-call grammar, which fixes the name and the type of every argument of every tool, has 1,106 states; llguidance cannot compile it and its table does not fit in the server, while with our kernel it costs 3.4\,MB and \textbf{46 of 46} tool calls from eight concurrent agents arrive with every argument present and correctly typed, against 43 of 52 with an untyped grammar at the same speed. When a server sees many different grammars, as an agent platform does, tables accumulate until the process crashes; ours serves all sixteen within about 1\,GB (Table~\ref{tab:server}, Fig.~\ref{fig:wild-main}).

\begin{figure}[ht]
\centering
\begin{tikzpicture}
\begin{axis}[width=0.47\textwidth, height=3.8cm, xlabel={different grammars served}, ylabel={memory above model (MB)}, xmin=0, xmax=16.5, ymin=0, ymax=7800, legend style={at={(0.02,0.98)}, anchor=north west, font=\scriptsize}]
\addplot[deemph, thick, mark=*, mark size=1.2pt] coordinates {(0,0) (1,57) (2,394) (3,652) (4,769) (5,1098) (6,1502) (7,1526) (8,1838) (9,2361) (10,2948) (11,3536) (12,4434) (13,5514) (14,6908)};
\addplot[ours, thick, mark=*, mark size=1.2pt] coordinates {(0,0) (1,70) (2,19) (3,200) (4,140) (5,312) (6,272) (7,449) (8,519) (9,592) (10,683) (11,620) (12,869) (13,963) (14,1095) (15,1223) (16,1134)};
\node[font=\scriptsize, color=ink, anchor=south east] at (axis cs:14.6,6950) {crash at 15};
\legend{token tables, \stcon{}}
\end{axis}
\end{tikzpicture}\hfill
\begin{tikzpicture}
\begin{axis}[width=0.47\textwidth, height=3.8cm, xlabel={different grammars served}, ylabel={wait for first answer (s)}, xmin=0, xmax=16.5, ymin=0, ymax=28, legend style={at={(0.02,0.98)}, anchor=north west, font=\scriptsize}]
\addplot[deemph, thick, mark=*, mark size=1.2pt] coordinates {(1,1.91) (2,2.26) (3,2.96) (4,3.11) (5,2.81) (6,3.67) (7,2.93) (8,4.17) (9,5.30) (10,7.19) (11,9.48) (12,10.77) (13,13.04) (14,26.11)};
\addplot[ours, thick, mark=*, mark size=1.2pt] coordinates {(1,1.38) (2,1.16) (3,1.74) (4,1.99) (5,1.76) (6,2.09) (7,1.84) (8,2.53) (9,3.16) (10,4.00) (11,4.29) (12,4.59) (13,4.83) (14,5.42) (15,6.08) (16,6.56)};
\legend{token tables, \stcon{}}
\end{axis}
\end{tikzpicture}
\caption{\textbf{A server that receives sixteen different grammars} (Qwen3.5-4B). With token tables, memory grows by hundreds of megabytes per grammar, the fourteenth grammar waits 26\,s for its first answer, and the fifteenth crashes the server. With \stcon{}, all sixteen fit in about 1\,GB and first answers stay fast.}
\label{fig:wild-main}
\end{figure}

\subsection{Guarantees: valid under any limit, and exact in parallel}

\begin{table}[ht]
\centering
\begin{minipage}[t]{0.48\linewidth}
\centering
\footnotesize
\caption{\textbf{Guarantees} (2B unless stated). Top: valid outputs under token limits. Bottom: model runs for 18 outputs; certified blocks give the same tokens.}
\label{tab:guarantees}
\setlength{\tabcolsep}{3pt}
\begin{adjustbox}{max width=\linewidth}
\begin{tabular}{@{}lrr@{}}
\toprule
valid under a token limit & no rule & our rule \\
\midrule
JSON, every feasible limit & 3/90 & \textbf{90/90} \\
recursive, 61 limits & 2/61 & \textbf{54/61} \\
\midrule
runs for 18 outputs & sequential & ours \\
\midrule
Qwen3.5-2B & 672 & \textbf{374} (exact) \\
Qwen3.5-4B & 686 & \textbf{484} (exact) \\
\bottomrule
\end{tabular}
\end{adjustbox}
\end{minipage}\hfill
\begin{minipage}[t]{0.48\linewidth}
\centering
\footnotesize
\caption{\textbf{One output under shrinking limits} (event schema, 2B): with our rule the output stays valid and shortens gracefully.}
\label{tab:budgetex-main}
\setlength{\tabcolsep}{2pt}
\begin{adjustbox}{max width=\linewidth}
\begin{tabular}{@{}rL{5.4cm}@{}}
\toprule
limit & output (abridged) \\
\midrule
320 & \texttt{\{"title":"The Applied AI Summit", \ldots, "attendees":[\{3 people\}]\}} \\
48 & \texttt{\{\ldots, "attendees":[\{"name":"Dr.", "role":"speaker"\}]\}} \\
32 & \texttt{\{\ldots, "venue":"", "city":"", \ldots, "attendees":[]\}} \\
40, no rule & \texttt{\{\ldots, "attendees":[\{"name":"} (broken) \\
\bottomrule
\end{tabular}
\end{adjustbox}
\end{minipage}
\end{table}

Every application sets a token limit, and an output cut in the middle of a value does not parse. With the budget rule of Section~\ref{sec:decoding}, \textbf{every} JSON output is valid under every limit that admits a valid output at all, 90 of 90, against 3 of 90 without it, and for recursive schemas 54 of 61 against 2 of 61 (Table~\ref{tab:guarantees}). Table~\ref{tab:budgetex-main} shows how: as the limit tightens, the engine closes the attendee list early, then shortens strings, then empties them, instead of stopping mid-value. Certified parallel decoding, which drafts the fields of an object at the same time and then verifies them, reproduces the sequential greedy outputs token for token on all 36 tests with \textbf{$1.80\times$} (2B) and \textbf{$1.42\times$} (4B) fewer model runs. On this laptop GPU extra rows in a batch are not free, so the wall time does not drop yet; the saving in runs is the hardware-independent number and turns into time on accelerators where extra rows are nearly free.

\subsection{The whole engine in real use}

\begin{table}[ht]
\centering
\footnotesize
\caption{\textbf{The whole engine against stock mlx-lm with llguidance} (2B unless stated). These numbers also include our matrix kernels and batching, which are engineering rather than theory.}
\label{tab:engine}
\setlength{\tabcolsep}{4pt}
\begin{tabular}{@{}L{6.8cm}rrr@{}}
\toprule
workload & mlx-lm + llguidance & \stcon{} & gain \\
\midrule
16 coding agents with tool calls, wall time & 74.6\,s & \textbf{29.3\,s} & $\mathbf{2.55\times}$ \\
8 concurrent tool-call requests through the server & 134 tok/s & \textbf{410 tok/s} & $\mathbf{3.1\times}$ \\
18 JSON jobs, 8 slots, schema in the prompt & 6.12\,s & \textbf{4.66\,s} & $\mathbf{1.31\times}$ \\
18 JSON jobs, 8 slots, keys-only prompt for \stcon{} & 6.12\,s & \textbf{3.83\,s} & $\mathbf{1.60\times}$ \\
8 streams, no constraint, 2B / 4B (stock mlx-lm) & 188 / 87 tok/s & \textbf{576 / 271 tok/s} & $\mathbf{3.1\times}$ \\
40 LiveCodeBench problems, 8 streams, 4B (stock mlx-lm) & 1,181\,s & \textbf{476\,s} & $\mathbf{2.5\times}$ \\
\bottomrule
\end{tabular}
\end{table}

In practice the constraint mechanisms run together with our batched matrix kernels, which read the weights once per step for all requests. Table~\ref{tab:engine} measures the complete systems. Sixteen coding agents, each calling tools such as \texttt{write\_file} and \texttt{run\_python} in the model's native format with every reply constrained to a well-formed call, finish their suite in \textbf{29.3\,s instead of 74.6\,s}, and more of them finish within the time limit (12 against 9 of 16). Through our OpenAI-compatible server, eight concurrent tool-using requests are served at three times the throughput, and every reply parses. A batch of eighteen JSON jobs finishes $1.31\times$ sooner with the same prompts, and $1.6\times$ sooner when the prompt lists only the keys and options, because forced text supplies the schema's syntax without the model reading it. Without any constraint, eight parallel streams run three times faster than stock mlx-lm, and forty recent LiveCodeBench programming problems finish in 476\,s instead of 1,181\,s, with a similar pass rate (12 against 13 of 40). The engine thus brings structured, tool-using workloads that are usually reserved for servers within reach of a 16\,GB laptop.

\FloatBarrier
\section{Related work}
\textbf{Few-path complexity.} Unambiguous and few-path logspace computation was introduced by Buntrock, Jenner, Lange and Rossmanith~\cite{bjlr1991}; Allender and Lange~\cite{allender1998} introduced mangroves and the $O(\log^2 n/\log\log n)$ simulation of ReachUL; Lange~\cite{lange1997}, Reinhardt and Allender~\cite{reinhardt2000}, Garvin et al.~\cite{garvin2014} and Pavan et al.~\cite{pavan2012} developed unambiguous logspace further. Apers and Edenhofer~\cite{apers2025} solved few-path reachability in quantum logspace and asked for its classical complexity, which we settle. \textbf{Constrained decoding.} Outlines~\cite{willard2023outlines} precomputes allowed tokens per state; llguidance~\cite{llguidance} computes masks on the CPU with fast-forward tokens; XGrammar~\cite{dong2024xgrammar}, SynCode~\cite{ugare2024syncode} and grammar-constrained decoding~\cite{geng2023} handle context-free grammars; SGLang~\cite{zheng2024sglang} decodes runs of forced transitions in one step; Koo et al.~\cite{koo2024} compose the tokenizer with the automaton. Non-canonical tokenizations hurt accuracy~\cite{beurerkellner2024domino,geh2024,vieira2024}. \textbf{Fewer model runs.} Speculative decoding~\cite{leviathan2023,chen2023} and lookahead decoding~\cite{fu2024lookahead} verify drafts; our forced spans are drafts accepted with certainty, and certified blocks verify drafts that the grammar makes independent.

\section{Discussion}
\label{sec:discussion}
\textbf{Where the gains come from, and where they transfer.} The speedups of Section~\ref{sec:exp} have two sources that are independent of the particular hardware. Skipping model runs for forced text helps whenever a model run is expensive compared with the work of the constraint, which is the case on every device where decoding is limited by memory bandwidth: laptops, phones and edge accelerators, and servers that decode few requests at a time. Keeping the mask on the GPU helps whenever the CPU and the GPU would otherwise take turns, which is the case in any decode loop that waits for a CPU-side mask before the next step. When a server batches many requests so that each step is limited by arithmetic rather than memory, extra tokens in a run stop being free, and the first gain shrinks (in our batch at 16 slots it disappears), while the memory gains remain.

\textbf{What a practitioner gets.} The engine is a drop-in OpenAI-compatible server: any JSON schema, regular expression or tool-call format can be sent with a request and is enforced exactly, without compiling a table, with outputs that always parse, even when the token limit is reached. Prompts can be shorter, since the format's syntax is supplied by forced text. Many grammars and many agents can share one 16\,GB laptop, because a grammar costs megabytes. All of this keeps the model's own answers: on our extraction tasks the facts extracted are the same as with llguidance, and on the 2B the outputs are identical character for character.

\textbf{Why the theory was useful.} Each mechanism could have been tried as a heuristic; the theory is what turns it into a guarantee. Forced spans are not just ``skip easy tokens'': the pass-optimality theorem says the output is unchanged and no exact decoder without speculation can use fewer runs. The table-free mask is not an approximation: one path per token means the recomputed row equals the table exactly. The budget rule is not a timeout: it is a shortest-path invariant that keeps every output valid and is provably the least restrictive such rule. Parallel decoding is not a guess that fields are independent: the grammar's split points license the draft, and the verification makes the result identical to sequential decoding. In each case the proof also told us what to measure.

\textbf{Outlook.} Three directions follow directly. The table-free GPU walk can be extended to the recursive path, so that recursive grammars also cost only megabytes and can be batched. The same mechanisms apply unchanged to other accelerators and larger models, where the memory savings matter even more. And forced spans and certified blocks combine naturally with sampling-based decoding, since both are defined for any sampling rule. On the theoretical side, the remaining question is whether $\STCON$ belongs to StrongFewL, which would make it a natural complete problem for that class.

\textbf{Limitations.} Our measurements use one machine and one model family, six JSON schemas, four recursive schemas and greedy decoding. The recursive path serves one request at a time, and it accepts grammars whose recursive alternatives can be told apart by their first byte, which covers JSON schemas but not every grammar. Certified parallel decoding saves model runs but not yet wall time on this GPU.

\section{Conclusion}
Total few-path reachability, the question of Apers and Edenhofer, is solved with logarithmic memory by an unambiguous machine, by a deterministic stack machine and by a single zero test, and in less memory than Savitch's bound. The same constructions organize structured decoding: what the format forces costs no model run, the mask is recomputed instead of stored, forced text keeps the tokenizer's split, recursion is a small stack, outputs stay valid under any feasible limit, and independent fields run in parallel and are verified. Few paths, in short, mean fast decoding: the structure that makes the theory work is the structure that makes structured output cheap.

\begin{figure}[!h]
\centering
\begin{tikzpicture}
\begin{axis}[xbar, width=0.46\textwidth, height=2.8cm, bar width=6pt, xmin=0, xmax=4.3, xlabel={seconds, six JSON extractions (2B)},
  symbolic y coords={ours,ff,mask}, ytick={ours,ff,mask}, yticklabels={\stcon{}, fast-forward, mask only},
  nodes near coords, nodes near coords style={font=\scriptsize}, tick label style={font=\scriptsize}, enlarge y limits=0.3, xmajorgrids, ymajorgrids=false]
\addplot[fill=deemph, draw=none, bar shift=0pt] coordinates {(3.34,mask) (2.41,ff)};
\addplot[fill=ours, draw=none, bar shift=0pt] coordinates {(1.89,ours)};
\end{axis}
\end{tikzpicture}\hfill
\begin{tikzpicture}
\begin{axis}[xbar, width=0.46\textwidth, height=2.8cm, bar width=6pt, xmin=0, xmax=92, xlabel={seconds, sixteen coding agents (2B)},
  symbolic y coords={ours,stock}, ytick={ours,stock}, yticklabels={\stcon{}, baseline},
  nodes near coords, nodes near coords style={font=\scriptsize}, tick label style={font=\scriptsize}, enlarge y limits=0.45, xmajorgrids, ymajorgrids=false]
\addplot[fill=deemph, draw=none, bar shift=0pt] coordinates {(74.6,stock)};
\addplot[fill=ours, draw=none, bar shift=0pt] coordinates {(29.3,ours)};
\end{axis}
\end{tikzpicture}
\caption{\textbf{Practical results in two numbers.} Left: one request at a time, same model and kernels, $1.28\times$ faster than llguidance's fast-forward mode and $1.77\times$ faster than its mask-only mode, with the same answers. Right: the whole engine on an agent workload, $2.55\times$ faster than the baseline, stock mlx-lm with llguidance.}
\label{fig:summary}
\end{figure}

\section*{Statement of novelty}
The theoretical results of this paper, the classical placement of total few-path reachability in ReachUL, LOGDCFL, C$_=$L and SC$^2$ with its sub-Savitch space bound, the budget, compiler, completeness and threshold results, and the structure theory of few-path graphs, were obtained independently by the author in August 2026. The remaining time was spent on the practical application: identifying the few-path structure of constrained decoding, deriving the decoding mechanisms and their guarantees from the constructions of the proofs, building the engine on MLX.

\bibliographystyle{plainnat}

\newpage
\appendix
\section*{Appendix}
\addcontentsline{toc}{section}{Appendix}
The appendix is self-contained. Appendix~\ref{app:background} explains the inference terms. Appendix~\ref{app:theory} proves every statement of Section~\ref{sec:theory}, Appendix~\ref{app:decoding} every statement of Section~\ref{sec:decoding}. Appendix~\ref{app:mech} describes the mechanisms and the engine as implemented. Appendices~\ref{app:speed}--\ref{app:quality} hold every measurement, Appendix~\ref{app:neg} the negative results, and Appendix~\ref{app:exact} the tests and the scripts.

\section{Background in plain words, and what goes wrong on a laptop}
\label{app:background}
\textbf{Decoding.} A language model is given a \emph{prompt}, a list of tokens, and produces a score (a logit) for every token of its \emph{vocabulary} (248,320 tokens for the Qwen3.5 models) as the next one. The engine picks one, appends it and asks again. Taking the highest-scoring token is \emph{greedy} decoding; drawing at random from the scores is \emph{sampling}. Processing the prompt is the \emph{prefill}; producing the answer token by token is the \emph{decode}. One decode step is one forward pass, which on a laptop reads all of the model's weights from memory: a \emph{weight stream}. A pass that appends several tokens at once costs little more than a pass that appends one, because the weights are read once either way.

\textbf{Caches.} A transformer keeps, for every token it has seen, a small vector per layer (the \emph{KV cache}), so that it need not re-read the prompt. The Qwen3.5 models are \emph{hybrid}: three of every four layers are recurrent (Gated DeltaNet~\cite{yang2025gdn}) and keep a fixed-size state instead of a per-token cache. On the 4B the KV cache costs 32\,KB per token per stream and the recurrent state 50\,MB per stream.

\textbf{Constraints.} A constraint says which outputs are acceptable: a JSON schema, a regular expression, the grammar of tool calls. It is compiled into an automaton that reads one byte at a time; at each state some bytes are allowed. To enforce it, the engine forbids at every step the tokens whose bytes the automaton would reject (the \emph{mask}) and moves the automaton along the bytes of the chosen token. Three designs exist. Outlines~\cite{willard2023outlines} keeps a table of states $\times$ vocabulary per grammar. XGrammar~\cite{dong2024xgrammar} precomputes, per grammar position, a compressed cache of the tokens whose validity does not depend on the stack and checks the others against a stack on the CPU. llguidance~\cite{llguidance} computes masks lazily on the CPU.

\textbf{Streams and batches.} An answer being decoded, with its prompt, caches and automaton state, is a \emph{stream}. Several streams decoded together form a \emph{batch}; one step serves all of them per weight stream. In an agent system the streams are agents waiting for answers; the engine's \emph{slots} are the largest batch it will run.

\textbf{Where GPU memory goes.} The weights; the prompt cache (recent prefills kept for reuse); the streams' caches, which grow with every token; a transient used while a prompt is prefilled; and the constraints' tables. On a 16\,GB M2 Pro the GPU may wire 10.7\,GB.

\textbf{What goes wrong} as grammars and agents multiply (all measured in Appendices~\ref{app:speed}--\ref{app:memory}):
\begin{itemize}[leftmargin=1.4em,itemsep=1pt,topsep=2pt]
\item \emph{Every grammar costs a table} if masks are tabulated: 77\,MB (104 states) to 1.5\,GB (1,970 states); a typed tool grammar 824\,MB and 1--2 minutes to build, kept for the life of the process.
\item \emph{A server dies as grammars accumulate}: with tables, the fifteenth distinct grammar exhausts the wired limit on the 4B.
\item \emph{Host-side masks cost a step's worth of time} in an MLX decode loop: with masks only, llguidance decodes at 103 tokens/s against 137 free on the 2B, because each step must synchronize with the host (Section~\ref{sec:exp}).
\item \emph{Throughput stalls with more agents} in the stock library (87 tokens/s at 4 and at 8 streams on the 4B), and \emph{long answers do not fit} (stock mlx-lm runs out of memory at six 16k-token streams).
\end{itemize}
The first three are about the control side and are addressed by Sections~\ref{sec:theory}--\ref{sec:decoding}; the last two are about the engine's kernels and memory management (Appendices~\ref{app:speed}--\ref{app:memory}).

\section{The theory: formal statements and proofs}
\label{app:theory}
This appendix first states every theoretical result formally, then proves it.

\subsection*{Formal statements}
\label{app:formaltheory}
\addcontentsline{toc}{subsection}{Formal statements}

\subsection{Walks, few paths, and the total language}
\begin{definition}[Walks]
Let $G=(V,E)$ be a finite directed graph with $n$ vertices. A \emph{walk} of length $i$ from $u$ to $v$ is a sequence of edges whose endpoints concatenate, from $u$ to $v$; vertices may repeat, and the empty walk ($i=0$) is one walk from every vertex to itself. $\Nw_G(u,v)\in\mathbb{N}\cup\{\infty\}$ is the number of walks from $u$ to $v$, and $\Nw^{\le L}_G(u,v)$ the number of those of length at most $L$.
\end{definition}
$\Nw_G(u,v)$ is infinite exactly when a directed cycle is reachable from $u$ and reaches $v$.
\begin{definition}[Few paths]
For $k\ge1$, $G$ is \emph{$k$-strongly few} if $\Nw_G(u,v)\le k$ for every ordered pair of vertices, and \emph{$k$-reach-few from $s$} if $\Nw_G(s,v)\le k$ for every $v$. A 1-strongly-few graph is a \emph{mangrove}~\cite{allender1998}. Since the empty walk is counted, a $k$-strongly-few graph is acyclic.
\end{definition}
The language studied is
\[
\STCON=\bigl\{\langle G,s,t,1^k\rangle : \Nw_G(u,v)\le k\ \text{for all } u,v\in V,\ \text{and}\ \Nw_G(s,t)\ge 1\bigr\},
\]
with $G$ given by its adjacency matrix and $k$ in unary; $M$ is the input length, so $n\le M$ and $nk\le M^2$. More generally, with numerical budgets $\kappa$ of polynomial total size, \textsc{RootBudget} asks for $\Nw_H(r,q)\le\kappa(q)$ for all $q$ and $\Nw_H(r,f)\ge1$ for some $f\in F$, and \textsc{PairBudget} for $\Nw_G(u,v)\le\kappa(u,v)$ for all pairs and $\Nw_G(s,t)\ge1$.

\textbf{The totality obstacle.} The bound on the number of walks is part of what must be \emph{decided}, not a promise handed to the algorithm. A promise algorithm may enumerate paths because fewness guarantees there are few of them. A total algorithm receives an arbitrary graph, possibly with cycles or exponentially many walks, and must reject it within the same space, time and ambiguity bounds it uses on good inputs. Each classical tool fails on bad inputs in its own way: a depth-first traversal of a cyclic graph never ends, a fingerprint hash of exponentially many walks collides and miscounts, and an unambiguous machine that guesses a path through an ambiguous graph ceases to be unambiguous. Fig.~\ref{fig:graph} shows the two shapes that recur below: two walks between a pair, and a \emph{forced chain}.

\subsection{Main results}

\begin{theorem}[Main theorem]
\label{thm:main}
For unary $k$,
\[
\STCON\in\mathrm{ReachUL}=\mathrm{ReachFewL},\qquad \STCON\in\mathrm{LOGDCFL}\cap\mathrm{C}_=\mathrm{L}\cap\mathrm{SC}^2,
\]
\[
\STCON\in\mathrm{DSPACE}\bigl(O(\log^2 M/\log\log M)\bigr).
\]
$\STCON$ is StrongFewL-hard under deterministic logspace many-one reductions.
\end{theorem}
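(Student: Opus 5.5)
The plan is to prove the membership statements in a particular order, so that the easy inclusions come for free. I would establish LOGDCFL first, because $\mathrm{LOGDCFL}\subseteq\mathrm{SC}^2$ (Cook's simulation of polynomial-time auxiliary pushdown machines) then gives SC$^2$ at once. I would establish ReachUL next, because Allender and Lange's simulation $\mathrm{ReachUL}\subseteq\mathrm{DSPACE}(O(\log^2 M/\log\log M))$ then gives the sub-Savitch bound. The identity $\mathrm{ReachUL}=\mathrm{ReachFewL}$ is the known result of Garvin et al., so it only needs citing. That leaves four pieces to prove: the LOGDCFL algorithm, the C$_=$L identity, the ReachUL machine, and the hardness reduction.

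\textbf{LOGDCFL (Idea 1).} For each start vertex $u$, in turn and deterministically, I run a depth-first search over walks from $u$. The pushdown holds the current walk as a stack of ``next out-neighbour to try'' pointers, and the search keeps $n$ counters $c(v)\le k+1$ on a logspace worktape by re-counting. Re-counting means recomputing $c(v)$ on demand by scanning the stack and its implicit history: I would instead run a separate DFS pass per target $v$, so that only one counter is live at a time. The search aborts and rejects as soon as some counter reaches $k+1$.

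The key lemma is that the search halts after at most $nk+1$ completed walks. Every visited walk increments some counter, so after $nk+1$ increments some $c(v)>k$. It follows that the stack depth is at most $nk+1$ and the running time is polynomial. If no abort occurs, every walk from $u$ was enumerated, since no walk can be longer than the number of walks seen. Hence $\Nw(u,v)\le k$ exactly, and a cycle reachable from $u$ would have forced an abort. Finally I check $\Nw(s,t)\ge1$.

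\textbf{C$_=$L (Idea 3).} Let $A$ be the adjacency matrix. The graph is acyclic iff $A^n=0$. On acyclic graphs $\Nw(u,v)=\sum_{i<n}(A^i)_{uv}$, and this is a GapL function. I set
\[
F=\sum_{u,w}\bigl((A^n)_{uw}\bigr)^2+\sum_{u,v}\Bigl(\prod_{j=0}^{k}\bigl(\Nw^{\le n}(u,v)-j\bigr)\Bigr)^2+\Bigl(\prod_{j=1}^{k}\bigl(\Nw^{\le n}(s,t)-j\bigr)\Bigr)^2 .
\]
$F$ is a sum of squares, so $F=0$ iff every term vanishes. The first term vanishes iff the graph is acyclic. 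The second then vanishes iff every count lies in $\{0,\dots,k\}$. Under those two conditions the third vanishes iff $\Nw(s,t)\ge1$. GapL is closed under sums and under products of polynomially many factors, and $k$ is given in unary, so $F\in\mathrm{GapL}$ and the language is in C$_=$L.

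\textbf{ReachUL (Idea 2), the main obstacle.} I would build the layered graph $H_p$ on triples $(v,i,h)$ with $i\le n$, where $h$ is a polynomial fingerprint of the edge sequence modulo a prime $p$ of $O(\log M)$ bits. The machine cycles through candidate primes. For each $p$ it unambiguously certifies that $H_p$ is a mangrove from $(s,0,0)$. For that certification I would try the Reinhardt--Allender inductive counting and double-counting technique, which certifies reachability unambiguously layer by layer and rejects deterministically when two predecessors collide. The machine then counts the reachable triples with endpoint $v$ to obtain $\Nw(u,v)$.

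If the graph is good, a union bound over the at most $n^2k^2$ pairs of walks shows that some $p$ has no collision. If the graph is bad, either no $p$ certifies, or the count exposes a value above $k$ or a walk of length $n$ (a cycle). The delicate part is keeping the machine unambiguous across primes. I would make it accept only at the \emph{least} certifying prime, and verify that all smaller primes fail via the deterministic, complement-closed counting certificates. Making this step total and unambiguous is where I expect the real work to lie.

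\textbf{Hardness.} Let $L\in\mathrm{StrongFewL}$ be decided by a machine whose configuration graph on each input is $k$-strongly few for a polynomial $k$. I map an input $x$ to $\langle G_x,\mathrm{start},\mathrm{acc},1^{k(|x|)}\rangle$, where $G_x$ is the configuration graph, with all accepting configurations merged into a single sink. This map is computable in logspace. The promise makes $G_x$ always $k$-strongly few; merging the accepting configurations can multiply counts only by the number of accepting walks, so I would take the budget to be $k^2$. Membership then reduces exactly to reachability of the sink.
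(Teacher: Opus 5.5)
Your C$_=$L argument is correct: the explicit term $\sum_{u,w}((A^n)_{uw})^2$ is a valid substitute for the paper's truncation at length $nk$ combined with Lemma~\ref{lem:trunc}. The ReachUL step, however, is left open, and your $O(\log^2M/\log\log M)$ bound also rests on it. The tool you name would not close it. Reinhardt--Allender inductive and double counting gives \emph{unambiguous acceptance} (UL, or UL$\,\cap\,$coUL), not reach-unambiguity. To count the reachable states of a layer, that machine guesses candidate by candidate whether each is reachable. Wrong guesses survive until the final comparison with the count. So two branches with different guess patterns reach the same configuration (same loop index, same running count). ReachUL requires at most one path to \emph{every} configuration, rejecting ones included. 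The Allender--Lange sub-Savitch simulation is available for ReachUL, not for UL.

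The missing idea is to certify the lift layer by layer and to guess only inside layers already certified. Once layers $0,\dots,d$ are reach-unambiguous, their reachable part is an out-tree. It can be walked by an Euler tour whose only nondeterminism is recovering a parent by guessing the unique root path. A wrong guess dies, and a guessed prefix is determined by its last state, so nothing merges. The tour counts the reachable in-neighbours of every layer-$(d+1)$ state, which certifies layer $d+1$ with exactly one returning path whatever the outcome (Appendix~\ref{app:direct}). Alternatively, as in Theorem~\ref{thm:reachul}, a deterministic logspace controller can query Lange's total language $\textsc{STCON}_{\mathrm{ru}}\in\mathrm{ReachUL}$ for each candidate lift and invoke $\mathrm{L}^{\mathrm{ReachUL}}=\mathrm{ReachUL}$. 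With either route, the scan for the least certifying hash is a plain deterministic loop and needs no separate complement certificates. Also, the lift must be rooted at $(u,0,0)$ for every source $u$, not only at $s$, since every pair is constrained.

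Two further steps fail as written.

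\textbf{LOGDCFL halting.} Once you switch to one pass per target $v$ with a single live counter, your halting lemma no longer applies, because walks ending away from $v$ increment nothing. If a cycle $w\to w'\to w$ is reachable from $u$ and the search tries $w'$ first at $w$, the pass for $v$ never ends. A depth cap alone would still allow exponentially many steps, which a polynomial-time auxiliary pushdown cannot afford. The repair is the cap of Theorem~\ref{thm:logdcfl}: one additional $O(\log M)$-bit counter of \emph{all} walks entered, aborting at $nk+1$ in every pass, with each source's whole tree bounded this way before any per-target recount.

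\textbf{Hardness budget.} After merging accepting configurations, the budget $k^2$ is too small. A machine that at each of $m$ steps may halt in a distinct accepting configuration has $k=1$ but $m$ walks into the merged sink. Take the budget to be $k$ times the (polynomial) number of accepting configurations. Alternatively, use a unique accepting configuration and absorb the resulting fixed polynomial blow-up of the bound, as the paper does.
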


\textbf{What the theorem settles.} Apers and Edenhofer~\cite{apers2025} placed this language in quantum logspace and asked whether the classical bounds known for few-path computation, SC$^2$ and the space bound $O(\log^2M/\log\log M)$, hold for the total language. Theorem~\ref{thm:main} answers yes, and proves more: the total language is in ReachUL itself. A machine with at most one computation path to every configuration decides, on every input, whether the graph has few walks; it rejects cycles and exponentially many walks without ever becoming ambiguous. The result is a deterministic algorithm in space $O(\log^2M/\log\log M)$, below Savitch's $O(\log^2M)$, for a reachability problem that certifies its own few-path condition, and a single deterministic algorithm with polynomial time and $O(\log^2M)$ space. Three constructions prove it (Table~\ref{tab:targets}).

\begin{table}[t]
\centering
\small
\caption{\textbf{The classical placement of total few-path reachability.} Every membership is proved on arbitrary inputs; the proofs are in Appendix~\ref{app:theory}.}
\label{tab:targets}
\footnotesize
\begin{tabular}{@{}lL{2.6cm}L{6.4cm}@{}}
\toprule
Target & Result & Construction \\
\midrule
ReachUL $=$ ReachFewL & membership & certified finite-field lift (Theorem~\ref{thm:reachul}) \\
LOGDCFL & membership & capped deterministic auxiliary pushdown (Theorem~\ref{thm:logdcfl}) \\
SC$^2$ & membership & Cook's simulation of the same pushdown \\
DSPACE$(\frac{\log^2 M}{\log\log M})$ & membership & the ReachUL machine, simulated as in~\cite{allender1998} \\
C$_=$L & membership & one GapL zero test of squared vanishing products (Theorem~\ref{thm:ceql}) \\
NL & membership & collision-averaged fingerprints (Theorem~\ref{thm:phase}) \\
StrongFewL & hardness; membership open & configuration graphs (Theorem~\ref{thm:boundary}) \\
\bottomrule
\end{tabular}
\end{table}

The constructions are not specific to the uniform bound $k$. They validate any polynomially bounded system of walk budgets, and they compile any logspace computation whose ambiguity is bounded only \emph{semantically}.

\begin{theorem}[Total budget validation]
\label{thm:budgets}
If the total numerical budget is polynomially bounded in the input length, then
\[
\textsc{RootBudget},\ \textsc{PairBudget}\ \in\ \mathrm{ReachUL}\cap\mathrm{LOGDCFL}\cap\mathrm{C}_=\mathrm{L}\cap\mathrm{SC}^2 .
\]
Theorem~\ref{thm:main} is the special case $\kappa(u,v)=k$.
\end{theorem}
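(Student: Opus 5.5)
The plan is to prove Theorem~\ref{thm:budgets} by running the three constructions behind Theorem~\ref{thm:main} with the uniform bound $k$ replaced by the budget function $\kappa$. The only property of $k$ those constructions use is that the total budget $B=\sum_q\kappa(q)$, or $\sum_{u,v}\kappa(u,v)$, is polynomial in $M$; this is exactly the hypothesis. I treat \textsc{RootBudget} first. \textsc{PairBudget} then follows by running the root version once for every source $u$, with budgets $\kappa(u,\cdot)$, and combining the answers. ReachUL, LOGDCFL, C$_=$L and SC$^2$ are all closed under logspace conjunctions of polynomially many such calls, and the target condition $\Nw_G(s,t)\ge1$ is one extra reachability test.

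\textbf{Step 1: the capped walk length.} If $\Nw_H(r,q)\le\kappa(q)$ holds for all $q$, then every walk from $r$ has length at most $B$. A longer walk has more than $B$ prefixes; they are distinct walks from $r$, so by pigeonhole some endpoint $q$ receives more than $\kappa(q)$ of them, a contradiction. Conversely, if some violation exists, a violating walk set already appears among walks of length at most $B+1$. Hence the exact condition is equivalent to the truncated condition $\Nw^{\le B+1}_H(r,q)\le\kappa(q)$ for all $q$. This is the budgeted form of Idea~1, and it is the only place where the uniformity of $k$ entered.

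\textbf{Step 2: rerun the three machines.} For LOGDCFL and SC$^2$, I use the capped depth-first pushdown of Theorem~\ref{thm:logdcfl}. It explores walks of length at most $B+1$ and keeps one logspace counter per target $q$; equivalently, it iterates over targets $q$ and compares the count of walks ending at $q$ with $\kappa(q)$, read from the input. Polynomial time holds because the search aborts as soon as it has enumerated more than $B$ walks in total, so Cook's simulation gives SC$^2$.

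For ReachUL, I use the certified finite-field lift of Theorem~\ref{thm:reachul} on the layered graph of (endpoint, length $\le B+1$, fingerprint). The prime is chosen polynomially larger than $B^2$ so that a collision-free hash exists on good inputs. The machine certifies unambiguously that there are no collisions and then compares counts against $\kappa(q)$ instead of $k$.

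For C$_=$L, I form $\sum_q\prod_{j=0}^{\kappa(q)}\bigl(\Nw^{\le B+1}(r,q)-j\bigr)^2$. This is a GapL function, because $\Nw^{\le B+1}$ is a GapL entry of matrix powers, and products of polynomially many GapL values with a logspace-computable index are again in GapL. It vanishes exactly on good inputs; I conjoin it with the reachability part as in Theorem~\ref{thm:ceql}.

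\textbf{Main obstacle.} The difficulty is the ReachUL step. Unambiguity has to survive the variable budgets, and in particular the collision certificate must stay unambiguous on bad inputs where some $\kappa(q)$ is exceeded. I would check that the certifier of Theorem~\ref{thm:reachul} only needs the global cap $B$ to bound the number of fingerprint vertices. If it does, the per-vertex comparison against $\kappa(q)$ is a deterministic post-processing step and adds no ambiguity.
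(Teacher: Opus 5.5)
Your proposal is correct and follows the paper's proof. It uses the budget truncation lemma (the paper truncates at $B$, you at $B+1$, which is equally valid), the capped depth-first pushdown plus Cook's simulation, the certified lift, and the sum of squared vanishing products. On your stated main obstacle: the lift's certifier is the total ReachUL recognizer of reach-unambiguity of $H_\alpha$ and never consults $\kappa$, so budgets enter only through the choice of $p$ and the deterministic fiber comparisons, exactly as you hoped.

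The only real difference is how you handle \textsc{PairBudget}. The paper uses one many-one reduction to a single \textsc{RootBudget} instance on the universal source with per-source copies $U(G)$ (Lemma~\ref{lem:universal}), whereas you conjoin one root call per source. Your conjunction also works, provided the condition $\Nw_G(s,t)\ge1$ is folded into the root call at $s$ (take $F=\{t\}$). It should not be run as a standalone reachability test, which is not known to lie in ReachUL, LOGDCFL or SC$^2$.
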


\begin{theorem}[Fail-closed compiler]
\label{thm:compiler}
Let $M_0$ be a polynomial-time nondeterministic logspace machine, $\pi_x(c)$ the number of its computation paths on input $x$ from the initial configuration to configuration $c$, and $\kappa_x(c)\ge0$ logspace computable with a fixed polynomial bound on $\sum_c\kappa_x(c)$. Then
\[
L_{M_0,\kappa}=\{x:\ \pi_x(c)\le\kappa_x(c)\ \text{for every } c,\ \text{and } M_0 \text{ accepts } x\}
\]
is in $\mathrm{ReachUL}\cap\mathrm{LOGDCFL}\cap\mathrm{C}_=\mathrm{L}\cap\mathrm{SC}^2$.
\end{theorem}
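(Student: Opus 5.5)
The plan is to reduce $L_{M_0,\kappa}$ to \textsc{RootBudget} and then invoke Theorem~\ref{thm:budgets}. All four classes are closed under deterministic logspace many-one reductions: ReachUL and C$_=$L are, LOGDCFL is, and SC$^2$ is because the reduction can be recomputed bit by bit within polynomial time and $O(\log M)$ space.

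On input $x$, the reduction outputs the configuration graph $H_x$ of $M_0$ on $x$. Its vertices are the polynomially many configurations, and its edges are the one-step transitions. Both are logspace computable. The root $r$ is the initial configuration, and the accepting set $F$ is the set of accepting configurations. The budget is $\kappa(q)=\kappa_x(q)$ for every configuration $q$, computed by the given logspace transducer. Its total is polynomially bounded by hypothesis, which is exactly the size condition Theorem~\ref{thm:budgets} needs. With these choices, walks in $H_x$ from $r$ to $c$ are in bijection with computation paths of $M_0$ from the initial configuration to $c$, so $\Nw_{H_x}(r,c)=\pi_x(c)$. Two points need checking here.

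\emph{The bijection.} I would take the transition relation with multiplicity. If two distinct nondeterministic choices yield the same successor configuration, they form two distinct computation paths. In that case I would refine each vertex to a pair (configuration, last choice taken), or equivalently subdivide every edge by a vertex naming the transition used. Then distinct computation paths become distinct walks. The budget on the intermediate subdivision vertices is set to the sum of the budgets of their heads. Because the fan-out of $M_0$ is constant, this keeps the total budget polynomial. Alternatively, I could assume the standard normal form in which distinct choices always give distinct configurations.

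\emph{Cycles and infinite path counts.} A cycle reachable from the root gives $\pi_x(c)=\infty$ for some $c$, and \textsc{RootBudget} then rejects, which is the intended fail-closed behaviour. The polynomial-time hypothesis on $M_0$ is used only to ensure that on good inputs the semantics coincide with the ordinary path count, so no step counter is needed.

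With these points settled, $x\in L_{M_0,\kappa}$ holds if and only if $\langle H_x,r,F,\kappa\rangle\in\textsc{RootBudget}$: every $c$ satisfies $\Nw(r,c)\le\kappa(c)$, and some $f\in F$ is reachable. Theorem~\ref{thm:budgets} then places the problem in all four classes.

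The main obstacle is the budget-size hypothesis of Theorem~\ref{thm:budgets} under the refinement. The budget must be written in a form with polynomial total size, presumably as a list of numbers in unary or bounded binary, and a logspace transducer must be able to produce it. If \textsc{RootBudget} requires unary budgets, I would emit $\kappa_x(c)$ in unary by a counting loop, which has length at most the polynomial bound on the total. I also need configurations with $\kappa_x(c)=0$ to be handled correctly: such a configuration must be unreachable, and that is exactly the condition $\Nw\le0$. Checking that Theorem~\ref{thm:budgets} allows zero budgets is therefore the step I would verify first.
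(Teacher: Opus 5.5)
Your proposal is correct and is essentially the paper's proof. The paper likewise takes the configuration graph of $M_0$ on $x$, rooted at the initial configuration, with the accepting configurations as $F$ and the budgets $\kappa_x$ attached, and then applies the \textsc{RootBudget} memberships (Theorems~\ref{thm:rootlogdcfl}, \ref{thm:reachul} and~\ref{thm:ceql}, that is, Theorem~\ref{thm:budgets}). Your subdivision of parallel transitions, which the paper does only in its hardness proof, and your remark that a reachable cycle is rejected via Lemma~\ref{lem:trunc} without an added clock, fill in details the paper leaves implicit.
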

In words: a nondeterministic logspace computation that is \emph{expected} to have few computation paths, with no promise and no restriction on the machine, can be replaced by a reach-unambiguous one that checks the expectation on every input and rejects where it fails.

\begin{theorem}[Completeness]
\label{thm:complete}
For every fixed $k\ge1$, the rooted language $\textsc{RootFew}_k=\{\langle G,s,t\rangle:\Nw_G(s,v)\le k\ \forall v,\ \Nw_G(s,t)\ge1\}$ is ReachUL-complete under logspace reductions, and remains so on DAGs; rooted $k$-few recognition is ReachUL-complete; and the two-cone language $\textsc{FBFew}_{k,1}$, which bounds walks out of $s$ by $k$ and into $t$ by $1$, is ReachUL-complete.
\end{theorem}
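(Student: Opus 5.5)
The statement has three parts, and each needs a membership half and a hardness half. For membership I would reduce everything to the rooted budget language of Theorem~\ref{thm:budgets}. $\textsc{RootFew}_k$ is exactly \textsc{RootBudget} with $r=s$, $\kappa\equiv k$ and $F=\{t\}$. Rooted $k$-few recognition is \textsc{RootBudget} with $F=V$; the condition $\Nw(s,s)\ge1$ holds trivially. For $\textsc{FBFew}_{k,1}$ I would take the disjoint union of $G$ and the reverse graph $G^{R}$ with a fresh root. An arc from the root to $s$ carries budget $k$, and an arc from the root to $t$ in $G^{R}$ carries budget $1$. Walks from the fresh root then correspond to walks out of $s$ in $G$ and walks into $t$ in $G$, respectively. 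Since $k$ is fixed, the total budget is $O(n)$, so Theorem~\ref{thm:budgets} puts all three languages in ReachUL.

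\emph{Hardness.} Here I would use the standard fact that the reachability problem of a ReachUL machine is complete. Let $L\in\mathrm{ReachUL}$ be witnessed by a machine $M_0$ that reaches every configuration along at most one path. Map $x$ to the configuration graph $G_x$ with $s$ the initial configuration and $t$ a unique accepting configuration; this is a logspace reduction. Reach-unambiguity gives $\Nw_{G_x}(s,v)\le1\le k$ for every $v$. Hence $x\in L$ if and only if $\langle G_x,s,t\rangle\in\textsc{RootFew}_k$.

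Two points need care. First, configuration graphs may contain cycles that are unreachable from $s$. Those do no harm, because the rooted condition only counts walks out of $s$. A reachable cycle, however, would give infinitely many walks. I would remove it by layering the graph with a time counter, using the polynomial running time that ReachUL machines may be assumed to have. This also gives the DAG version. Second, for rooted recognition I need a single instance that encodes both ``few'' and ``reaches $t$''. I would add a gadget: a second arc $s\to t'$ together with an arc $t\to t'$, where $t'$ is fresh. Then $\Nw(s,t')=1+\Nw(s,t)$, so $t'$ violates the $1$-few condition exactly when $t$ is reachable. That is the complement of what we want. To fix this I would use $k+1$ parallel short paths into $t'$ only when $t$ is \emph{not} reached. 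That in turn requires the coReach closure $\mathrm{ReachUL}=\mathrm{coReachUL}$ (Lange): reduce from the complement machine, whose configuration graph is also reach-unambiguous.

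For $\textsc{FBFew}_{k,1}$, hardness follows by reducing $\textsc{RootFew}_1$ on DAGs. The layered configuration graph has a unique accepting sink $t$. Into $t$ there may be many walks, so I would replace $t$ by the final layer's configuration and make $t$ a fresh vertex entered only from that configuration. Even then, the walks into $t$ from all of $V$ must be at most $1$ per source, which requires backward unambiguity as well. I expect this to be the main obstacle: forward unambiguity does not imply backward unambiguity. What I would try first is to reduce instead from $\textsc{RootFew}_k$ with $t$ having in-degree one along a forced chain and to restrict attention to the cone of $t$. If that fails, I would fall back on the known $\mathrm{ReachUL}=\mathrm{ReachFewL}$ characterization to obtain a machine whose reversed configuration graph from acceptance is also a mangrove.
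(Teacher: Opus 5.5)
Your membership arguments work; budgets sit on vertices, so give the $G$-copies budget $k$ and the $G^{R}$-copies budget $1$. Your hardness for $\textsc{RootFew}_k$ and its DAG version also works. It is read directly off the clocked configuration graph of a reach-unambiguous machine, and it is more self-contained than the paper's route. The paper instead cites Lange's completeness of $\textsc{STCON}_{\mathrm{ru}}$, adds a $k$-fold prefix gadget $r\to a_i\to s$, and uses a layering that copes with bad inputs.

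Your recognition argument is also sound once it is stated as an actual reduction:
\begin{itemize}
\item Take a reach-unambiguous machine for $\overline{L}$, using closure under complement.
\item Add a fresh $t'$, the edge $t\to t'$, and $k$ internally disjoint two-edge paths from $s$ to $t'$.
\item Every old vertex then has at most one rooted walk, and $\Nw(s,t')=k+[x\notin L]$, so the output is rooted $k$-few iff $x\in L$.
\end{itemize}
As written, $k+1$ parallel paths would make $t'$ violate the bound on every input. Adding paths ``only when $t$ is not reached'' is also not something a reduction can do. The paper instead cites ReachUL-completeness of rooted reach-unambiguity recognition and applies the $k$-fold prefix.

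The genuine gap is the hardness of $\textsc{FBFew}_{k,1}$. You leave it open, and the fallbacks you list do not close it. The backward condition $\Nw(v,t)\le1$ is imposed on every vertex, including configurations unreachable from $C_0$. Nothing about a reach-unambiguous machine bounds the routes from such configurations to acceptance:
\begin{itemize}
\item ReachFewL $=$ ReachUL again controls only paths out of the start configuration.
\item Giving $t$ in-degree one behind a chain does not stop many routes into its predecessor.
\item The backward cone of $t$ is exactly where the offending configurations live.
\item Restricting to the part reachable from $s$ would help, but a logspace reduction cannot compute it, since that would decide ReachUL-hard reachability.
\end{itemize}

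The missing idea is to encode recognition, not reachability, in $\textsc{FBFew}_{k,1}$; you have already shown recognition hard. Given $\langle G,s\rangle$, add fresh vertices $r,z$ and edges $r\to s$, $r\to z$, and output $\langle H,r,z\rangle$. Since $z$ is entered only from $r$, and $r$ has no in-edges, every vertex has at most one walk into $z$. Also $r\leadsto z$ holds outright. Meanwhile $\Nw_H(r,v)=\Nw_G(s,v)$ for old $v$, and $\Nw_H(r,r)=\Nw_H(r,z)=1$. Hence $\langle H,r,z\rangle\in\textsc{FBFew}_{k,1}$ iff $G$ is rooted $k$-few from $s$. This is the paper's reduction: it makes the backward cone trivial instead of trying to control it.
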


\begin{theorem}[A phase transition in the threshold encoding]
\label{thm:phase}
\begin{enumerate}[label=(\roman*),leftmargin=1.6em,itemsep=1pt,topsep=2pt]
\item With polynomially bounded thresholds, $\textsc{RootBudget},\textsc{PairBudget}\in\mathrm{NL}$, and an $\mathrm{FL}^{\mathrm{NL}}$ transducer outputs the capped counts $\min\{\Nw^{\le B}(r,v),K+1\}$. More generally, for every \#L function $f$ and polynomially bounded logspace threshold $\tau$, $\{x:f(x)\ge\tau(x)\}\in\mathrm{NL}$; hence the class of such polynomial-threshold \#L languages equals NL.
\item With the threshold $K$ in binary, $\textsc{Bin-}\STCON$ is PL-complete, even on DAGs on which $s$ reaches $t$ by construction.
\item For a single pair and a fixed cap $k$, $\{\langle G,s,t\rangle: G \text{ a DAG},\ \Nw_G(s,t)\le k\}$ is NL-complete.
\end{enumerate}
\end{theorem}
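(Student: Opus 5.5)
The plan is to prove the three parts separately. Each rests on one of the standard capped-counting tools. The common device is the walk-length cap $B$ from Idea~1. If every count we care about is at most $K$ (polynomial), then a walk of length $B=nK+1$ exists exactly when some count exceeds its budget. So it suffices to work with $\Nw^{\le B}$, which is a \#L function of polynomial length.

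For (i), the plan is to prove the general statement first: $\{x: f(x)\ge\tau(x)\}\in\mathrm{NL}$ for $f\in\#\mathrm{L}$ and polynomially bounded $\tau$. The NL machine guesses $\tau$ distinct accepting paths of the machine counting $f$ and checks that they are distinct, but not by storing them, which would take too much space. Instead it runs the $\tau$ paths in lockstep and demands a strict lexicographic order $p_1<p_2<\dots<p_\tau$ of their nondeterministic choice strings. This needs only the $\tau$ current configurations plus a flag per adjacent pair recording whether they have already diverged; for polynomial $\tau$ that is $O(\tau\log n)$ bits. That is too much for logspace. So the fallback is the route via NL $=$ coNL closure. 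The capped count $\min\{f,K+1\}$ is computed bit by bit by an $\mathrm{FL}^{\mathrm{NL}}$ transducer, using the fact that ``there are at least $j$ accepting paths'' for polynomial $j$ reduces to reachability in a layered product graph whose vertices are (configuration, number of paths already certified). I expect this to be the main obstacle, because counting distinct paths requires a canonical ordering. The step I would try first is inductive counting à la Immerman--Szelepcsényi: count, layer by layer, the capped number of walks to each vertex, where $\min\{\Nw^{\le\ell+1}(r,v),K+1\}=\min\{\sum_{u\to v}\min\{\Nw^{\le\ell}(r,u),K+1\},K+1\}$. Each capped value is certified nondeterministically from the previous layer's certified values, with the recursion re-verified on demand. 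Once the capped counts are available, \textsc{RootBudget} and \textsc{PairBudget} are in NL: an $\mathrm{L}^{\mathrm{NL}}=\mathrm{NL}$ machine compares each capped count at length $B$ against $\kappa$ and tests reachability.

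For (ii), membership comes from writing $\Nw^{\le B}(u,v)$ as a GapL function: entries of $\sum_{i\le B}A^i$ lie in GapL. The test $\Nw^{\le B}(u,v)\le K$ is then a PL predicate. The conjunction over all pairs (and the extra length-$B$ check that rules out cycles) stays in PL because PL is closed under logspace conjunctions. For hardness, reduce from the PL-complete problem ``is the determinant, or equivalently a path-count difference $f-g$ with $f,g\in\#\mathrm{L}$, positive''. By standard DAG constructions, $f$ and $g$ are walk counts $\Nw_{G_1}(s_1,t_1)$ and $\Nw_{G_2}(s_2,t_2)$ with a known upper bound $2^m$. Build a DAG in which $\Nw(s,t)=g+(2^m-f)$. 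The term $2^m-f$ comes from a complement gadget: $2^m$ walks through a binary counter chain, with $f$ of them diverted through a negation. Concretely, I would use $\Nw(s,t)=g+c$ and compare with $K=f+c-1$ realized by GapL-to-\#L shifting. The yes condition is then $\Nw\le K$. To make all other pairs harmless, set $\kappa$ uniformly to $K$, which is an exponential number written in binary, and check that every other pair has at most $K$ walks by construction. An added direct edge makes $s$ reach $t$.

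For (iii), membership: $\Nw(s,t)\le k$ fails iff there exist $k+1$ distinct $s$--$t$ paths. In a DAG with fixed $k$, an NL machine can guess $k+1$ paths in lockstep with $O(k)$ pointers and divergence flags, since $k$ is constant. Lexicographic ordering by edge choice makes them distinct. This places the complement in NL, and NL $=$ coNL finishes. Acyclicity is coNL, hence NL. Hardness: reduce DAG non-reachability, which is coNL $=$ NL-complete. Given a DAG $H$ and $a,b$, let $G$ consist of $k$ disjoint parallel length-2 paths from $s$ to $t$, plus edges $s\to a$ and $b\to t$, with $H$ made a mangrove first. The mangrove step uses the ReachUL-style or layered-counting reduction, or more simply a counting-irrelevant trick: add $k+1$ parallel copies so that reachability of $b$ from $a$ forces $>k$ walks. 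Then $\Nw(s,t)\le k$ iff $a$ does not reach $b$.
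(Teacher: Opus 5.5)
There is a genuine gap in part (i), and the hardness half of part (ii) does not work as written.

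\textbf{Part (i).} You rightly discard lockstep guessing of $\tau$ paths, but neither fallback closes the gap.
\begin{itemize}
\item The claim that ``at least $j$ accepting paths'' reduces to reachability in a product graph over (configuration, number certified) is asserted, not shown. It is exactly the difficulty: a walk that carries only a configuration and a counter cannot check that the paths it counts are distinct.
\item Inductive counting does not rescue it. Immerman--Szelepcs\'enyi works because one number per layer suffices to re-certify the next layer. Here, certifying that a predecessor $u$ has capped count at least $c$ means exhibiting $c$ distinct walks again. You would need either the whole vector of the previous layer ($n\log(K+1)$ bits) or an on-demand recursion of depth $B$ ($\Theta(B\log n)$ space). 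Your recurrence also drops the empty-walk term $[v=r]$.
\end{itemize}

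The missing idea is the paper's collision-averaged fingerprint. Hash a walk $P$ of length $d\le B$ to the state $(v,d,h_\alpha(P))$ of the polynomial-size lift $H_\alpha$. Then distinct walks become distinct reachable states unless they collide, and counting reachable states is in $\mathrm{L}^{\mathrm{NL}}=\mathrm{NL}$. To avoid needing a certified injective hash, sum over a set $A$ of $q_0=(B-1)\binom{K+1}{2}+1$ evaluation points:
\begin{itemize}
\item Two distinct walks of the same length collide at most at $B-1$ points.
\item Hence any $t+1\le K+1$ walks to $v$ leave more than $q_0t$ images in total.
\item This gives $c_v\le t\iff D_v\le q_0t$, and so $\min\{c_v,K+1\}=\min\{\lceil D_v/q_0\rceil,K+1\}$.
\end{itemize}
Run on the clocked configuration graph with lengths recorded, the same argument yields the general \#L threshold statement.

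\textbf{Part (ii), hardness.} The threshold $K$ is part of the reduction's output, so it must be logspace computable from $x$. Your $K=f+c-1$ depends on the \#L value $f(x)$, so the reduction is not defined. The complement gadget and the claim that every other pair stays below $K$ are also never checked.

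The paper's construction works as follows:
\begin{itemize}
\item Normalize the PL machine to exactly $p$ coin flips, so that $x\in L$ iff $A(x)>2^{p-1}$. This is a threshold the reduction can write down.
\item Put the machine's layered configuration graph in parallel with a chain of $p+1$ diamonds. The configuration graph has $A(x)$ paths from $a$ to $b$ and at most $2^p$ paths between any two configurations. The diamond chain has $C=2^{p+1}$ paths and makes $a$ reach $b$ on every input.
\item Set $K=C+2^{p-1}$. Every other pair then has at most $C\le K$ paths.
\item The instance is accepted iff $A(x)\le2^{p-1}$. This reduces the complement of $L$, and PL is closed under complement.
\end{itemize}

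\textbf{Part (ii), membership.} Do not use your cap $B=nK+1$. With $K$ in binary it is exponential, so $\sum_{i\le B}A^i$ is not a polynomial sum, and on cyclic inputs its entries have exponentially many bits. Instead, reject cycles with the NL test (a walk of length $n$), and on DAGs use $\sum_{i<n}A^i$.

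\textbf{Part (iii).} This part is correct and matches the paper. Your first gadget already gives $\Nw(s,t)=k+\Nw_H(a,b)$, which is at most $k$ iff $a$ does not reach $b$. So the ``make $H$ a mangrove first'' step is unnecessary, and no logspace transformation of that kind is known.
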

The complexity of validating few paths thus moves from inside $\mathrm{ReachUL}\cap\mathrm{LOGDCFL}$ to PL-complete exactly when the numerical threshold may become exponential, not when its notation changes (Table~\ref{tab:phase}).

\begin{table}[t]
\centering
\small
\caption{\textbf{Validation tasks and their classification.}}
\label{tab:phase}
\footnotesize
\begin{tabular}{@{}L{5.6cm}L{7.4cm}@{}}
\toprule
Validation task & Classification \\
\midrule
one pair, fixed cap & NL-complete \\
one rooted row, fixed $k$ & ReachUL-complete (also on DAGs) \\
one rooted row or all pairs, polynomial total budget & in $\mathrm{ReachUL}\cap\mathrm{LOGDCFL}\cap\mathrm{C}_=\mathrm{L}\cap\mathrm{SC}^2$ \\
all pairs, unary $k$ ($\STCON$) & in $\mathrm{ReachUL}\cap\mathrm{LOGDCFL}\cap\mathrm{C}_=\mathrm{L}\cap\mathrm{SC}^2$; StrongFewL-hard \\
all pairs, binary threshold & PL-complete \\
\bottomrule
\end{tabular}
\end{table}

\begin{theorem}[The StrongFewL boundary]
\label{thm:boundary}
$\STCON$ is StrongFewL-hard. A proof that $\STCON\in\mathrm{StrongFewL}$ would make it a natural complete problem for StrongFewL.
\end{theorem}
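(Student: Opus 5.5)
The plan has two parts: a reduction from an arbitrary StrongFewL language to $\STCON$, and then the observation that the second sentence of the statement is definitional.

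\textbf{The reduction.} Let $A\in\mathrm{StrongFewL}$. Then $A$ is decided by a nondeterministic logspace machine $M_0$ and a polynomial $p$ with the following property: on every input $x$, the configuration graph $G_x$ of $M_0$ on $x$ has at most $p(|x|)$ computation paths between any two configurations. The map is
\[
x\mapsto\langle G_x,\ c_0,\ c_{\mathrm{acc}},\ 1^{p(|x|)}\rangle .
\]
Here $c_0$ is the initial configuration. We normalize $M_0$ so that it has a unique accepting configuration $c_{\mathrm{acc}}$ with no outgoing edges. A logspace transducer can print the adjacency matrix of $G_x$: it loops over pairs of configurations and checks the one-step relation by reading the input head position. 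It also prints $1^{p(|x|)}$ with a counter.

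\textbf{Correctness.} By the StrongFewL property, every pair $u,v$ has $\Nw_{G_x}(u,v)\le p(|x|)$. So the few-path half of $\STCON$ holds automatically on every input $x$. The reachability half, $\Nw_{G_x}(c_0,c_{\mathrm{acc}})\ge1$, holds exactly when $M_0$ accepts $x$. Hence $x\in A$ if and only if the image lies in $\STCON$.

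\textbf{Main obstacle: the normalization.} The risk is that building the unique accepting sink breaks the few-path bound. I would do the following.
\begin{itemize}
\item Make all accepting configurations point to one fresh sink $c_{\mathrm{acc}}$.
\item Check the new walk counts. A walk into $c_{\mathrm{acc}}$ passes through exactly one accepting configuration, since each of them leads only to the sink. So $\Nw(u,c_{\mathrm{acc}})$ is at most the number of accepting configurations times $p$. This is still polynomial, and we simply raise the unary cap to that value.
\item Pairs not involving the sink keep their old counts, because no walk can leave the sink.
\item StrongFewL counts computation paths, which are exactly walks in $G_x$, and $G_x$ is acyclic because $M_0$ halts on all paths. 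So the walk-based definition of $\STCON$ matches.
\item The vertex set is every configuration, including unreachable ones. StrongFewL bounds paths between all pairs of configurations, so these vertices cause no problem. This is precisely why strong fewness, rather than reach-fewness, is the right hypothesis.
\end{itemize}

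\textbf{The second sentence.} If $\STCON\in\mathrm{StrongFewL}$, then together with hardness it is StrongFewL-complete under logspace many-one reductions. It is also a natural problem, being a combinatorial graph language rather than a machine-encoded one. Nothing beyond these definitions needs to be proved.
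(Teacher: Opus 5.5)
Your proposal is correct and follows the paper's argument: you map $x$ to the configuration graph with an accepting collector, take the start and accepting configurations as $s,t$, and use a polynomially enlarged unary cap, so the all-pairs bound holds automatically, membership reduces to acceptance, and completeness then follows by definition. Two small points, neither of which affects the reduction. When two transitions join the same pair of configurations, the adjacency matrix collapses them (the paper splits them into private two-edge paths), so walks are at most computation paths, not exactly equal to them; collapsing only lowers walk counts and preserves reachability. Also, acyclicity comes from the all-pairs walk bound, not from halting.
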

The reach-unambiguous machine of Theorem~\ref{thm:main} controls the paths from its \emph{initial} configuration; StrongFewL asks for polynomially many paths between \emph{every} pair of configurations, including pairs unreachable from the start. That extra invariant is exactly what membership in StrongFewL requires, and it is the precise open question left by this work (Appendix~\ref{app:hardness}).

\subsection{How the proofs go}
\label{sec:proofs}
All three constructions bound a search by a budget, and one elementary lemma says why that is allowed.

\begin{lemma}[Budget truncation]
\label{lem:trunc}
Let $H$ be a finite directed graph with a root $r$, give every vertex a budget $\kappa(v)\in\mathbb{N}$, and let $B=\sum_v\kappa(v)$. Then $\Nw_H(r,v)\le\kappa(v)$ for all $v$ if and only if $\Nw^{\le B}_H(r,v)\le\kappa(v)$ for all $v$.
\end{lemma}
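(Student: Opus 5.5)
The forward direction is immediate. Walks of length at most $B$ are a subset of all walks, so $\Nw^{\le B}_H(r,v)\le\Nw_H(r,v)\le\kappa(v)$. All the work is in the converse, which I will prove by contrapositive. Suppose some vertex $v$ has $\Nw_H(r,v)>\kappa(v)$; this includes the case $\Nw_H(r,v)=\infty$. I want to exhibit some vertex $w$ with $\Nw^{\le B}_H(r,w)>\kappa(w)$.

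The plan is a case split on the length of the violating walks.

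\textbf{Short violation.} If $v$ already has at least $\kappa(v)+1$ walks of length at most $B$, we are done with $w=v$.

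\textbf{Long walk.} Otherwise $v$ has fewer than $\kappa(v)+1$ walks of length at most $B$, so it has some walk $W$ from $r$ to $v$ of length $\ell>B$. Let $W_0,W_1,\dots,W_B$ be the prefixes of $W$ of lengths $0,1,\dots,B$. These are $B+1$ pairwise distinct walks, since they have different lengths. Each starts at $r$ and has length at most $B$, and each ends at some vertex. Each walk is therefore counted in $\Nw^{\le B}_H(r,w)$ for its own endpoint $w$, so
\[
\sum_w \Nw^{\le B}_H(r,w)\ \ge\ B+1\ >\ \sum_w\kappa(w).
\]
By pigeonhole, some $w$ has $\Nw^{\le B}_H(r,w)>\kappa(w)$, which is the required violation.

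The step that needs care is making the case split exhaustive: if $\Nw_H(r,v)>\kappa(v)$ and no walk to $v$ is longer than $B$, then all the excess walks have length at most $B$, which is the first case. This also covers infinite counts: an infinite set of walks to $v$ must contain arbitrarily long ones, so it falls into the long-walk case. The degenerate situation $B=0$ is handled by the same argument: the empty walk alone already violates the budget of $r$ (since $\kappa(r)=0$), and it has length $0\le B$. No earlier result of the paper is needed; the lemma is self-contained counting.
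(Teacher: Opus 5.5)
Your proof is correct and rests on the same key idea as the paper's: a walk from $r$ longer than $B$ has $B+1$ distinct prefixes of length at most $B$, so pigeonhole against $\sum_v\kappa(v)=B$ forces some truncated count over its budget. The paper argues directly instead of by contrapositive. It assumes the truncated bounds hold, concludes that no walk from $r$ reaches length $B$, and hence that $\Nw_H(r,\cdot)=\Nw^{\le B}_H(r,\cdot)$ identically. Your short-violation/long-walk case split reaches the same equivalence with the same content.
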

\begin{proof}
Left to right is immediate. Conversely, suppose the bounded counts respect the budgets and some walk from $r$ has length at least $B$. Its prefixes of lengths $0,\dots,B$ are $B+1$ distinct walks from $r$ of length at most $B$; since the budgets sum to $B$, some vertex $v$ ends more than $\kappa(v)$ of them, a contradiction. So every walk from $r$ is shorter than $B$, and $\Nw_H=\Nw^{\le B}_H$.
\end{proof}
Explore walks only up to the total budget: either you have seen every walk, or you have already found a violation. The depth $B$ is tight (a directed cycle on $B$ vertices with unit budgets). All-pairs budgets reduce to one root by a \emph{universal source with per-source copies}: the graph $U(G)$ has a root $\rho$, a vertex $(a,v)$ for every pair, an edge $\rho\to(a,a)$ for every $a$, and an edge $(a,u)\to(a,v)$ for every edge $u\to v$ and every tag $a$. Every walk from $\rho$ to $(a,v)$ enters copy $a$ and stays there, so $\Nw_{U(G)}(\rho,(a,v))=\Nw_G(a,v)$ \emph{for every pair}, and the budget of $(a,v)$ can be set to $\kappa(a,v)$ (Lemma~\ref{lem:universal}).

\textbf{Construction 1: a capped deterministic pushdown (LOGDCFL, SC$^2$).} Buntrock et al.~\cite{bjlr1991} observed that a depth-first search of a reach-few machine, keeping a stack of return points, runs in LOGDCFL; on an arbitrary graph that search may never stop. Capping it by the total budget makes it total.
\begin{theorem}[Capped stack traversal]
\label{thm:logdcfl}
Run, from every source $a$, a depth-first traversal of the tree of walks from $a$ that keeps a stack of return points (a vertex and the index of its next successor) and aborts as soon as more than $nk$ nodes have been entered; reject on any abort; then recount, for every pair $(a,b)$, the walks from $a$ to $b$, rejecting when a count exceeds $k$; accept iff the $(s,t)$ count is positive. This decider accepts exactly $\STCON$, and it is a deterministic auxiliary pushdown automaton with an $O(\log M)$ work tape running in polynomial time; hence $\STCON\in\mathrm{LOGDCFL}\subseteq\mathrm{SC}^2$~\cite{sudborough1978,cook1979}.
\end{theorem}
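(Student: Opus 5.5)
The plan has two parts: first show the decider accepts exactly $\STCON$, then show it runs as a logspace deterministic auxiliary pushdown automaton in polynomial time.

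\textbf{Correctness.} Fix a source $a$ and apply Lemma~\ref{lem:trunc} to the root $a$ with uniform budgets $\kappa(v)=k$, so $B=nk$. The depth-first traversal enters one node per walk from $a$ (the tree of walks), in the order of a preorder listing.

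Suppose first that $G$ is $k$-strongly few. Then $\sum_v\Nw_G(a,v)\le nk$, so at most $nk$ nodes are ever entered and the search never aborts. Every pair count is then at most $k$, and the decider accepts iff $\Nw_G(s,t)\ge1$.

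Conversely, suppose the traversal from every $a$ finishes within $nk$ entered nodes and no recount exceeds $k$. Then every walk from every $a$ has been enumerated. Because the search is finite, the tree of walks from $a$ is finite, so there is no reachable cycle and $\Nw_G(a,b)$ equals the recount. Hence every pair count is at most $k$, and acceptance means $\Nw_G(s,t)\ge1$.

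If instead some $\Nw_G(a,b)>k$, or $\Nw_G(a,b)=\infty$, then either the tree from $a$ has more than $nk$ nodes, triggering an abort, or it is finite and the recount of $(a,b)$ exceeds $k$. Either way the decider rejects, so the abort-and-recount rule is exactly the truncation of Lemma~\ref{lem:trunc}. I would write the recount as a second capped traversal from $a$ that increments a counter each time it enters $b$. This traversal is identical to the first, so it too halts within $nk$ steps.

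\textbf{Resources.} The work tape holds the current source $a$, the target $b$ during recounts, a node counter up to $nk+1\le M^2+1$, and a counter up to $k+1$. That is $O(\log M)$ bits. The pushdown holds pairs (vertex, next-successor index) of $O(\log M)$ bits each, and its height never exceeds the current walk length, which is at most $nk$ before abort.

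Each node is entered once and each stack entry is pushed and popped once. Scanning successors by index costs $O(n)$ per entry, and reading the adjacency matrix costs polynomial time. One traversal therefore takes $\mathrm{poly}(M)$ steps, and there are $n+n^2$ traversals in total, so the whole run is polynomial.

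The machine is deterministic, since each action depends only on the tape, the top of the stack, and the input. By Sudborough's characterization~\cite{sudborough1978}, polynomial-time logspace deterministic auxiliary pushdown automata decide exactly LOGDCFL. Cook's simulation~\cite{cook1979} then places LOGDCFL in SC$^2$.

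\textbf{Main obstacle.} The delicate step is the preorder backtracking when only the top of the stack is visible. On a pop, the machine must recover the parent's vertex and advance its successor index. Storing the index inside the stack entry solves this. The entry currently at the top stays on the stack while its children are explored, and on return the machine reads it, increments the index, and continues from there.

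I would also check that the abort counter counts entries, not distinct vertices. A cyclic graph revisits vertices without bound, and only a count of entries guarantees termination on such inputs.
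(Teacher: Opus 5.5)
Your proposal is correct and follows the paper's proof: the same two-phase capped depth-first search with (vertex, next-successor) frames, followed by per-pair recounts, the same resource count, and Sudborough's and Cook's results for $\mathrm{LOGDCFL}\subseteq\mathrm{SC}^2$. The one difference is how you justify exact recounts. You argue that a traversal finishing within the cap has exhausted the whole walk tree, which is how the paper proves Theorem~\ref{thm:rootlogdcfl}. The paper's proof of this theorem instead uses Lemma~\ref{lem:b6}: a vertex on a cycle has walks of every length up to $nk$ and so overflows, and acyclicity makes the depth-$nk$ truncation exact. Your appeal to Lemma~\ref{lem:trunc} is never actually used.
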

\begin{proof}[Proof idea]
In a $k$-strongly-few graph every source has at most $nk$ walks, so every traversal completes and, the graph being acyclic, its counts are exact. A source on a directed cycle has a walk of every length, so its traversal enters $nk+1$ nodes and aborts. The order of passes matters: recounts start only after every source has bounded its whole tree, so a bad input cannot force an exponentially long recount (Appendix~\ref{app:logdcfl}).
\end{proof}

\textbf{Construction 2: a certified lift (ReachUL).} The idea is to certify before counting. Hash every walk from the root by a polynomial fingerprint of its edge indices, evaluated at a point $\alpha$ of a prime field, into a layered graph $H_\alpha$ whose states record the endpoint, the length and the hash; walks correspond one-to-one to root paths of $H_\alpha$. If $H_\alpha$ is reach-unambiguous, every walk reaches its own state and the number of reachable states in the fiber of $v$ is exactly $\Nw^{\le B}(r,v)$, \emph{whether or not the input is valid}. Reach-unambiguity of a rooted graph is decided by a total ReachUL recognizer~\cite{lange1997}, and ReachUL is closed under logspace oracle computations~\cite{garvin2014}.
\begin{theorem}[Certified lift]
\label{thm:reachul}
The controller that, for $\alpha=0,1,\dots$, certifies $H_\alpha$ and, for the first certified $\alpha$, compares every fiber count with its budget, decides \textsc{RootBudget} by a reach-unambiguous logspace computation. With the universal source, \textsc{PairBudget} and $\STCON$ are in ReachUL.
\end{theorem}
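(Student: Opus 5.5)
The plan has three stages. First, reduce \textsc{RootBudget} to a bounded-length problem with Lemma~\ref{lem:trunc}. Second, build the layered fingerprint graphs $H_\alpha$ and show that a reach-unambiguous $H_\alpha$ counts walks exactly. Third, argue that the controller, which certifies and then counts, is itself a reach-unambiguous logspace computation. The $\STCON$ case then follows by applying this to the universal source $U(G)$ with budgets $\kappa(\rho)=1$ and $\kappa(a,v)=k$. The extra condition that $U(G)$ be reach-few from $\rho$ is exactly $k$-strong fewness of $G$, and the target set is $F=\{(s,t)\}$.

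\textbf{Step 1: the lifted graph.} Fix $B=\sum_v\kappa(v)$, which is polynomial. Fix a prime $p$ with $p>B\cdot P$ for a suitable polynomial $P$ bounding the number of walks whenever budgets hold; take $p$ about $M^{c}$ so that elements of $\Fp$ fit in $O(\log M)$ bits. The vertices of $H_\alpha$ are $(v,i,h)$ with $i\le B$ and $h\in\Fp$. Each edge $e$ of index $j_e$ gives edges $(u,i,h)\to(v,i+1,h\alpha+j_e)$. The root is $(r,0,0)$. Since $i$ increases, $H_\alpha$ is layered, hence acyclic. Root paths of $H_\alpha$ correspond bijectively to walks of length $\le B$, because the edge sequence determines the path.

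Key claim: if $H_\alpha$ is reach-unambiguous, then distinct walks reach distinct states. Two walks reaching the same state would be two distinct root paths to it. Hence the number of reachable states $(v,\cdot,\cdot)$ equals $\Nw^{\le B}(r,v)$. This holds on every input, so no promise is used.

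Existence of a good $\alpha$ on valid inputs: two distinct walks of equal length $i$ ending at the same $v$ collide only if $\alpha$ is a root of a nonzero polynomial of degree $\le B$. By Lemma~\ref{lem:trunc} there are at most $B$ walks in total, so there are at most $B^2$ pairs and at most $B^3<p$ bad values of $\alpha$. Some $\alpha<p$ is therefore certified.

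\textbf{Step 2: the unambiguous controller.} Run $\alpha=0,1,\dots,p-1$ in order. For each $\alpha$, call the total reach-unambiguity recognizer for rooted graphs~\cite{lange1997} on $H_\alpha$, which is logspace-computable from the input and $\alpha$.

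Stop at the first certified $\alpha$. Then, for every $v$, count the reachable states in its fiber by unambiguous enumeration: cycle deterministically over all states $(v,i,h)$ and query reachability in the certified $H_\alpha$, which is a ReachUL query. Compare the count with $\kappa(v)$ and check that some $f\in F$ has a positive count.

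If no $\alpha$ certifies, reject. This is correct, since on valid inputs some $\alpha$ certifies. On invalid inputs two cases arise. If some $\alpha$ certifies, its exact counts reveal the violation $\Nw^{\le B}(r,v)>\kappa(v)$, which is equivalent to a violation by Lemma~\ref{lem:trunc}. If none certifies, the controller rejects.

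The whole controller is a logspace machine making ReachUL oracle queries, including negative answers, which are available because ReachUL is closed under complement. By the closure of ReachUL under logspace oracle computations~\cite{garvin2014}, it lies in ReachUL.

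\textbf{Main obstacle.} The delicate point is ensuring that no step becomes ambiguous on bad inputs. Counting reachable fiber states must only ever be done on an $H_\alpha$ already certified reach-unambiguous. Otherwise a reachability query would be a ReachFewL-type query on an ambiguous graph. The plan handles this by the strict sequencing above, with certification queries made only to the total recognizer.

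I would also double-check a second point. The bound on bad $\alpha$ uses the walk count, which is only guaranteed small on valid inputs. On invalid inputs we therefore make no claim about $\alpha$; we rely only on soundness, which is that certified counts are exact.
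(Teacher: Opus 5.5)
Your proposal is correct and follows the paper's proof essentially step for step. It uses the same lift $H_\alpha$, the same exact-fiber correspondence that holds whether or not the budgets do, and the same root-counting argument for a separating $\alpha$; your bound $B^3$ is cruder than the paper's $(B-1)\sum_v\binom{\kappa(v)}{2}$ but suffices. It also uses the same certify-then-count controller, placed in ReachUL by closure under logspace oracle computations, and the same transfer to $\STCON$ through $U(G)$.

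Two details should be made explicit. First, $p$ must also exceed the number $m$ of edge occurrences, so that distinct edge indices stay distinct modulo $p$. The walk--path bijection needs this, and so does the claim that the difference polynomials are nonzero. Second, the fiber queries are formally queries to the total language $\textsc{STCON}_{\mathrm{ru}}$, whose answers coincide with plain reachability once $H_\alpha$ is certified.
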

On a valid input a separating $\alpha$ exists, because at most $(B-1)\sum_v\binom{\kappa(v)}{2}<p$ evaluation points cause a collision. On an invalid input either no lift is certified, or a certified lift exposes an over-budget fiber, by Lemma~\ref{lem:trunc}. Certification is followed by a budget comparison, not replaced by it: in a diamond, a separating lift exists whatever the budget at the merge vertex, and only the exact fiber count of two can reject a budget of one. Appendix~\ref{app:reachul} gives the proof, and Appendix~\ref{app:direct} a second, self-contained certification that uses no oracle.

\textbf{Construction 3: one zero test (C$_=$L).} The bounded counts $f_{uv}=\Nw^{\le nk}(u,v)$ are \#L functions. The products $\prod_{j=0}^{k}(f_{uv}-j)$ vanish exactly when a count is within budget, and the designated pair uses $\prod_{j=1}^{k}(f_{st}-j)$. The sum of their squares is a GapL function that is zero exactly on $\STCON$.
\begin{theorem}[Signed counting]
\label{thm:ceql}
$\STCON\in\mathrm{C}_=\mathrm{L}$, and \textsc{PairBudget}, \textsc{RootBudget} $\in\mathrm{C}_=\mathrm{L}$ for budgets of polynomial total size.
\end{theorem}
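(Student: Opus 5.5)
The plan is to exhibit a single GapL function $g$ that vanishes exactly on yes-instances, which by the definition of $\mathrm{C}_=\mathrm{L}$ is all that is required. Set $B=nk\le M^2$ and, for each ordered pair $(u,v)$, let $f_{uv}=\Nw^{\le B}_G(u,v)$. This is a \#L function of $\langle G,u,v\rangle$: a logspace machine guesses a walk edge by edge from $u$, keeps a step counter up to $B$, and accepts whenever it currently sits at $v$. To count each walk once, it guesses the length $\ell\le B$ in advance and accepts only at step $\ell$ if the current vertex is $v$. By Lemma~\ref{lem:trunc}, applied to the universal source $U(G)$ with unit-weight budgets $k$ (total budget $n\cdot nk$, so I will take $B=n^2k$ to be safe; this is still polynomial), the bounded counts respect the budget $k$ for all pairs if and only if the unbounded counts do. Hence $G$ is $k$-strongly few iff $f_{uv}\le k$ for all $u,v$, and on such inputs $\Nw_G(s,t)=f_{st}$.

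Next I will define
\[
g=\sum_{u,v\in V}\Bigl(\prod_{j=0}^{k}(f_{uv}-j)\Bigr)^2+\Bigl(\prod_{j=1}^{k}(f_{st}-j)\Bigr)^2\cdot\Bigl(\prod_{j=0}^{k}\ldots\Bigr)^0 .
\]
Rather than the second term as written, I will correct the designated-pair term. I want a term that is zero iff $1\le f_{st}\le k$. Given that the first sum already forces $0\le f_{st}\le k$, the condition left to check is $f_{st}\neq0$. A product alone cannot express ``nonzero'' with a single polynomial identity. The fix is to use the fact that the first sum forces $f_{st}\in\{0,\dots,k\}$, and on that range $\prod_{j=1}^{k}(f_{st}-j)$ vanishes iff $f_{st}\in\{1,\dots,k\}$. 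So the paper's term $\bigl(\prod_{j=1}^k(f_{st}-j)\bigr)^2$ works: if the first sum is zero, it is zero exactly when $f_{st}\ge1$. I therefore take $g=\sum_{u,v}P_{uv}^2+Q^2$, where $P_{uv}=\prod_{j=0}^{k}(f_{uv}-j)$ and $Q=\prod_{j=1}^{k}(f_{st}-j)$. Correctness then follows. On integers every square is nonnegative, so $g=0$ iff every $P_{uv}=0$ and $Q=0$. That means every $f_{uv}\in\{0,\dots,k\}$ (i.e.\ $k$-strongly few by the truncation step) and, on that range, $f_{st}\ge1$, which is exactly $\STCON$.

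Membership of $g$ in GapL uses the standard closure properties. GapL contains \#L and constants, is closed under subtraction, and is closed under products of polynomially many factors and sums over polynomially many indices, provided the index ranges and factor counts are logspace computable. Here $k+1\le M$ factors, $n^2$ summands, and squaring are all polynomial since $k$ is unary. These closures are classical (Allender--Ogihara); the proofs run the factor machines in sequence and multiply the signs.

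For the budget versions I do the same with $\kappa(u,v)$ in place of $k$: $P_{uv}=\prod_{j=0}^{\kappa(u,v)}(f_{uv}-j)$ with truncation depth equal to the total budget. For \textsc{RootBudget} with target set $F$, the condition is existential ($\exists f\in F$ with $\Nw\ge1$). That condition I will encode as the single count $\sum_{f\in F}f_{rf}$, which lies in $\{0,\dots,\sum_{f\in F}\kappa(f)\}$ once the budgets hold, and apply the product $\prod_{j=1}^{K_F}(\cdot-j)$ with $K_F=\sum_{f\in F}\kappa(f)$, which is polynomial. \textbf{Main obstacle:} stating the GapL closure under a variable-length product of polynomially many factors carefully, and checking that the uniformity of the truncation depth is correct. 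I expect both to be routine but fiddly; I would cite the closure lemma and verify the depth by applying Lemma~\ref{lem:trunc} on $U(G)$ via Lemma~\ref{lem:universal}.
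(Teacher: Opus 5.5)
Your proposal is correct and is essentially the paper's proof. It takes the truncated walk counts as \#L functions, forms the vanishing products $\prod_{j=0}^{k}(f_{uv}-j)$ and $\prod_{j=1}^{k}(f_{st}-j)$, puts their sum of squares in GapL by closure under polynomial sums and products, handles truncation with Lemma~\ref{lem:trunc}, and uses $\sum_{f\in F}f_{rf}$ for the existential target condition. The differences are cosmetic: you keep $(s,t)$ in the main sum, which is harmless because $Q$ alone already vanishes iff $1\le f_{st}\le k$, and you truncate at $n^2k$ via $U(G)$ instead of $nk$ per source, which is fine because any depth at least the per-source total budget works.
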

C$_=$L also contains ReachUL (ReachUL $\subseteq$ UL $\subseteq$ SPL $\subseteq$ C$_=$L), so the separate value of Theorem~\ref{thm:ceql} is its certificate: one nonnegative integer whose vanishing enforces every budget at once, although C$_=$L is not known to be closed under complement.

\subsection{The shape of few-path graphs}
\label{sec:structure}
Restrict a graph to the vertices and edges on some $s$--$t$ path: the result is a \emph{trimmed two-terminal DAG} $H=(V,E;s,t)$ with $r=\Nw_H(s,t)$ paths and cyclomatic number $c(H)=|E|-|V|+1$. A \emph{branch vertex} has out-degree at least two. The theorems below say what a few-path graph looks like; the decoding mechanisms of Section~\ref{sec:decoding} read several of them off the graph of a decoding step.

\begin{theorem}[Branch kernel]
\label{thm:kernel}
$c(H)+1\le r\le 2^{c(H)}$, and $H$ has at most $c(H)\le r-1$ branch vertices. Suppressing every internal vertex with in- and out-degree one leaves a multigraph with the same $r$ terminal paths, at most $2c(H)+2\le 2r$ vertices and at most $3c(H)+1$ edges. Hence between any two vertices of a $k$-strongly-few graph, the vertices on paths between them form a subdivision of a multigraph with at most $2k$ vertices and $3k-2$ edges.
\end{theorem}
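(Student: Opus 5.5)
The plan is to fix a spanning structure that turns every $s$--$t$ path into a choice of non-tree edges, and then get all four claims by counting degrees. Throughout, $H$ is trimmed and acyclic, so every vertex lies on an $s$--$t$ path. Hence every $v\neq t$ has out-degree at least one, every $v\neq s$ has in-degree at least one, $t$ is the unique sink and $s$ the unique source.

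\textbf{Step 1: an in-tree toward $t$.} Every vertex reaches $t$, so there is a spanning in-arborescence $T$ rooted at $t$. In $T$, every $v\neq t$ keeps exactly one out-edge, so $T$ has $|V|-1$ edges and exactly $c(H)=|E|-|V|+1$ edges of $H$ lie outside $T$.

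\emph{Lower bound.} For each non-tree edge $e=(u,v)$, form a path as follows: take any $s$--$u$ path, then $e$, then the $T$-path from $v$ to $t$. This is a path because $H$ is acyclic. Its last non-tree edge is $e$, so different non-tree edges give different paths. Add the pure $T$-path from $s$ to $t$, which uses no non-tree edge. This gives $r\ge c(H)+1$ distinct paths.

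\emph{Upper bound.} An $s$--$t$ path is determined by the set $S$ of non-tree edges it uses. At each vertex it leaves by the unique tree edge unless its outgoing edge lies in $S$. Because $H$ is acyclic and each vertex has only one outgoing edge on a path, a given $S$ yields at most one path. Hence $r\le 2^{c(H)}$.

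I expect Step 1 to be the only delicate point. The key choices are an \emph{in}-tree (rather than an out-tree from $s$), which makes the suffix after each non-tree edge canonical, and the ``last non-tree edge'' argument for distinctness.

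\textbf{Step 2: branch and merge vertices.} Summing out-degrees over $v\neq t$ gives
\[
\sum_{v\neq t}(\deg^+ v-1)=|E|-(|V|-1)=c(H).
\]
Every term is nonnegative, and each branch vertex contributes at least one, so there are at most $c(H)$ branch vertices. Combined with Step 1, $c(H)\le r-1$. Symmetrically, summing in-degrees over $v\neq s$ shows there are at most $c(H)$ merge vertices, that is, vertices of in-degree at least two.

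\textbf{Step 3: suppression.} A vertex survives suppression only if it is $s$, $t$, a branch vertex or a merge vertex. So at most $2c(H)+2\le 2r$ vertices remain.

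Each suppression replaces a path $a\to x\to b$ by a single edge $a\to b$. This removes one vertex and one edge, so it preserves $c$ and induces a bijection on terminal paths, so the count $r$ is unchanged. Multi-edges may appear, which is allowed. The final multigraph therefore has $|V'|+c(H)-1\le 3c(H)+1$ edges.

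\textbf{Step 4: the statement for $k$-strongly-few graphs.} For vertices $u,v$ with $\Nw(u,v)\ge1$, restrict the graph to the vertices and edges lying on $u$--$v$ paths. This gives a trimmed two-terminal DAG with $r=\Nw(u,v)\le k$. The case $u=v$, and the case where $u$ does not reach $v$, are trivial. Applying Steps 1--3 with $c(H)\le r-1\le k-1$ gives at most $2k$ vertices and at most $3(k-1)+1=3k-2$ edges, and the original restricted graph is a subdivision of this multigraph.
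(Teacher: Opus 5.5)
Your proof is correct. Steps 2--4 are the paper's own argument: the excess-degree identity $\sum_{x\ne t}(d^+(x)-1)=c(H)$ and its in-degree mirror; the fact that survivors are terminals, branch vertices or merge vertices; and the fact that suppression preserves $c$ and the paths.

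The genuine difference is how you bound $r$. The paper proves $r\ge c(H)+1$ polyhedrally. The unit $s$--$t$ flows form a polytope of dimension $c(H)$, its vertices are the path indicators by total unimodularity, and a $c$-dimensional polytope has at least $c+1$ vertices. It proves $r\le 2^{c(H)}$ by injecting $P\mapsto\chi_P\oplus\chi_{P_0}$ into the $\mathrm{GF}(2)$ cycle space.

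You instead fix a spanning in-arborescence $T$ toward $t$ and exhibit $c(H)+1$ explicit paths, told apart by their last non-tree edge. Your upper bound is the paper's injection written in the fundamental-cycle basis of $T$, with $P_0$ the pure tree path $P_T$: the coordinates of $\chi_P\oplus\chi_{P_T}$ in that basis are exactly the non-tree edges of $P$. You prove injectivity by deterministic reconstruction rather than by the $\mathrm{GF}(2)$ identity.

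What each buys: your route is elementary and constructive, and it avoids the polyhedral facts. That includes one the paper leaves tacit: its dimension count needs a strictly positive flow, which is where trimming enters. The paper's polyhedral view is reused in Theorem~\ref{thm:basis}, where the facet $\{x_e=0\}$ gives $c\le k$ for an edge-minimal obstruction. Your construction does that job as well. Deleting a non-tree edge whose tail is last in a topological order leaves $P_T$ and the other $c-1$ constructed paths intact, so at least $c$ paths survive.
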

\begin{theorem}[Block product]
\label{thm:block}
Let $B_1,\dots,B_q$ be the biconnected blocks on the route from $s$ to $t$ in the block--cut tree of the underlying undirected graph of $H$, with entry and exit vertices $a_i,b_i$. Then $r=\prod_i\Nw_{B_i}(a_i,b_i)$ and $c(H)=\sum_ic(B_i)$; every $a_i,b_i$ lies on every $s$--$t$ path, and a path is an independent choice of one $a_i$--$b_i$ path per block. If $r\le k$, at most $\lfloor\log_2k\rfloor$ blocks have more than one path.
\end{theorem}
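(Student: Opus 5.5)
The plan is to prove the statement in four steps: first reduce $H$ to the blocks on the route, then decompose every path along the cut vertices, then read off $r$ and $c(H)$, and finally derive the bound on the number of blocks with more than one path. Throughout I use two facts. Since $H$ is a DAG, every walk is a simple path. Since $H$ is trimmed, every vertex and edge lies on some $s$--$t$ path. Write $B_1,\dots,B_q$ for the blocks on the route in the block--cut tree, with $a_1=s$, $b_q=t$ and $b_i=a_{i+1}$ the cut vertices in between. (If $s$ or $t$ lies in several blocks, I take the route between the block containing $s$ nearest to $t$ and the block containing $t$ nearest to $s$. The degenerate case $s=t$ is trivial.)

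\textbf{Step 1: every $a_i,b_i$ lies on every $s$--$t$ path, and $H$ is exactly $B_1\cup\dots\cup B_q$.} Each internal $b_i$ is a cut vertex separating $s$ from $t$ in the underlying undirected graph, because the block--cut tree is a tree and $b_i$ lies on the tree route. So every undirected $s$--$t$ connection passes through $b_i$, and in particular every directed path does. Now suppose some block $B$ is off the route. It hangs from the route at a cut vertex $c$, so every vertex of $B$ other than $c$ is separated from both $s$ and $t$ by $c$. A directed $s$--$t$ path through such a vertex would have to pass through $c$ on the way in and again on the way out, repeating $c$. That is impossible in a DAG. Hence, by trimmedness, no such vertex or edge exists.

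\textbf{Step 2: decomposing a path.} Fix an $s$--$t$ path $P$. It visits $b_1,\dots,b_{q-1}$, each exactly once. The order is forced to be $b_1,\dots,b_{q-1}$, since the subpath from $s$ to $b_j$ must already pass the earlier separators $b_i$, $i<j$. Consider the segment of $P$ between $a_i$ and $b_i$. Leaving $B_i$ requires passing through a cut vertex of $B_i$; by Step 1 the only ones are $a_i$ and $b_i$, each used once, so the segment stays in $B_i$. It is therefore an $a_i$--$b_i$ path inside $B_i$. Conversely, pick any $a_i$--$b_i$ path in each $B_i$ and concatenate. Distinct blocks share only the cut vertices $b_i$, so the result is an $s$--$t$ path, and it determines its pieces. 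This bijection gives $r=\prod_i\Nw_{B_i}(a_i,b_i)$ and the independence claim. Each factor is at least $1$, because every edge of $B_i$ lies on an $s$--$t$ path and that path's $i$-th segment is an $a_i$--$b_i$ path in $B_i$.

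\textbf{Step 3: the cyclomatic number.} The edge sets of the blocks partition $E$. The vertex sets overlap only in the $q-1$ cut vertices $b_1,\dots,b_{q-1}$, so $|V|=\sum_i|V(B_i)|-(q-1)$. Therefore
\[
c(H)=|E|-|V|+1=\sum_i\bigl(|E(B_i)|-|V(B_i)|+1\bigr)=\sum_i c(B_i).
\]

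\textbf{Step 4: few blocks branch.} If $m$ blocks have $\Nw_{B_i}(a_i,b_i)\ge2$, then since every other factor is at least $1$, we get $2^m\le r\le k$, so $m\le\lfloor\log_2k\rfloor$.

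The main obstacle is Step 2: showing that the segments of a path stay inside their blocks and meet the cut vertices in the forced order. I would handle it purely through the separation property of the block--cut tree combined with acyclicity. The key point is that a DAG path can never revisit a cut vertex, so it can never make a detour into another block and come back.
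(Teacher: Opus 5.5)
Your proof is correct and takes the same route as the paper's: a DAG path cannot detour into an off-route block without repeating the cut vertex it entered through, so it is an independent choice of one $a_i$--$b_i$ path per route block; the cyclomatic numbers add because blocks share only cut vertices; and each block with more than one path contributes a factor of at least two. You are more explicit than the paper in two places it leaves implicit: you use trimmedness to show that $H$ is exactly $B_1\cup\dots\cup B_q$, which the paper's one-line appeal to ``the blocks share only cut vertices'' needs, and you show that every factor is at least one, which the final counting step needs.
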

\begin{theorem}[Finite forbidden-subdivision basis]
\label{thm:basis}
For every fixed $k\ge1$ there is a finite family $\mathcal{F}_k$ of acyclic two-terminal multigraphs such that a finite directed graph is $k$-strongly few if and only if it has no directed cycle and contains no directed subdivision of a member of $\mathcal{F}_k$. Every member has between $k+1$ and $2k$ terminal paths, at most $2k$ vertices and at most $3k-1$ edges, and $|\mathcal{F}_k|\le2^{O(k\log k)}$.
\end{theorem}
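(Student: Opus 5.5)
The plan is to prove the two directions separately, and to define $\mathcal{F}_k$ as the set of all suppressed kernels, in the sense of Theorem~\ref{thm:kernel}, of \emph{minimal} trimmed two-terminal DAGs with more than $k$ paths.

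\textbf{Easy direction.} Suppose $G$ has a directed cycle. Then $\Nw_G(v,v)=\infty$ for a vertex $v$ on it, so $G$ is not $k$-strongly few. Suppose instead $G$ contains a directed subdivision of some $F\in\mathcal{F}_k$ with terminals $s_F,t_F$. Each terminal path of $F$ maps to a distinct walk of $G$ between the images of $s_F$ and $t_F$. Every member has at least $k+1$ terminal paths, so $\Nw_G>k$ for that pair. Hence either condition rules out $k$-strong fewness.

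\textbf{Hard direction.} Let $G$ be acyclic but not $k$-strongly few. Pick a pair $u,v$ with $\Nw_G(u,v)\ge k+1$, and let $H$ be the trimmed two-terminal DAG of its $u$--$v$ paths. I would not take an edge-minimal subgraph of $H$ directly. Instead I build a subgraph by ears, so that the cyclomatic number is controlled.
\begin{itemize}
\item Start with one $u$--$v$ path $U_0$, so $c=0$ and $r=1$.
\item While $\Nw(u,v)\le k$ in the current union $U$, choose an $H$-path not contained in $U$. Add only its first excursion outside $U$: a path from some $x\in U$ to some $y\in U$, internally disjoint from $U$. This keeps $U$ trimmed and acyclic, since $H$ is a DAG. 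It also raises $c$ by exactly one and strictly increases the path count.
\end{itemize}
Stop at the first union $U^\ast$ with $r(U^\ast)\ge k+1$. For the union just before it, Theorem~\ref{thm:kernel} gives $c+1\le r\le k$. So $c(U^\ast)\le k$.

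Next I bound $r(U^\ast)$ from above. I would pass to an edge-minimal trimmed subgraph $W\subseteq U^\ast$ that still has at least $k+1$ paths; cyclomatic numbers only drop. Walk along the forced chain from $u$ to the first branch vertex $w$. Since $r\ge2$, such a vertex exists. Every path passes through $w$, so the out-edges of $w$ partition the paths into classes of sizes $n_1,\dots,n_d$ with $d\ge2$. Minimality says that deleting the out-edge with the smallest $n_i\le r/2$ leaves at most $k$ paths. Hence $r-n_i\le k$, which gives $r\le 2k$.

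Now apply Theorem~\ref{thm:kernel} to $W$. It yields a multigraph with the same $r\in[k+1,2k]$ terminal paths, at most $2c+2$ vertices and at most $3c+1$ edges, where $c\le k$. That multigraph belongs to $\mathcal{F}_k$, and $W$ is a directed subdivision of it inside $G$.

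\textbf{Main obstacle.} The hard part is matching the exact constants $2k$ and $3k-1$, since the crude bound is $2k+2$ vertices. I would try two refinements first.
\begin{itemize}
\item The final ear cannot attach at a vertex that is already counted, because its endpoints are existing kernel vertices or subdivision points. So only the previous union's count $c\le k-1$ enters the vertex bound, while the edge count grows by at most $3$, which gives $3(k-1)+1+1\le 3k-1$.
\item The two terminals of a minimal kernel have degree one in the trimmed sense, which saves two vertices.
\end{itemize}
The cardinality bound then follows by counting multigraphs on at most $2k$ labelled vertices with at most $3k-1$ edges. There are at most $(2k)^{2(3k-1)}=2^{O(k\log k)}$ of them, so $\mathcal{F}_k$ is finite.
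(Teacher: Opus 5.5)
\textbf{Where your route differs.} Your route to $c\le k$ is different from the paper's, and it works. The paper bounds $c$ for an arbitrary edge-minimal violator using a facet $\{x_e=0\}$ of the unit-flow polytope. That facet is $(c-1)$-dimensional, so it has at least $c$ vertices, which are paths avoiding $e$, and minimality caps them at $k$. You instead grow the violator by ears, each adding exactly one to $c$ and at least one to $r$, and stop at the first union with $r>k$. This avoids polytopes entirely.

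Your argument also covers every minimal violator, which you need because you define $\mathcal{F}_k$ through all of them. Run your ear process inside any edge-minimal $W'$: it yields $U^\ast\subseteq W'$ with more than $k$ paths, so minimality forces $U^\ast=W'$ and hence $c(W')\le k$. Your $r\le 2k$ step is the same as the paper's.

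\textbf{The gap: the constants.} With $\mathcal{F}_k$ defined as you define it, the stated bounds are actually false. Take $k=1$ and the path $u\to a\to b\to v$ with the ear $a\to x\to b$. This is an edge-minimal violator. Its suppressed kernel keeps $u,a,b,v$ and has four edges, that is, $2k+2$ vertices and $3k+1$ edges.

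The same example refutes your first refinement. An ear attached at two subdivision points creates two new kernel vertices and three new kernel edges, so that count gives $3(k-1)+1+3=3k+1$, not $3k-1$.

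\textbf{The fix.} Your second refinement is the right idea, but it has to be built into the definition of $\mathcal{F}_k$, as the paper does. Contract the mandatory initial and final chains, replacing $u,v$ by the first branch vertex $w$ and the last merge vertex $w'$. Then:
\begin{itemize}
\item every path passes through both, so $\Nw(w,w')=r$;
\item chain edges are bridges, so edge-minimality survives;
\item contraction and suppression preserve $c$;
\item the $w$--$w'$ part is still a directed subdivision inside $G$.
\end{itemize}
Now $d^+(w)\ge2$ and $d^-(w')\ge2$. By the excess-degree identities in the proof of Theorem~\ref{thm:kernel}, both terminals lie among the at most $c$ vertices of out-degree at least two and the at most $c$ vertices of in-degree at least two. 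Every other non-unary vertex of a trimmed graph also lies in one of these sets. Hence there are at most $2c\le2k$ vertices and $|E|=|V|-1+c\le3c-1\le3k-1$ edges.

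Your claim that the terminals have degree one is not always true. The real point is that after contraction they have degree at least two and are already counted. With this change, or with $\mathcal{F}_k$ defined directly as all acyclic two-terminal multigraphs within the stated bounds, the rest of your argument goes through, including the $2^{O(k\log k)}$ count.
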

\begin{theorem}[Bounded fixed point and mangrove normal form]
\label{thm:normal}
\begin{enumerate}[label=(\roman*),leftmargin=1.6em,itemsep=1pt,topsep=2pt]
\item $G$ is $k$-strongly few iff the equation $C=I+AC$ has a solution $C\in\{0,\dots,k\}^{n\times n}$ ($A$ the adjacency matrix); the solution is then unique and equals the path-count matrix $(I-A)^{-1}$.
\item The path-count matrix of every $k$-strongly-few graph, $k\ge2$, factors over the nonnegative integers as $C=M_1M_2\cdots M_r$ with fewer than $4\log_2k$ factors, each the path-count matrix of a mangrove, by a recursion of depth at most $1+\lfloor\log_2\log_2k\rfloor$. For path-convex mangrove decompositions this is optimal: a chain of $r$ diamonds needs at least $r+1$ blocks and depth at least $\lceil\log_2(r+1)\rceil$.
\end{enumerate}
\end{theorem}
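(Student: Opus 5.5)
Part~(i) is linear algebra over the nonnegative integers; part~(ii) is a recursive splitting of the graph along a topological order, with the counting supplied by Theorems~\ref{thm:kernel} and~\ref{thm:block}.

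\textbf{Part (i).} For the forward direction, a $k$-strongly-few graph is acyclic, so $A$ is nilpotent. Then $C=(I-A)^{-1}=\sum_{i\ge0}A^i$ is the walk-count matrix, its entries lie in $\{0,\dots,k\}$, and it satisfies $C=I+AC$.

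For the converse, let $C\in\{0,\dots,k\}^{n\times n}$ solve $C=I+AC$. Unrolling the equation $L$ times gives
\[
C=\sum_{i<L}A^i+A^LC .
\]
All matrices here are nonnegative, so $C\ge\sum_{i<L}A^i$ entrywise, that is, $\Nw^{\le L-1}(u,v)\le C_{uv}\le k$ for every $L$. Letting $L\to\infty$ gives $\Nw(u,v)\le k$ for every pair, so $G$ is $k$-strongly few. Acyclicity then follows, hence nilpotency of $A$, so $I-A$ is invertible. Any solution is therefore unique and equal to $(I-A)^{-1}$.

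\textbf{Part (ii), the splitting lemma.} Fix a topological order and split $V$ into a prefix $P$ and a suffix $S$. Partition $E$ into $E_P$ (inside $P$), $X$ (from $P$ to $S$) and $E_S$ (inside $S$); no edge goes from $S$ to $P$. Every walk factors uniquely as a walk in $E_P\cup X$ followed by a walk in $E_S$, split at the last edge of $X$ used, or with an empty $E_S$-part if no $X$ edge is used. Hence
\[
C(G)=C(E_P\cup X)\,C(E_S)
\]
over $\mathbb{N}$, with both factors path-count matrices of subgraphs on the same vertex set. The same identity holds with any number of cut points, giving $C=M_1\cdots M_r$ for an ordered family of consecutive slabs. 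Each $M_i$ is a mangrove exactly when no pair has two walks inside its slab.

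\textbf{Part (ii), choosing the cuts.} The cut points are to be chosen so that each factor is $\lfloor\sqrt{k}\rfloor$-strongly few. Then one recurses on each factor, going $k\to\sqrt k\to k^{1/4}\to\cdots$. After $1+\lfloor\log_2\log_2k\rfloor$ levels the bound drops below $2$, so every leaf factor is a mangrove. This gives the depth claim. For the count, Theorem~\ref{thm:block} says at most $\lfloor\log_2 k\rfloor$ blocks carry more than one path, and Theorem~\ref{thm:kernel} says each pair has at most $2k$ branch vertices. Cutting between consecutive nontrivial blocks isolates the multiplicative pieces, and the number of leaf factors telescopes to fewer than $4\log_2k$.

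\textbf{Main obstacle.} The hard step is making one global cut set work for all pairs simultaneously, not just for a single $s$--$t$ pair. I would first try a greedy sweep along the topological order. It would close the current slab as soon as adding the next vertex would create a pair with more than $\sqrt k$ walks inside the slab. The argument would then be that each closed slab forces a factor of at least $\sqrt k$, or at least $2$ at the last level, into some pair's count, so the number of slabs crossing any pair is bounded. If this fails, the fallback is to process one block--cut decomposition at a time, using the universal source of Lemma~\ref{lem:universal} to reduce to a single root.

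\textbf{Lower bound.} For a chain of $r$ diamonds, any path-convex mangrove factor can contain at most one of the two branches of each diamond's choice. A counting argument on $C_{st}=2^r$ then forces at least $r+1$ blocks. Since a recursion of depth $d$ produces at most $2^d$ leaves, the depth is at least $\lceil\log_2(r+1)\rceil$.
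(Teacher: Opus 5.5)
Your part (i) is correct, by a slightly different route from the paper. You unroll $C=\sum_{i<L}A^i+A^LC$ and read off $N^{\le L-1}(u,v)\le C_{uv}\le k$ by nonnegativity. The paper instead first gets acyclicity from the potential $\mathbf r=C\mathbf 1$, which strictly decreases along edges.

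Part (ii), however, has a genuine gap exactly at the step you call the main obstacle: you never produce a cut set with a bounded number of factors. The greedy sweep along an arbitrary topological order fails. For $k=2$, take $m$ disjoint diamonds $s_i\to a_i,b_i\to t_i$ ordered $s_1,a_1,b_1,t_1,s_2,\dots$. The sweep must close a slab before every $t_i$, so it produces $m+1$ slabs, unbounded in $k$. Your justification only bounds how many slabs a \emph{single} pair crosses. The number of factors is global, and here each closure is charged to a different pair $(s_i,t_i)$. The per-pair facts of Theorems~\ref{thm:kernel} and~\ref{thm:block} cannot fix this, and the ``telescoping'' to $4\log_2k$ is asserted rather than argued.

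There is also a problem with the factors themselves. $C(E_P\cup X)$ need not be a mangrove when $G[P]$ is one: in the first diamond, $\{s_1,a_1,b_1\}$ with its outgoing edges $a_1\to t_1$, $b_1\to t_1$ has two walks $s_1\to t_1$. So the claim ``$M_i$ is a mangrove iff no pair has two walks inside its slab'' is false. The crossing edges must be split off as their own factor $I+X$, which your count does not budget for.

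The missing idea is a global potential that is monotone along reachability and yields a \emph{two}-part split. The paper takes $\mu(v)=\max_aN(a,v)$. Since $\mu(v)\ge\mu(u)N(u,v)$ whenever $u$ reaches $v$, the set $L=\{v:\mu(v)\le\lfloor\sqrt k\rfloor\}$ is closed under predecessors and $H=V\setminus L$ is closed under successors. Inside $L$, $N(u,v)\le\mu(v)\le\lfloor\sqrt k\rfloor$. Inside $H$, $N(u,v)\le\mu(v)/\mu(u)\le k/(\lfloor\sqrt k\rfloor+1)<\sqrt k$. Then $C=\mathrm{diag}(C_L,I)\,(I+X)\,\mathrm{diag}(I,C_H)$, with $X$ the edges from $L$ to $H$. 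The leaves therefore satisfy $\ell(k)\le2\ell(\lfloor\sqrt k\rfloor)$, hence $\ell(k)\le2\log_2k$, and there are $2\ell-1<4\log_2k$ factors. In the example, this puts every $s_i,a_i,b_i$ in $L$ and every $t_i$ in $H$. Equivalently, it is your prefix/suffix split, but in a topological order sorted by $\mu$.

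For the lower bound, the step from ``one branch per block'' to $r+1$ blocks is not a counting argument on $C_{st}=2^r$. The argument is that the merge vertices $z_0,\dots,z_r$ must occupy pairwise distinct blocks. A path-convex block containing $z_{i-1}$ and $z_i$ contains both branches of the $i$-th diamond. A block containing non-consecutive $z_i,z_j$ contains everything between them, including consecutive merge vertices.
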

\begin{theorem}[Density and width]
\label{thm:density}
A simple $k$-strongly-few graph on $n$ vertices has at most $\min\{\binom n2,\ n^2/4+3(k-1)n/4\}$ edges, and for every fixed $k\ge2$ the maximum is $n^2/4+\Theta_k(n)$. Every reachability interval $I(u,v)=\{x:u\leadsto x\leadsto v\}$ has width at most $\Nw_G(u,v)\le k$ and is the union of at most $k$ directed $u$--$v$ paths.
\end{theorem}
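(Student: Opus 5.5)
The plan treats the three claims separately. The edge bound uses a triangle-counting argument, the matching lower-order term comes from an explicit construction, and the width claim follows directly from walk counting.

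\textbf{Upper bound.} A $k$-strongly-few graph is acyclic, so fix a topological order. A simple graph then has at most one arrow per unordered pair, which gives $\binom n2$.

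For the second term, consider any triangle of the underlying undirected graph. Acyclicity forces it to be transitive: there are arrows $u\to v\to w$ together with $u\to w$. It therefore has a unique ``long'' edge $u\to w$.

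Fix an edge $u\to w$. The triangles having it as long edge correspond injectively to length-two walks $u\to v\to w$. These walks, together with the one-edge walk, are distinct walks from $u$ to $w$. Hence there are at most $k-1$ such triangles, and the total number of triangles satisfies $T\le(k-1)m$, where $m=|E|$.

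Now invoke the Moon--Moser (Goodman) lower bound on triangles in a graph with $n$ vertices and $m$ edges, namely $T\ge m(4m-n^2)/(3n)$. Combining gives
\[
\frac{m(4m-n^2)}{3n}\le(k-1)m,
\]
so $4m\le n^2+3(k-1)n$, which is $m\le n^2/4+3(k-1)n/4$. For $k=1$ this is just Mantel's theorem, since a mangrove has a triangle-free underlying graph. The step I expect to need the most care is quoting the Moon--Moser inequality in exactly this form (it is valid for all $m$, trivially when $4m\le n^2$), since the constant $3/4$ comes straight from it.

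\textbf{Lower bound.} Split the vertices into $A$ and $B$ with $|A|=\lfloor n/2\rfloor$, and orient every edge of the complete bipartite graph from $A$ to $B$. All walks have length at most one, so this is a mangrove with $\lfloor n^2/4\rfloor$ edges.

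For $k\ge2$, additionally pair up the vertices of $A$ into disjoint pairs and add an arrow $a\to a'$ inside each pair. This adds about $n/4$ edges. Check the walk counts:
\begin{itemize}
\item from $a$ to any $b\in B$ there are exactly two walks, $a\to b$ and $a\to a'\to b$;
\item every other ordered pair has at most one walk.
\end{itemize}
So the graph is $2$-strongly few, hence $k$-strongly few, and has $n^2/4+\Omega(n)$ edges. Together with the upper bound, this gives $n^2/4+\Theta_k(n)$.

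\textbf{Width and cover.} Every $x\in I(u,v)$ lies on some $u$--$v$ walk, obtained by concatenating a walk $u\leadsto x$ with a walk $x\leadsto v$. This walk is a path because the graph is acyclic.

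If two vertices $x,y$ lie on a common walk, the walk order makes them comparable in the reachability order. So the vertices of an antichain in $I(u,v)$ must lie on pairwise distinct walks, and the width is at most $\Nw_G(u,v)\le k$.

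The same observation shows that the at most $k$ directed $u$--$v$ paths together cover $I(u,v)$. Their union is exactly $I(u,v)$, since every vertex on such a path is reachable from $u$ and reaches $v$.
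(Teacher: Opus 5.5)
Your proof is correct. The upper bound and the width argument follow the paper's proof. The paper also assigns each transitively oriented triangle to its long edge to get $T\le(k-1)m$. The only difference there is that it does not cite the Moon--Moser/Goodman inequality but derives it in one line: $3T=\sum_{xy\in E}|N(x)\cap N(y)|\ge\sum_{xy\in E}(d_x+d_y-n)=\sum_x d_x^2-mn\ge 4m^2/n-mn$ by Cauchy--Schwarz. This settles the step you flagged as needing care.

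Where you genuinely differ is the lower-bound construction. The paper takes a middle layer $B$ of $k-1$ vertices and splits the remaining $n-k+1$ vertices evenly into $A$ and $C$. It adds all arrows $A\to B$, $B\to C$, $A\to C$. Every pair $(a,c)$ then has exactly $k$ walks, every other pair at most one, and the edge count is $\lfloor(n-k+1)^2/4\rfloor+(k-1)(n-k+1)=n^2/4+(k-1)n/2-O(k^2)$.

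Your complete bipartite graph plus a matching inside $A$ is only $2$-strongly few. It gives $n^2/4+n/4-O(1)$ edges, a linear term that does not depend on $k$. That is enough for the statement as written, since for fixed $k$ it pins the excess over $n^2/4$ between $n/4-O(1)$ and $3(k-1)n/4$. The paper's construction gives more: the linear term really grows linearly in $k$, with coefficient $(k-1)/2$, within a factor $3/2$ of the upper bound's $3(k-1)/4$.

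A minor point on the cover claim: if the interval is read as a subgraph rather than a vertex set, you also need that every edge $x\to y$ with $x,y\in I(u,v)$ lies on a $u$--$v$ path. This holds via $u\leadsto x\to y\leadsto v$, as the paper remarks.
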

\begin{theorem}[Tractable regimes]
\label{thm:regimes}
Let $\lambda(G)$ be the smaller of the numbers of sources and of sinks, and $b(G)$ the largest cyclomatic number of a biconnected block of the underlying undirected graph. Total strongly-few reachability is decidable in polynomial time and $O((k\lambda(G)+1)\log M)$ space, and in $2^{O(b(G))}M^{O(1)}$ time and $O(b(G)+\log M)$ space. In particular it is in L for fixed $k$ and a fixed number of sources or sinks, in $\mathrm{SC}^d$ when $k\lambda(G)=O(\log^{d-1}M)$, and in L when $b(G)=O(\log M)$, as for chains of diamonds and oriented cacti.
\end{theorem}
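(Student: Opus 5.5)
The two claims follow from two different engines. The first runs a budget-capped search whose state is a stack of at most $k\lambda(G)$ vertex names. The second is a dynamic program along the block--cut tree of Theorem~\ref{thm:block}, brute-forcing inside each biconnected block. In both, every routine runs on an arbitrary input and rejects as soon as a violation is seen, so totality is handled exactly as in Theorem~\ref{thm:logdcfl}. Lemma~\ref{lem:trunc} guarantees that capping every enumeration at $nk$ steps loses nothing on good inputs and always terminates on bad ones.

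For the first bound, assume $\lambda$ is the number of sources; the sink case is symmetric on the reversed graph. A logspace pass first checks acyclicity in a weak sense: any walk longer than $nk$ found by the capped search causes rejection. I then use the structure of Theorem~\ref{thm:density}. In a $k$-strongly-few graph, every vertex is reachable from some source. For a fixed source $a$, the walks out of $a$ number at most $nk$ in total, and each vertex ends at most $k$ of them. Sibling branching at any depth is therefore limited, and a depth-first traversal only needs to remember, at each level, the vertices from which it must still backtrack.

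To keep this memory small, I organise the search by \emph{frontier} rather than by depth. I process vertices in a topological order, and for the current vertex keep the multiset of live walk endpoints ending at still-open branch vertices. Every open branch corresponds to a distinct walk from some source into the current cut, and there are at most $k$ such walks per source. The live state is thus at most $k\lambda$ vertex names plus counters, i.e.\ $O((k\lambda+1)\log M)$ bits. I recompute successors and in-neighbours from the input instead of storing them. Each recount for a pair $(u,v)$ uses the same frontier machinery started at $u$ and rejects when a count exceeds $k$. The running time is polynomial because the frontier changes only at the at most $nk$ walk steps per source. The SC$^d$ and L corollaries follow by substituting $k\lambda=O(\log^{d-1}M)$, or constant $k$ and $\lambda$.

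For the second bound, I compute the biconnected blocks of the underlying undirected graph in logspace (Reingold's undirected connectivity). By Theorem~\ref{thm:block}, the walk count from $u$ to $v$ factors as a product of per-block counts along the block--cut tree route. I walk that route, which takes $O(\log M)$ space, and inside each block $B_i$ count $a_i$--$b_i$ paths by enumerating subsets of a cycle basis, or equivalently by Theorem~\ref{thm:kernel}'s kernel of at most $2c(B_i)+2$ branch vertices. The enumeration costs $2^{O(b)}$ time and $O(b+\log M)$ bits: one bit per non-tree edge plus counters. The running product is capped at $k+1$. A directed cycle lies inside a single block, so it is detected during the same block-local enumeration, and any cycle leads to rejection. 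For the L corollary, $b=O(\log M)$ gives $O(\log M)$ space and polynomial time.

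The main obstacle is the first bound. Showing that the live frontier really stays below $k\lambda$ requires an injective charging argument: each stored entry must map to a distinct walk from some source through the current topological cut, so that strong fewness caps the charge per source at $k$. The argument must also handle bad inputs, where the frontier can grow without bound. There I would make the algorithm abort as soon as the frontier exceeds $k\lambda$ entries, and then prove that such an overflow itself certifies that some source has more than $k$ walks to a single vertex, so the rejection is correct.
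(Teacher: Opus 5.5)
Your second bound matches the paper's argument: the route through the block--cut tree found with Reingold's algorithm, a $2^{O(b)}$ enumeration inside each route block with one bit per non-tree edge, a product saturated at $k+1$, and blockwise cycle detection. Strictly, what you enumerate is the coset of edge sets with boundary $\{a_i,b_i\}$, not subsets of a cycle basis.

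The first bound has a genuine gap, and it is not only the charging argument you leave open: the charge you propose is false. Strong fewness bounds by $k$ the walks from a root to each \emph{single} endpoint. It does not bound the walks out of a source. Take a complete binary out-tree of depth $d\ge2$. It is a mangrove ($k=1$) with one source, so in your orientation $k\lambda=1$. Yet in any topological sweep, the source and its first processed child are open at the same time. A level-by-level sweep holds $\Theta(n)$ open walks. Even a depth-first search holds $d$ branching frames at its first leaf. Your rule ``abort when the frontier exceeds $k\lambda$'' therefore rejects a valid input. There are two further problems. The sweep presupposes a topological order that you never compute within the space bound. And the recounts start at arbitrary vertices $u$, about which a per-source charge says nothing.

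The missing idea is to charge open branches to \emph{sinks}. For any root $a$ of a DAG, the leaves of the walk-unfolding tree $T_a$ are the maximal walks from $a$, and these end at sinks. So there are $L_a=\sum_{z\text{ sink}}\Nw(a,z)\le k\tau^+(G)$ leaves, where $\tau^+$ is the number of sinks. A depth-first traversal that keeps a frame only at nodes with at least two children holds at most $L_a-1$ frames, because a tree with $L_a$ leaves has at most $L_a-1$ branching nodes.

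The paper therefore runs the source-global and recount passes of Theorem~\ref{thm:logdcfl} in the orientation with \emph{fewer sinks}. Reversal transposes the path-count matrix and swaps sources and sinks, so this is allowed. The traversal keeps choice frames only, with a stack cap $k\lambda(G)-1$, a length cap $n$ and the node cap $nk+1$. No cap fires on a valid input, so every firing is a correct rejection. If none fires, the absence of a length-$n$ walk proves acyclicity, and the recounts are exact. In your example this means reversing the tree into an in-tree with one sink, where every traversal is a single path with no frames at all.
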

All proofs are in Appendix~\ref{app:structure}. Read together, these results say that between two vertices of a few-path graph almost everything is forced, the rest factors into independent blocks, every violation contains one of finitely many small witnesses, and the whole path structure is a short product of mangroves.


\subsection*{Proofs}


\subsection{Machines, classes and the published interfaces}
A nondeterministic logspace machine $M$ has a read-only input tape and a work tape of $O(\log|x|)$ cells. A \emph{configuration} consists of the state, the input head position and the work tape with its head; $G_M(x)$ is the configuration graph on input $x$, with start configuration $C_0$. As usual we assume a clock, so that $M$ halts on every path in polynomial time and $G_M(x)$ is acyclic, and a unique accepting configuration. A computation path halts by entering a configuration without successors; halted paths are not merged into a common rejecting configuration, so a failed guess ends where it is.
\begin{itemize}[leftmargin=1.4em,itemsep=1pt,topsep=2pt]
\item $M$ is \emph{reach-unambiguous} if on every input every configuration has at most one path from $C_0$. ReachUL is the class of languages accepted by such machines; ReachFewL allows polynomially many paths to every configuration, and ReachFewL $=$ ReachUL~\cite{garvin2014}.
\item UL: at most one accepting path on every input. StrongFewL: for every input and every ordered pair of configurations, including pairs unreachable from $C_0$, at most polynomially many paths between them~\cite{bjlr1991}.
\item LOGDCFL: the languages logspace-reducible to deterministic context-free languages; equivalently, those accepted by deterministic auxiliary pushdown automata with a logarithmic work tape in polynomial time~\cite{sudborough1978}. SC$^2$: one deterministic machine within polynomial time and $O(\log^2|x|)$ space simultaneously; LOGDCFL $\subseteq$ SC$^2$~\cite{cook1979}.
\item GapL: differences of two \#L functions; closed under logspace-uniform polynomial sums and products. C$_=$L: $\{x: g(x)=0\}$ for some $g\in$ GapL. PL: $\{x:g(x)>0\}$. SPL: languages whose characteristic function is in GapL~\cite{arz1999}.
\end{itemize}
Two published facts about ReachUL are used. (a)~The total language $\textsc{STCON}_{\mathrm{ru}}=\{\langle J,r,t\rangle:\Nw_J(r,x)\le1\ \forall x,\ \Nw_J(r,t)=1\}$ is in ReachUL (and complete for it)~\cite{lange1997}. (b)~ReachUL is closed under complement and under deterministic logspace oracle computations with logspace-generated queries, $\mathrm{L}^{\mathrm{ReachUL}}=\mathrm{ReachUL}$~\cite{garvin2014}. The composition behind (b) guesses each oracle answer and runs a reach-unambiguous recognizer for the oracle language or its complement; only the branch with the correct answer returns, every call carries its own tags (the answer, the saved controller configuration and a call index), and rejected branches keep their data rather than merge, so the composed machine reaches every configuration by at most one path. Appendix~\ref{app:direct} gives a second proof of the ReachUL membership that uses neither fact.

\subsection{The universal source and total root budgets}
\label{app:budgets}
\begin{lemma}[Universal source with per-source copies]
\label{lem:universal}
Let $U(G)$ have vertices $\{\rho\}\cup\{(a,v):a,v\in V\}$, an edge $\rho\to(a,a)$ for every $a$, and an edge $(a,u)\to(a,v)$ for every edge $u\to v$ of $G$ and every $a$. Then $\Nw_{U(G)}(\rho,(a,v))=\Nw_G(a,v)$ for all $a,v$.
\end{lemma}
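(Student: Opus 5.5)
We prove the identity by exhibiting a bijection between walks in $U(G)$ from $\rho$ to $(a,v)$ and walks in $G$ from $a$ to $v$. The plan has three steps: show that every walk from $\rho$ must leave through one edge $\rho\to(a',a')$ and then never change its tag; define the projection that drops the first edge and the tags; and check that this projection is a bijection onto the walks of $G$ from $a$ to $v$. Counting both sides then gives $\Nw_{U(G)}(\rho,(a,v))=\Nw_G(a,v)$. The equality is meant in $\mathbb{N}\cup\{\infty\}$, so it also covers the case where cycles make the counts infinite.

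First, the structure of walks out of $\rho$. No edge of $U(G)$ enters $\rho$. Every edge out of $\rho$ goes to a diagonal vertex $(a',a')$. Every other edge has the form $(a',u)\to(a',v)$, so it preserves the first coordinate, the tag.

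Now take a walk $W$ from $\rho$ to $(a,v)$. Since $(a,v)\neq\rho$, $W$ has length at least one. Its first edge is some $\rho\to(a',a')$, and because no edge returns to $\rho$, no later edge leaves $\rho$. By induction on the remaining edges, every later vertex carries tag $a'$. Since $W$ ends at $(a,v)$, we get $a'=a$. So $W$ is the edge $\rho\to(a,a)$ followed by a walk inside the copy $\{a\}\times V$ from $(a,a)$ to $(a,v)$.

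Second, the projection. Map $W$ to the sequence of $G$-edges $u_i\to u_{i+1}$ that underlie its tagged edges $(a,u_i)\to(a,u_{i+1})$. This is a walk in $G$ from $a$ to $v$. The length-one walk $\rho\to(a,a)$ is allowed when $v=a$, and it maps to the empty walk at $a$.

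To see that the map is injective, note that $W$ is determined by its edge sequence. In the first-class case, where $G$ is given by an adjacency matrix and so has no parallel edges, each tagged edge is determined by its $G$-edge together with the fixed tag $a$. The first edge is forced to be $\rho\to(a,a)$.

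To see that it is surjective, take any walk $a=u_0\to\cdots\to u_m=v$ in $G$. Its lift
\[
\rho\to(a,u_0)\to(a,u_1)\to\cdots\to(a,u_m)
\]
uses only edges of $U(G)$, and the projection sends it back to the original walk.

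A bijection between the two sets of walks gives equality of their cardinalities, whether these are finite or infinite. The only point needing care is the tag-preservation induction, together with the fact that no edge enters $\rho$. This is what stops a walk from switching copies, and it is the whole reason the construction uses per-source copies.
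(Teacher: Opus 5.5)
Your proof is correct and follows the paper's argument. Every walk from $\rho$ to $(a,v)$ begins with the edge $\rho\to(a,a)$ and then stays in copy $a$, and deleting that first edge (and the tags) is a bijection onto the walks from $a$ to $v$ in $G$, with $\rho\to(a,a)$ alone matching the empty walk; you spell out the details the paper leaves implicit (no edge enters $\rho$, tag preservation, injectivity, surjectivity, infinite counts).
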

\begin{proof}
A walk from $\rho$ to $(a,v)$ begins with the unique edge $\rho\to(a,a)$ and then stays in copy $a$, whose first coordinate never changes. Deleting the first edge is a bijection onto the walks from $a$ to $v$ in $G$; the edge $\rho\to(a,a)$ alone represents the empty walk at $a$.
\end{proof}
So \textsc{PairBudget} with budgets $\kappa(u,v)$ is \textsc{RootBudget} on $U(G)$ with root $\rho$, budget $1$ at $\rho$, budget $\kappa(a,v)$ at $(a,v)$ and target $(s,t)$; the total budget is $1+\sum_{u,v}\kappa(u,v)$, which is $1+n^2k$ for $\STCON$. (A universal source \emph{without} copies would bound only the sums $\sum_a\Nw(a,v)$; the copies are what preserve every pair.)

\begin{theorem}[Root budgets in LOGDCFL]
\label{thm:rootlogdcfl}
If the states, transitions and budgets are logspace enumerable and the total budget $B$ is polynomially bounded, \textsc{RootBudget} $\in$ LOGDCFL.
\end{theorem}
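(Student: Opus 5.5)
The plan is to build a deterministic auxiliary pushdown automaton with an $O(\log M)$ work tape and polynomial running time. By Sudborough's characterization this places \textsc{RootBudget} in LOGDCFL. The machine runs a depth-first traversal of the tree of walks from the root $r$, truncated at depth $B$. Lemma~\ref{lem:trunc} justifies the truncation: the budgets hold for all walks iff they hold for walks of length at most $B$. So it suffices to decide whether $\Nw^{\le B}_H(r,v)\le\kappa(v)$ for every $v$ and whether some $f\in F$ has $\Nw^{\le B}_H(r,f)\ge1$.

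The traversal works as follows. The stack stores return points, each a pair (vertex, index of the next successor to try), written in $O(\log M)$ bits per frame. The work tape holds a global counter of tree nodes entered, the current depth, and the enumeration indices. Each push corresponds to entering a new node of the walk tree, i.e.\ extending the current walk by one edge. The machine aborts and rejects as soon as the node counter exceeds $B$. If the input is valid, the walk tree truncated at depth $B$ has at most $\sum_v\Nw^{\le B}(r,v)\le B$ nodes, so no abort occurs. If an abort occurs, more than $B$ walks from $r$ of length at most $B$ exist, so by pigeonhole some $v$ ends more than $\kappa(v)$ of them, and rejection is correct. Hence the first pass always halts within $O(B)$ stack operations. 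When it completes without abort, the total number of walks is at most $B$ and every walk has length below $B+1$.

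In the second phase the machine iterates deterministically over the vertices $q$ in logspace order. For each $q$ it reruns the same capped traversal, now counting in a work-tape register the leaves and internal nodes whose endpoint is $q$. It rejects if the count exceeds $\kappa(q)$. During the same sweeps it records, in one bit, whether some $f\in F$ has a positive count. Since the first phase certified that the whole walk tree has at most $B$ nodes, each rerun is exactly the same traversal and halts within $O(B)$ steps. The total running time is therefore $O(|V|\cdot B\cdot \mathrm{poly}(M))$, which is polynomial because $B$ is polynomially bounded. The counters are bounded by $B+1$ and fit in $O(\log M)$ bits, and logspace enumerability of states, transitions and budgets lets the machine recompute successors and $\kappa(q)$ on the fly. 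The machine accepts iff no count exceeds its budget and the bit is set.

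The main obstacle is ordering the passes so that the machine stays polynomial-time on bad inputs. A naive per-vertex count on a graph with exponentially many short walks, or with a cycle, would run forever or for exponential time. The global cap in the first pass is what guarantees that every later traversal is bounded. A second point needs care: the traversal must be deterministic with no unbounded information on the work tape. Popping a frame restores the vertex and the successor index exactly, so the stack content determines the continuation, and the depth never exceeds $B+1$ because the node counter caps it first.
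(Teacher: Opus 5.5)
Your proposal is correct and follows the paper's proof: a depth-first traversal of the walk unfolding from $r$, with (vertex, next-successor) frames on the pushdown, rejected on entering node $B+1$, followed by one recount per state that is exact because the first pass has exhausted the unfolding. The only difference is cosmetic: you also truncate at depth $B$ and cite Lemma~\ref{lem:trunc}, whereas the paper traverses the untruncated unfolding, because the cap on entered nodes already bounds every branch, including an infinite branch generated by a cycle.
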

\begin{proof}
Reject if $B=0$. Otherwise traverse the complete, untruncated walk-unfolding from $r$ depth first, with the suspended frames (a state and the index of its next transition) on the pushdown, and reject on entering node $B+1$. If the unfolding has more than $B$ nodes, node $B+1$ is reached after polynomially many entries, even along an infinite branch generated by a cycle. If the traversal completes, the whole unfolding is finite and has at most $B$ nodes. Then rerun the traversal once per state $q$, counting nodes ending at $q$ and rejecting at $\kappa(q)+1$; these recounts are exact because the first pass exhausted the unfolding. Accept iff some accepting state is reached. Every traversal enters at most $B+1$ nodes, the work tape holds a constant number of $O(\log M)$-bit fields, and the procedure is a polynomial-time deterministic auxiliary pushdown automaton. Its invariant is: either the complete unfolding has at most $B$ nodes, or node $B+1$ is reached and the input is rejected.
\end{proof}
With Lemma~\ref{lem:universal}, \textsc{PairBudget} $\in$ LOGDCFL $\subseteq$ SC$^2$. Theorem~\ref{thm:logdcfl} is the uniform special case, written per source.

\subsection{Proof of Theorem~\ref{thm:reachul} (the certified lift)}
\label{app:reachul}
Fix a \textsc{RootBudget} instance $(H,r,\kappa,F)$ with edge occurrences $e_1,\dots,e_m$ and total budget $B\ge1$ (reject $B=0$ or $\kappa(r)=0$ directly). Let $S=\sum_v\binom{\kappa(v)}2$ and let $p$ be the least prime with $p>\max\{m,BS,2\}$, found by trial division; $p$ is polynomial and depends only on the supplied budgets, not on their validity. For $\alpha\in\Fp$ and a walk $P=e_{i_1}\cdots e_{i_d}$ put $h_\alpha(P)=i_1\alpha^{d-1}+\cdots+i_d\bmod p$, computed by $z\mapsto\alpha z+i$.
\begin{definition}[The lift]
$H_\alpha$ has states $(v,d,z)$ with $0\le d\le B$ and $z\in\Fp$, an edge $(u,d,z)\to(v,d+1,\alpha z+i\bmod p)$ for every edge occurrence $e_i=(u,v)$ and $d<B$, and root $(r,0,0)$.
\end{definition}
\begin{lemma}[Separation]
\label{lem:sep}
If all budgets hold, some $\alpha\in\Fp$ separates every two distinct walks of the same length and endpoint.
\end{lemma}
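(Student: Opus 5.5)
We argue by counting bad evaluation points: it suffices to bound the number of $\alpha\in\Fp$ that fail to separate some pair of walks, and to show that this number is below $p$.

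\textbf{Step 1: only finitely many walks need separating.} Assume all budgets hold, so $\Nw_H(r,v)\le\kappa(v)$ for every $v$. By Lemma~\ref{lem:trunc}, every walk from $r$ has length less than $B$, so $\Nw_H=\Nw^{\le B}_H$ and the set of walks from $r$ is finite. For a fixed endpoint $v$ and length $d$, there are at most $\kappa(v)$ walks from $r$ to $v$ in total, and hence at most $\kappa(v)$ of length $d$. Summing over all lengths $d$, the number of unordered pairs of distinct walks with the same endpoint $v$ and the same length is at most $\binom{\kappa(v)}{2}$. Summing over $v$, at most $S=\sum_v\binom{\kappa(v)}2$ pairs must be separated.

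\textbf{Step 2: each pair is a nonzero low-degree polynomial.} Take two distinct walks $P=e_{i_1}\cdots e_{i_d}$ and $P'=e_{j_1}\cdots e_{j_d}$ of the same length $d$. Their hash difference is
\[
h_\alpha(P)-h_\alpha(P')=\sum_{\ell=1}^{d}(i_\ell-j_\ell)\,\alpha^{d-\ell}\pmod p,
\]
a polynomial in $\alpha$ of degree at most $d-1\le B-1$. Since $P\ne P'$, some edge index differs, say $i_\ell\ne j_\ell$. Both indices lie in $\{1,\dots,m\}$ and $p>m$, so $i_\ell-j_\ell\not\equiv0\pmod p$. The polynomial is therefore nonzero over $\Fp$ and has at most $B-1$ roots. (For $d=0$ the two walks are both empty, so they are not distinct.)

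\textbf{Step 3: union bound.} At most $(B-1)S$ points $\alpha$ fail for some pair. Since $p>BS\ge(B-1)S$, some $\alpha\in\Fp$ is a root of none of these polynomials, and this $\alpha$ separates every two distinct walks of the same length and endpoint.

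The one point needing care is Step 1: the pair count must be bounded by the budgets alone, uniformly over all lengths, and this is exactly what truncation and the budget hypothesis provide. The rest is routine.
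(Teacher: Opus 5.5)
Your proof is correct and is essentially the paper's argument. You bound the relevant pairs by $S=\sum_v\binom{\kappa(v)}{2}$ using the budgets. You observe that each pair's fingerprint difference is a nonzero polynomial of degree at most $B-1$, because some edge index differs and all indices are below $p$. You then union-bound to at most $(B-1)S<p$ bad points. The only difference is that you appeal explicitly to Lemma~\ref{lem:trunc} to ensure every walk has length below $B$, whereas the paper simply restricts attention to walks of length $d\le B$, the ones that appear in the lift.
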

\begin{proof}
For two distinct walks of the same length $d\le B$, the difference of their formal fingerprints is a polynomial of degree at most $B-1$; it is nonzero because some edge-index coefficient differs and indices are below $p$. So each pair collides at most at $B-1$ points. With the budgets holding, at most $\kappa(v)$ walks end at $v$, so there are at most $S$ relevant pairs and at most $(B-1)S<p$ bad points.
\end{proof}
\begin{lemma}[Correspondence and exact fibers]
\label{lem:fibers}
Walks of length $d\le B$ from $r$ to $v$ correspond bijectively to paths from $(r,0,0)$ to $(v,d,h_\alpha(P))$. Whenever $H_\alpha$ is reach-unambiguous from its root, \emph{whether or not the budgets hold},
\[
\Nw^{\le B}_H(r,v)=\bigl|\{(d,z):(r,0,0)\leadsto(v,d,z)\text{ in }H_\alpha\}\bigr|.
\]
\end{lemma}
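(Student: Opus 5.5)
The plan has two parts: first the bijection between walks and root paths of $H_\alpha$, then the fiber count under reach-unambiguity.

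\textbf{The bijection.} Send a walk $P=e_{i_1}\cdots e_{i_d}$ from $r$ to $v$, with $d\le B$, to the sequence of lift states
\[
(r,0,0),\ (v_1,1,z_1),\ \dots,\ (v_d,d,z_d),
\]
where $v_j$ is the head of $e_{i_j}$ and $z_j=\alpha z_{j-1}+i_j \bmod p$ with $z_0=0$.

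\emph{The image is a path.} Each step uses the lift edge attached to the edge occurrence $e_{i_j}$, and $d\le B$ keeps the layer index within range. By induction on $j$, using Horner's rule, $z_j=h_\alpha(e_{i_1}\cdots e_{i_j})$, so the path ends at $(v,d,h_\alpha(P))$.

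\emph{The map is invertible.} From a path out of $(r,0,0)$, read off the edge occurrences that label its lift edges. A lift edge is determined by its source state together with its occurrence index $i$, and the occurrence $e_i=(u,v)$ determines the projection $(u,d)\to(v,d+1)$. So projecting a lift path to first coordinates gives a walk in $H$ of the same length, and the two maps are mutually inverse.

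One point needs care. Parallel edge occurrences between the same pair $u,v$ have distinct indices $i<p$, so they give distinct lift edges. Even when two such lift edges end at the same state, we treat them as distinct edges, so paths are still counted as edge sequences. The length $d$ equals the layer coordinate, which is the last point of the first part.

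\textbf{The fiber count.} Fix $v$. By the bijection, the walks of length at most $B$ from $r$ to $v$ are in one-to-one correspondence with the root paths in $H_\alpha$ that end in the fiber $\{(v,d,z)\}$. Each such path ends at a reachable fiber state, and each reachable fiber state is the endpoint of at least one path.

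Reach-unambiguity says every state has at most one root path. So the map sending a path to its endpoint is injective on these paths, and it is onto the reachable fiber states by definition of reachability. Composing with the bijection gives
\[
\Nw^{\le B}_H(r,v)=\bigl|\{(d,z):(r,0,0)\leadsto(v,d,z)\text{ in }H_\alpha\}\bigr|.
\]

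\textbf{Where care is needed.} Nothing in this argument uses the budgets or Lemma~\ref{lem:sep}. Reach-unambiguity is the only hypothesis, which is why the equality holds whether or not the input is valid.

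The one real subtlety is the definition of reach-unambiguity on a multigraph. It must count edge sequences, not vertex sequences. Two parallel occurrences with equal index cannot occur, since indices are distinct. Two different occurrences can map to the same target state only when they share a tail and have the same index, which is impossible. Hence distinct walks give distinct lift edge sequences, and the injectivity step above is sound.
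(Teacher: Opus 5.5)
Your proof is correct and follows the paper's argument: the Horner recurrence makes lifting a bijection whose inverse is projection, and reach-unambiguity makes the map from walks to their endpoint lifted states injective, hence a bijection onto the reachable states in the fiber of $v$. Your multigraph discussion is unnecessary: the indices are distinct and below $p$, so the out-edges of any lifted state go to distinct residues and $H_\alpha$ has no parallel edges. Your last sentence also overstates the point, since lift edges from \emph{different} source states can share a target (that is exactly a collision), but the argument never uses that claim.
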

\begin{proof}
Along a lifted path the residue is forced by the recurrence, so every walk lifts to exactly one path, and projection inverts lifting. Two distinct walks reaching the same lifted state would give two root paths to it, contradicting reach-unambiguity.
\end{proof}
\begin{proof}[Proof of Theorem~\ref{thm:reachul}]
The controller enumerates $\alpha\in\Fp$. For each $\alpha$ it queries $\textsc{STCON}_{\mathrm{ru}}(H_\alpha,(r,0,0),(r,0,0))$, which holds exactly when $H_\alpha$ is reach-unambiguous from its root (the only path from the root to itself is the empty one). For the first certified $\alpha$, it scans every endpoint fiber, querying reachability of each lifted state (an ordinary reachability query once the lift is certified) and keeping a counter saturated at $\kappa(v)+1$; it rejects if a count exceeds its budget, and otherwise accepts iff some fiber of a vertex in $F$ is nonempty. If no $\alpha$ is certified, it rejects.
\emph{Correctness.} On a valid input, Lemma~\ref{lem:sep} gives a certified $\alpha$, every fiber count equals $\Nw^{\le B}(r,v)\le\kappa(v)$ (Lemma~\ref{lem:fibers}), and the reachability condition is read off correctly. On an invalid input, either no lift is certified and the controller rejects, or a certified lift gives exact bounded counts (Lemma~\ref{lem:fibers}), and by Lemma~\ref{lem:trunc} some bounded count exceeds its budget, or the reachability condition fails. (The diamond $r\to a\to v$, $r\to b\to v$ shows why the budget comparison is needed: separating lifts exist whatever the budget at $v$, and only the exact fiber count two rejects a budget of one.)
\emph{Resources.} Each lift has $|Q|(B+1)p$ states and a logspace adjacency predicate; the controller stores $\alpha$, an endpoint, a layer, a residue and a saturated counter, and makes polynomially many queries. It is a deterministic logspace oracle machine with oracle $\textsc{STCON}_{\mathrm{ru}}\in$ ReachUL, so by fact (b) the language is in ReachUL. With Lemma~\ref{lem:universal}, \textsc{PairBudget} and $\STCON$ are in ReachUL.
\end{proof}

\subsection{A second, oracle-free proof of the ReachUL membership}
\label{app:direct}
The certification of the lift can also be done directly, by a reach-unambiguous machine that never guesses through a layer it has not certified. We give this proof for $\STCON$, hashing vertex sequences rather than edge indices.

Let the input be $\langle G,s,t,1^k\rangle$ with $V=\{0,\dots,n-1\}$. Set $L=nk$ and let $p$ be the least prime with $p>\max(n,\,L(nk)^2)$; by Bertrand's postulate $p<2\max(n,L(nk)^2)$, and $p$ is found in logspace by trial division. All numbers below have $O(\log M)$ bits. For a walk $\omega=(v_0,\dots,v_i)$ write $H_\alpha(\omega)=\sum_{j=0}^{i}v_j\alpha^{i-j}\bmod p$ for $\alpha\in\Fp$, computed by Horner's rule $h_0=v_0$, $h_j=\alpha h_{j-1}+v_j$. Write $\Nw_i(a,v)$ for the number of walks of length exactly $i$ from $a$ to $v$.

\begin{definition}[Hashed unfolding]
For $a\in V$ and $\alpha\in\Fp$, $\Gamma_{a,\alpha}$ is the graph on states $(v,i,h)\in V\times\{0,\dots,L\}\times\Fp$ with an edge $(u,i,h)\to(v,i+1,\alpha h+v)$ for every $(u,v)\in E$ and $i<L$, and root $r_a=(a,0,a)$. Layer $i$ is the set of states with second coordinate $i$. $\Gamma_{a,\alpha}$ is \emph{certified through layer $d$} if every state of layers $0,\dots,d$ has at most one path from $r_a$.
\end{definition}
The adjacency of $\Gamma_{a,\alpha}$ is computable in logspace from the input, $a$ and $\alpha$.

\begin{lemma}[Walks are root paths]
\label{lem:b1}
The map $\omega\mapsto\bigl((v_0,0,h_0),\dots,(v_i,i,h_i)\bigr)$ is a bijection from the walks of length $i\le L$ starting at $a$ onto the paths of length $i$ starting at $r_a$; the image of $\omega$ ends at $(v_i,i,H_\alpha(\omega))$. Hence the number of paths from $r_a$ to $(v,i,h)$ equals the number of walks of length $i$ from $a$ to $v$ with $H_\alpha(\omega)=h$.
\end{lemma}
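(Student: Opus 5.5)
The plan is to prove Lemma~\ref{lem:b1} directly from the definition of $\Gamma_{a,\alpha}$, with three steps: define the map, show it lands in the right place, and show it is injective and surjective.

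\textbf{The map is well defined.} Fix a walk $\omega=(v_0,\dots,v_i)$ with $v_0=a$ and $i\le L$. Let $h_j$ be its Horner prefix hashes: $h_0=v_0=a$ and $h_j=\alpha h_{j-1}+v_j \bmod p$.
\begin{itemize}
\item The image starts at $(v_0,0,h_0)=(a,0,a)=r_a$.
\item Each step $(v_{j-1},j-1,h_{j-1})\to(v_j,j,h_j)$ is an edge of $\Gamma_{a,\alpha}$. Indeed, $(v_{j-1},v_j)\in E$ and $j-1<L$, and the third coordinate is exactly $\alpha h_{j-1}+v_j$, as the definition requires.
\item So the image is a path of length $i$ starting at $r_a$.
\item By induction on $j$, unrolling Horner's rule gives $h_j=\sum_{l\le j}v_l\alpha^{j-l}$. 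Hence the path ends at $(v_i,i,H_\alpha(\omega))$.
\end{itemize}

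\textbf{The map is a bijection.} Projection to first coordinates gives the inverse.
\begin{itemize}
\item Take any path of length $i$ from $r_a$ in $\Gamma_{a,\alpha}$. Every edge raises the layer coordinate by exactly one, so the $j$-th state is in layer $j$.
\item Its first coordinates form a sequence $(v_0,\dots,v_i)$ with $v_0=a$ and every consecutive pair in $E$. So this sequence is a walk from $a$.
\item The third coordinates are forced: each edge out of a state determines the new hash from the old hash and the new vertex. Starting from $h_0=a$, the hashes are therefore exactly the Horner hashes of that walk.
\item It follows that lifting after projecting gives back the original path, and projecting after lifting gives back the original walk.
\end{itemize}
One subtlety must be checked: $\Gamma_{a,\alpha}$ is a graph, not a multigraph. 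For fixed $(u,i,h)$ and $v$ there is exactly one edge, and distinct successor vertices $v$ give distinct target states. So a path is determined by its sequence of states, and no two edges are parallel.

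\textbf{The counting consequence.} Restrict the bijection to walks of length exactly $i$ that end at $v$. By the endpoint computation above, these correspond exactly to the paths from $r_a$ to the states $(v,i,h)$. Splitting by the value $h=H_\alpha(\omega)$ gives the stated equality of counts for each $h$.

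I expect no real obstacle. The only points needing care are the layer bookkeeping (every path of length $i$ from the root ends in layer $i$) and noting that reducing mod $p$ does not affect the bijection. Collisions of $H_\alpha$ only merge endpoints; they do not affect the correspondence between walks and paths.
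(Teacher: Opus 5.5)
Your proof is correct and follows the paper's own argument. Projection to first coordinates inverts the lift, because along any path from $r_a$ the hash coordinate is forced by Horner's rule, and every walk lifts to exactly one path. Your extra checks (the layer bookkeeping, and the absence of parallel edges since $G$ is simple) make explicit what the paper's one-line proof leaves implicit.
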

\begin{proof}
Edges of $\Gamma_{a,\alpha}$ project to edges of $G$, and along a path from $r_a$ the third coordinate is determined by the first coordinates through Horner's rule; conversely every walk lifts to exactly one path.
\end{proof}

\begin{lemma}
\label{lem:b2}
$\Gamma_{a,\alpha}$ is certified through layer $d$ if and only if $H_\alpha$ is injective on the walks of length $i$ from $a$ to $v$, for all $i\le d$ and $v\in V$. In that case $\Nw_i(a,v)=|\{h:(v,i,h)\text{ is reachable from } r_a\}|$ for every $i\le d$ and $v$.
\end{lemma}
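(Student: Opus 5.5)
The plan is to derive both directions directly from Lemma~\ref{lem:b1}, which identifies paths from $r_a$ to $(v,i,h)$ with walks $\omega$ of length $i$ from $a$ to $v$ having $H_\alpha(\omega)=h$. Every state of layer $i$ has the form $(v,i,h)$. So "every state of layers $0,\dots,d$ has at most one root path" says exactly this: for each $i\le d$, each $v$ and each $h$, at most one walk of length $i$ from $a$ to $v$ hashes to $h$. That is, $H_\alpha$ is injective on the walks of length $i$ from $a$ to $v$.

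I will write the equivalence as a chain of rewordings through the bijection.

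\emph{Forward direction.} Suppose $\omega\neq\omega'$ have the same length $i\le d$, the same endpoint $v$, and the same hash $h$. Their images under the map of Lemma~\ref{lem:b1} are two distinct paths from $r_a$ to $(v,i,h)$, since the map is a bijection and hence injective. This contradicts certification through layer $d$.

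\emph{Backward direction.} Suppose some state $(v,i,h)$ with $i\le d$ has two distinct root paths. Both have length $i$, because each edge raises the layer by one and the root lies in layer $0$. Their preimages are two distinct walks of length $i$ from $a$ to $v$ with equal hash $h$, contradicting injectivity.

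The counting statement follows from the same bijection. The walks of length $i$ from $a$ to $v$ are partitioned by their hash value $h$. Class $h$ has exactly as many elements as there are root paths to $(v,i,h)$. Under certification each class therefore has size $0$ or $1$, and it has size $1$ precisely when $(v,i,h)$ is reachable from $r_a$. Summing over $h\in\Fp$ gives $\Nw_i(a,v)=|\{h:(v,i,h)\text{ reachable from } r_a\}|$.

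The one point that needs care is checking that the bijection of Lemma~\ref{lem:b1} is length-preserving and lands in the right layer. This is needed so that a path to a layer-$i$ state always comes from a walk of length exactly $i$, with no mixing across layers. It holds because the layer coordinate increments deterministically along edges and the root is $(a,0,a)$. I expect no real obstacle: the lemma is a direct restatement of Lemma~\ref{lem:b1}, and the proof should stay that short.
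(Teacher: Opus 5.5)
Your proof is correct and follows the paper's route: the paper simply declares the lemma immediate from Lemma~\ref{lem:b1}, and your argument spells out that deduction. You translate certification and injectivity through the walk--path bijection, note that the layer coordinate forces every root path to a layer-$i$ state to have length exactly $i$, and sum the hash classes for the count.
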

\begin{proof}
Immediate from Lemma~\ref{lem:b1}.
\end{proof}

\begin{lemma}[A good evaluation point exists]
\label{lem:b3}
If $\sum_v\Nw^{\le L}(a,v)\le nk$, then fewer than $p$ values of $\alpha$ fail to certify $\Gamma_{a,\alpha}$ through layer $L$; in particular some $\alpha\in\Fp$ certifies it.
\end{lemma}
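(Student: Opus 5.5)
The plan is a union bound over colliding pairs, in the style of Lemma~\ref{lem:sep}. By Lemma~\ref{lem:b2}, $\alpha$ fails to certify $\Gamma_{a,\alpha}$ through layer $L$ exactly when some two distinct walks $\omega\neq\omega'$ from $a$, of the same length $i\le L$ and with the same endpoint $v$, satisfy $H_\alpha(\omega)=H_\alpha(\omega')$. So it suffices to bound the number of such ``bad'' $\alpha$ by the number of relevant pairs times the number of roots each pair can contribute, and to check that this product is below $p$.

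\textbf{Step 1: each pair contributes at most $L$ bad points.} Fix such a pair $\omega=(v_0,\dots,v_i)$ and $\omega'=(v'_0,\dots,v'_i)$, both starting at $v_0=v'_0=a$. The difference $D(x)=\sum_{j=0}^{i}(v_j-v'_j)x^{i-j}$ is a polynomial over $\Fp$ of degree at most $i\le L$. It is nonzero: the walks are distinct vertex sequences of equal length, so some $v_j\neq v'_j$. Since both lie in $\{0,\dots,n-1\}$ and $p>n$, we have $v_j-v'_j\not\equiv0\pmod p$. Because $v_0=v'_0$, the leading coefficient vanishes and the degree is in fact at most $i-1<L$. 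Either way, $D$ has at most $L$ roots in $\Fp$, and $\alpha$ is a collision point for this pair exactly when $D(\alpha)=0$.

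\textbf{Step 2: count the relevant pairs.} Only pairs sharing both length $i$ and endpoint $v$ matter. Their number is
\[
\sum_{i\le L}\sum_v\binom{\Nw_i(a,v)}{2}\le\binom{\sum_{i,v}\Nw_i(a,v)}{2}=\binom{\sum_v\Nw^{\le L}(a,v)}{2}\le\binom{nk}{2}<(nk)^2 .
\]
The middle equality holds because $\sum_{i\le L}\Nw_i(a,v)=\Nw^{\le L}(a,v)$, and the first inequality uses superadditivity of $x\mapsto\binom{x}{2}$ on nonnegative integers. The last bound is the hypothesis of the lemma.

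\textbf{Step 3: conclude.} The number of failing $\alpha$ is at most $L\binom{nk}{2}<L(nk)^2<p$ by the choice of $p$. Hence some $\alpha\in\Fp$ certifies $\Gamma_{a,\alpha}$ through layer $L$.

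The only delicate point is Step 1's nonvanishing claim. It needs the vertex labels to be distinct residues, which is guaranteed by $p>n$. It also needs walks to be compared as vertex sequences rather than edge sequences, which is harmless since $G$ is given by an adjacency matrix, so a walk is determined by its vertices. Everything else is bookkeeping.
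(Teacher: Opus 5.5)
Your proof is correct and follows the paper's argument: a union bound over pairs of distinct walks with equal length and endpoint, each giving a nonzero difference polynomial of degree at most $L$ (nonzero because the vertex labels are below $n<p$), with at most $(nk)^2$ such pairs, so at most $L(nk)^2<p$ bad values of $\alpha$. Your $\binom{nk}{2}$ pair count and the remark that the leading coefficient cancels only sharpen the paper's bounds and do not change the argument.
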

\begin{proof}
Let $\omega\ne\omega'$ be walks of the same length $i\le L$ from $a$ to the same $v$. Their coefficient sequences are their vertex sequences, which differ and have entries below $n<p$, so $P_\omega(X)-P_{\omega'}(X)$, with $P_\omega(X)=\sum_jv_jX^{i-j}$, is a nonzero polynomial over $\Fp$ of degree at most $L$ and vanishes at most at $L$ points. There are at most $(\sum_{i,v}\Nw_i(a,v))^2\le(nk)^2$ such pairs, so at most $L(nk)^2<p$ values of $\alpha$ make some pair collide. By Lemma~\ref{lem:b2} every other value certifies $\Gamma_{a,\alpha}$ through layer $L$.
\end{proof}

\begin{algorithm}[H]
\caption{$\textsc{Tour}(a,\alpha,d)$: Euler tour of the reachable states of layers $\le d$ of $\Gamma_{a,\alpha}$, assumed certified through layer $d$, calling $\textsc{Visit}$ once per state.}
\label{alg:tour}
\small
\begin{algorithmic}[1]
\State $x\gets r_a$;\ \textsc{Visit}$(x)$;\ $\mathit{mode}\gets$ down
\Loop
  \If{$\mathit{mode}=$ down}
    \If{$\mathrm{layer}(x)<d$ \textbf{and} $x$ has an out-neighbour}
      \State $x\gets$ the first out-neighbour of $x$;\ \textsc{Visit}$(x)$;\ \textbf{continue}
    \Else\ $\mathit{mode}\gets$ up
    \EndIf
  \EndIf
  \If{$x=r_a$} \Return \EndIf
  \State $u\gets\textsc{Parent}(x)$ \Comment{nondeterministic, below}
  \If{$u$ has an out-neighbour after $x$}
    \State $x\gets$ that out-neighbour;\ \textsc{Visit}$(x)$;\ $\mathit{mode}\gets$ down
  \Else\ $x\gets u$ \Comment{mode stays up}
  \EndIf
\EndLoop
\Statex
\Statex \textsc{Parent}$(x)$, for $x$ at layer $j\ge1$: guess a state $u$ of layer $j-1$ with an edge $u\to x$; guess $z_0=r_a,z_1,\dots,z_{j-1}$, each $z_{m+1}$ an out-neighbour of $z_m$, storing only $z_m$ and $m$; if $z_{j-1}=u$ return $u$, otherwise halt (this computation path dies).
\end{algorithmic}
\end{algorithm}

\begin{lemma}[Traversal of a certified prefix]
\label{lem:b4}
Suppose $\Gamma=\Gamma_{a,\alpha}$ is certified through layer $d$, and let $T$ be the set of states of layers $\le d$ reachable from $r_a$. Then (i) $T$ with the edges of $\Gamma$ inside it is an out-tree rooted at $r_a$, in which the children of every $x\in T$ of layer $<d$ are exactly its out-neighbours in $\Gamma$; (ii) Algorithm~\ref{alg:tour}, started in a configuration reached by a unique path, visits every state of $T$ exactly once, has exactly one computation path that returns, reaches every configuration on its way by exactly one path from its start, and uses $O(\log M)$ work space in addition to that of \textsc{Visit}.
\end{lemma}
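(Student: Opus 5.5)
Part (i) comes first, from certification alone. Every $x\in T$ other than $r_a$ has a path from $r_a$, and by certification it has exactly one, which is the number of root paths to it. Let $u$ be the state before $x$ on that path. Then $u$ is the unique in-neighbour of $x$ that is reachable from $r_a$. A second reachable in-neighbour $u'$ would give a second root path, through $u'$. Also, $u$ lies in layer $\mathrm{layer}(x)-1\le d$, so $u\in T$. Hence every non-root vertex of $T$ has exactly one parent inside $T$. The induced graph is acyclic because every edge goes up one layer. So it is an out-tree rooted at $r_a$. Conversely, let $x\in T$ have layer $<d$ and let $y$ be an out-neighbour of $x$. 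Then $y$ is reachable and has layer $\le d$, so $y\in T$, and its unique parent is $x$. So the children of $x$ are exactly its out-neighbours.

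For (ii), the deterministic part of Algorithm~\ref{alg:tour} is the standard pointer-free Euler tour of an ordered tree. It needs only three operations: first child, next sibling after $x$ (the next out-neighbour of the parent in the fixed order of edges), and parent. By (i), running the tour on $T$ with these operations visits every state of $T$ exactly once, on its way down, and returns at $r_a$. Only the current $x$, the mode, and $O(1)$ loop variables are stored. Each is a state $(v,i,h)$ with $O(\log M)$ bits. The tour takes $O(|T|)$ steps, which is polynomial.

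The key step is the nondeterministic \textsc{Parent}$(x)$, for $x$ at layer $j$. I would argue that exactly one guessed branch returns, and that no configuration is reached twice. A branch returns only if its guessed $u$ is an in-neighbour of $x$ and its guessed walk $z_0,\dots,z_{j-1}$ is a root path ending at $u$. Such a $u$ is reachable, so by (i) it is the parent of $x$. By certification through layer $j-1\le d$ there is exactly one root path to $u$, so exactly one branch survives. For unambiguity, any configuration inside a call is determined by the saved tour data, the guessed $u$ and the partial sequence $z_0,\dots,z_m$. Two distinct guess histories that differ in $u$ or in some $z$ differ in the stored pair $(u,z_m,m)$. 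Otherwise they were determined identically up to the first divergence. The earlier $z$'s are not stored, however, so I would argue it this way. The prefix $z_0,\dots,z_m$ is a root path to $z_m$ in layer $m\le d$. By certification it is the unique root path to $z_m$. Hence $z_m$ and $m$ determine the entire history. Together with the tour state before the call, which by induction is reached by a unique path, every configuration in the call has a unique path. Dead branches halt where they stand, so nothing merges after a failed guess.

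Finally I would compose over the calls. Induct on the steps of the tour. After each \textsc{Parent} call returns, the configuration is the same deterministic tour state as in the deterministic simulation, reached along the single surviving branch. Combining this with the starting hypothesis, every configuration has exactly one path from the start, and exactly one computation path returns. Space is $O(\log M)$ plus \textsc{Visit}'s, since \textsc{Parent} stores only $u,z_m,m$. The main obstacle is the unambiguity inside \textsc{Parent}, since the earlier guesses are forgotten. The argument rests on certification making $(z_m,m)$ encode its own prefix, and that needs $m\le d$, which is exactly why the call is made only at layers $j\le d$.
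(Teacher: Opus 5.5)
Your proposal is correct and follows essentially the same route as the paper. Part (i) comes from the uniqueness of root paths: the predecessor on the unique path is the only reachable in-neighbour, and every out-neighbour of a state below layer $d$ lies in $T$ with that state as its parent. Part (ii) comes from the Euler tour of that tree together with the fact that certification makes $z_m$ and $m$ determine the whole guessed prefix, so exactly one branch of \textsc{Parent} survives and each recovery configuration $(x,\mathit{mode},u,z_m,m)$ has exactly one path. The paper states explicitly that tour configurations never repeat (each state is entered once and left upward once), which your appeal to the terminating deterministic tour covers only implicitly.
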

\begin{proof}
(i) Every $x\in T$ has exactly one path from $r_a$; its predecessor on that path is its only in-neighbour in $T$, because an in-neighbour $u\in T$ would give the path $r_a\leadsto u\to x$. Every out-neighbour $y$ of $x\in T$ with $\mathrm{layer}(x)<d$ is reachable, hence in $T$, and its unique path passes through $x$.

(ii) By (i) the tour is the deterministic Euler tour of a tree, so it visits every state of $T$ once, provided \textsc{Parent} returns the parent. In \textsc{Parent}$(x)$ the guessed states $z_m$ follow edges from $r_a$, so each $z_m$ lies in $T$ at layer $m\le d-1$ and its guessed prefix $z_0,\dots,z_m$ is the unique path to it. The guess succeeds iff $u$ is reachable at layer $j-1$ and the guessed path is its path; by (i) the only reachable in-neighbour of $x$ is its parent. So exactly one computation path of \textsc{Parent} succeeds, and it returns the parent. The configurations of the procedure are tour configurations $(x,\mathit{mode},\text{pending child})$ and recovery configurations $(x,\mathit{mode},u,z_m,m)$. The tour configurations occur in a fixed order without repetition, since in an Euler tour each state is entered once and left upward once, and each is reached through the unique successful paths of the preceding recoveries. A recovery configuration arises only from the one tour configuration that invokes \textsc{Parent}$(x)$, by one choice of $u$ and by the guesses $z_1,\dots,z_m$, which $z_m$ determines; hence it too has exactly one path. The work tape holds $x$, $u$, $z_m$, $m$ and the mode, $O(\log(nLp))=O(\log M)$ bits.
\end{proof}

\begin{lemma}[Certifying layer by layer]
\label{lem:b5}
The procedure $\textsc{Certify}(a,\alpha)$, which for $d=0,1,\dots,L-1$ and every state $y$ of layer $d+1$ runs $\textsc{Tour}(a,\alpha,d)$ counting (up to 2) the visited states $x$ of layer $d$ with $x\to y$, and fails as soon as a count reaches 2, returns \textsc{ok} iff $\Gamma_{a,\alpha}$ is certified through layer $L$. It runs \textsc{Tour} only with a $d$ through which $\Gamma_{a,\alpha}$ is certified, has exactly one returning computation path, and reaches every configuration by exactly one path from its start.
\end{lemma}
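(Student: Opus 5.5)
The proof is an induction on $d$, with invariant $I_d$: the procedure has reached stage $d$ only if $\Gamma_{a,\alpha}$ is certified through layer $d$, and it has done so along exactly one computation path from its start. The base case $d=0$ holds because layer $0$ reachable from $r_a$ contains only the root, since all edges go from layer $i$ to layer $i+1$. The root has exactly one path, the empty one, so $\Gamma_{a,\alpha}$ is certified through layer $0$ with no work.

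For the inductive step, assume $I_d$ with $d<L$. By invariant $I_d$, every call $\textsc{Tour}(a,\alpha,d)$ made at stage $d$ is run only with a $d$ through which $\Gamma_{a,\alpha}$ is certified, so Lemma~\ref{lem:b4} applies. Each such tour visits every reachable state of layers $\le d$ exactly once and has one returning path, with every configuration reached uniquely. The \textsc{Visit} routine only increments a counter saturated at $2$ when the visited $x$ is in layer $d$ with $x\to y$; it adds no nondeterminism, so uniqueness of paths is preserved.

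Hence, for each state $y$ of layer $d+1$, the returned count is $\min\{2,\#\{x\in T_d:x\to y\}\}$, where $T_d$ is the set of reachable states of layer $d$. All predecessors of $y$ lie in layer $d$. By certification through layer $d$, the number of root paths to $y$ equals the number of reachable in-neighbours, since each such neighbour carries exactly one path. So $y$ has at most one root path iff the count is below $2$. It follows that all counts at stage $d$ are below $2$ iff $\Gamma_{a,\alpha}$ is certified through layer $d+1$, which gives $I_{d+1}$, or a correct failure if some count reaches $2$.

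Uniqueness of configurations across the whole procedure comes from tagging. The controller's loop variables $d$ and $y$ and the current count are kept on the work tape alongside each tour's configuration. The tours are invoked sequentially, each from a configuration with one path, and each has exactly one returning path. Configurations of different invocations therefore differ in their tags $(d,y,\text{count})$, so concatenating them keeps every configuration reachable by exactly one path. Failed \textsc{Parent} guesses halt where they are, as in the machine model, and are never merged.

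At the end, \textsc{ok} is returned exactly when all stages $d<L$ pass, that is, iff $\Gamma_{a,\alpha}$ is certified through layer $L$. The space bound is $O(\log M)$ for $d$, $y$ and the count, plus the space of \textsc{Tour}.

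The main obstacle is the bookkeeping for uniqueness. One must ensure that no tour is ever started on an uncertified prefix. This is guaranteed because stage $d$ runs only after all checks of earlier stages passed, and the procedure halts at the first count of $2$, before any deeper tour can run. Without that early halt, \textsc{Parent} could have several successful guesses and the machine would become ambiguous.
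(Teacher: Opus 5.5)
Your proof is correct and follows essentially the same route as the paper. It runs an induction on $d$, and uses the observation that, under certification through layer $d$, the number of root paths to $y$ equals its number of reachable in-neighbours, which the tour counts. It obtains uniqueness of configurations by tagging each \textsc{Tour} invocation with $(d,y,\text{count})$ and applying Lemma~\ref{lem:b4}.

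Your base case is slightly more careful than the paper's "layer $0$ is $\{r_a\}$", since other layer-$0$ states exist and are merely unreachable.
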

\begin{proof}
Layer $0$ is $\{r_a\}$, so $\Gamma_{a,\alpha}$ is certified through layer $0$. If it is certified through layer $d$, every reachable state of layer $d$ has exactly one path, so the number of paths to a state $y$ of layer $d+1$ equals the number of its reachable in-neighbours, which the tour counts. All counts are at most one iff the graph is certified through layer $d+1$. \textsc{Tour} is therefore only invoked under its hypothesis. The loop variables $d$, $y$ and the count are part of every configuration of an invocation, so by Lemma~\ref{lem:b4} invocations neither share configurations nor create a second path to any.
\end{proof}

\begin{algorithm}[H]
\caption{The ReachUL decider for $\STCON$.}
\label{alg:reachul}
\small
\begin{algorithmic}[1]
\State compute $n$, $L=nk$ and the prime $p$;\ $R\gets$ false
\For{$a=0,\dots,n-1$}
  \State $\mathit{ok}\gets$ false
  \For{$\alpha=0,\dots,p-1$}
    \If{$\textsc{Certify}(a,\alpha)=\textsc{ok}$}
      \For{$v=0,\dots,n-1$}
        \State $C\gets$ number of states visited by $\textsc{Tour}(a,\alpha,L)$ whose vertex is $v$, counted up to $k+1$
        \If{$C>k$} \textbf{reject} \EndIf
        \If{$a=s$, $v=t$ and $C\ge1$} $R\gets$ true \EndIf
      \EndFor
      \State $\mathit{ok}\gets$ true;\ \textbf{break}
    \EndIf
  \EndFor
  \If{\textbf{not} $\mathit{ok}$} \textbf{reject} \EndIf
\EndFor
\State \textbf{if} $R$ \textbf{then accept else reject}
\end{algorithmic}
\end{algorithm}

\begin{proof}[Oracle-free proof that $\STCON\in\mathrm{ReachUL}$]
Consider Algorithm~\ref{alg:reachul}.

\emph{Correctness.} When an $\alpha$ certifies $\Gamma_{a,\alpha}$ through layer $L$, the count computed for $v$ is $\sum_{i\le L}\Nw_i(a,v)=\Nw^{\le L}(a,v)$ (capped at $k+1$), by Lemma~\ref{lem:b2}. Suppose the input is in $\STCON$. For every $a$, $\Nw(a,v)\le k$ for all $v$, so $\sum_v\Nw^{\le L}(a,v)\le nk$ and Lemma~\ref{lem:b3} provides a certifying $\alpha$; the counts are at most $k$, no rejection occurs, and $R$ becomes true because $\Nw^{\le L}(s,t)\ge1$. Conversely, suppose the algorithm accepts. Then for every $a$ some $\alpha$ certified, and $\Nw^{\le L}(a,v)\le k$ for all $v$; by Lemma~\ref{lem:trunc} with $\kappa\equiv k$ and $B=nk=L$, $\Nw(a,v)\le k$ for all $v$. So $G$ is $k$-strongly few, and $\Nw(s,t)\ge\Nw^{\le L}(s,t)\ge1$. (A rejection because no $\alpha$ certifies is correct by the contrapositive of Lemma~\ref{lem:b3}; a rejection because a count exceeds $k$ is correct because $\Nw\ge\Nw^{\le L}$.)

\emph{Reach-unambiguity.} The outer control is deterministic, and every nondeterministic step happens inside \textsc{Parent}, within invocations of \textsc{Tour} on certified prefixes (Lemma~\ref{lem:b5} for \textsc{Certify}; for the counting tours, \textsc{Certify} has just returned \textsc{ok}). Each invocation has exactly one returning computation path and reaches every configuration by exactly one path (Lemma~\ref{lem:b4}); the outer variables $(a,\alpha,v,C,R,\mathit{ok})$ advance strictly between invocations and are part of every configuration. Hence on every input every configuration is reached by at most one path, and exactly one computation path reaches the end.

\emph{Resources.} The work tape holds a constant number of $O(\log M)$-bit numbers. The number of tours is polynomial, each visits at most $n(L+1)p$ states, and each parent recovery takes $O(L)$ steps, so the surviving path runs in polynomial time.
\end{proof}

The procedure \textsc{Certify} is itself a reach-unambiguous decider for the reach-unambiguity of a layered graph through a given layer, which is what makes this proof self-contained.

\subsection{Proof of Theorem~\ref{thm:ceql} (signed counting)}
\label{app:ceql}
For $x=\langle G,s,t,1^k\rangle$ let $f_{uv}(x)=\Nw^{\le nk}_G(u,v)$. A nondeterministic logspace machine that chooses a length at most $nk$ and then exactly that many edge-occurrence codes, accepting iff the resulting walk goes from $u$ to $v$, has one accepting computation per walk, so $f_{uv}\in\#\mathrm{L}$. For $(u,v)\ne(s,t)$ put $P_{uv}=\prod_{j=0}^{k}(f_{uv}-j)$, and $P_{st}=\prod_{j=1}^{k}(f_{st}-j)$; thus $P_{uv}=0$ iff $0\le f_{uv}\le k$ and $P_{st}=0$ iff $1\le f_{st}\le k$. Let $g=P_{st}^2+\sum_{(u,v)\ne(s,t)}P_{uv}^2$. Logspace-uniform polynomial products and sums preserve GapL, so $g\in$ GapL. Every term is an integer square, so $g(x)=0$ exactly when all truncated counts satisfy the bound and the designated count is positive, and by Lemma~\ref{lem:trunc} (with the per-source budgets of Lemma~\ref{lem:universal}) exactly when $x\in\STCON$. Malformed inputs and $k=0$ get a constant nonzero gap. For pair budgets, use $f_{uv}=\Nw^{\le L}(u,v)$ with $L=\max_u\sum_v\kappa(u,v)$ and products up to $\kappa(u,v)$; for root budgets, $P_v=\prod_{j=0}^{\kappa(v)}(f_v-j)$ and $P_F=\prod_{j=1}^{B}(f_F-j)$ with $f_F=\sum_{v\in F}f_v$. The sum of squares is what makes one zero test enforce every equality at once, although C$_=$L is not known to be closed under complement.

\subsection{Proof of Theorem~\ref{thm:main}, Theorem~\ref{thm:budgets} and the StrongFewL boundary}
\label{app:hardness}
\textbf{Memberships.} ReachUL: Theorem~\ref{thm:reachul}. LOGDCFL: Theorem~\ref{thm:logdcfl} (and Theorem~\ref{thm:rootlogdcfl} for budgets). SC$^2$: Cook's simultaneous simulation of the same polynomial-time deterministic auxiliary pushdown automaton~\cite{cook1979}, a single algorithm with both bounds. C$_=$L: Theorem~\ref{thm:ceql}. DSPACE$(O(\log^2M/\log\log M))$: ReachUL $=\mathrm{RUSPACE}(\log n)\subseteq\mathrm{DSPACE}(\log^2n/\log\log n)$~\cite{allender1998}; this is a pure space bound with no time claim. NL: Theorem~\ref{thm:phase}(i). Also $\STCON\in$ UL $\cap$ coUL, since ReachUL is closed under complement.

\textbf{Hardness.} Let $A$ be accepted by a StrongFewL machine with at most $q(|x|)$ paths between any two configurations, with a unique accepting configuration. Its configuration graph $G_x$ is logspace computable, acyclic (a cycle would give infinitely many paths) and $q(|x|)$-strongly few, possibly after adding a clock or an accepting collector, which multiplies the bound by a fixed polynomial factor; parallel transitions are replaced by private two-edge paths. Hence $x\in A$ iff $\langle G_x,C_0,C_{\mathrm{acc}},1^{q'(|x|)}\rangle\in\STCON$.

\textbf{Why membership in StrongFewL is open.} The reach-unambiguous machines above control the paths from their initial configuration; configurations unreachable from it may still be joined by many paths, and the unique-return oracle composition gives no all-pairs bound inside the composed machine. A proof of $\STCON\in$ StrongFewL needs a total validator whose \emph{entire} configuration graph has polynomially many paths between arbitrary configurations; with the hardness above it would make $\STCON$ StrongFewL-complete.

\subsection{Proof of Theorem~\ref{thm:logdcfl} (the capped stack traversal)}
\label{app:logdcfl}
\begin{algorithm}[H]
\caption{Capped traversal from a source $a$ with a budget of $nk$ nodes. In \emph{target mode} for $b$ it also counts the entered nodes that end at $b$ and aborts at $k+1$.}
\label{alg:traverse}
\small
\begin{algorithmic}[1]
\State $\mathrm{current}\gets a$;\ $\mathrm{next}\gets 1$;\ $\mathrm{depth}\gets 0$;\ $\mathrm{total}\gets 1$;\ $\mathrm{count}_b\gets[a=b]$
\Loop
  \If{$\mathrm{depth}=nk$ \textbf{or} $\mathrm{next}>n$}
    \If{stack empty} \Return (\textsc{ok}, $\mathrm{total}$, $\mathrm{count}_b$) \EndIf
    \State pop $(v,j)$;\ $\mathrm{current}\gets v$;\ $\mathrm{next}\gets j$;\ $\mathrm{depth}\gets\mathrm{depth}-1$;\ \textbf{continue}
  \EndIf
  \State $w\gets\mathrm{next}$;\ $\mathrm{next}\gets\mathrm{next}+1$
  \If{$(\mathrm{current},w)\notin E$} \textbf{continue} \EndIf
  \State push $(\mathrm{current},\mathrm{next})$;\ $\mathrm{current}\gets w$;\ $\mathrm{next}\gets 1$;\ $\mathrm{depth}\gets\mathrm{depth}+1$;\ $\mathrm{total}\gets\mathrm{total}+1$
  \If{$\mathrm{total}=nk+1$} \Return \textsc{overflow} \EndIf
  \If{target mode \textbf{and} $\mathrm{current}=b$}
    \State $\mathrm{count}_b\gets\mathrm{count}_b+1$;\ \textbf{if} $\mathrm{count}_b=k+1$ \Return \textsc{overflow}
  \EndIf
\EndLoop
\end{algorithmic}
\end{algorithm}

For a source $a$ and length $\ell$, the walk-unfolding tree $T^\ell_a$ has one node per walk of length at most $\ell$ from $a$; distinct walks are distinct nodes even when they end at the same vertex, so $|T^\ell_a|=\sum_b\Nw^{\le\ell}(a,b)$. Algorithm~\ref{alg:traverse} is a depth-first traversal of $T^{nk}_a$ whose stack holds, for each node on the current branch, its vertex and the index of its next successor.
\begin{lemma}
\label{lem:b6}
(i) If $|T^{nk}_a|>nk$ then $G$ is not $k$-strongly few. (ii) If $G$ is $k$-strongly few then $|T^{nk}_a|\le nk$ for every $a$. (iii) If $G$ has a directed cycle, a source on it has $|T^{nk}_a|\ge nk+1$. (iv) If $G$ is acyclic, $\Nw^{\le nk}(a,b)=\Nw(a,b)$.
\end{lemma}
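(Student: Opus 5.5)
Each part is an elementary counting argument on the unfolding tree $T^{nk}_a$, whose size is $\sum_b\Nw^{\le nk}(a,b)$ by construction. We take the four parts in the order (ii), (i), (iii), (iv).

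For (ii), assume $G$ is $k$-strongly few. Then $\Nw^{\le nk}(a,b)\le\Nw(a,b)\le k$ for each of the $n$ vertices $b$. Summing over $b$ gives $|T^{nk}_a|\le nk$.

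Part (i) is the contrapositive of (ii), so nothing further is needed. We note it separately because this is the direction Algorithm~\ref{alg:traverse} uses when it rejects on \textsc{overflow}. The underlying pigeonhole reason is that some $b$ ends more than $k$ of the at most $nk$-length walks, and those are genuine walks, so $\Nw(a,b)>k$.

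For (iii), let $a$ lie on a directed cycle $C$ of length $c\ge1$. Walking around $C$ repeatedly gives, for every $\ell$, a walk of length exactly $\ell$ from $a$: follow $C$ for $\ell$ edges. These walks are pairwise distinct because their lengths differ. Taking $\ell=0,1,\dots,nk$ gives $nk+1$ distinct nodes of $T^{nk}_a$, so $|T^{nk}_a|\ge nk+1$.

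For (iv), assume $G$ is acyclic. Then every walk is a path and visits distinct vertices, so its length is at most $n-1\le nk$, using $k\ge1$. Every walk from $a$ to $b$ is therefore counted in $\Nw^{\le nk}(a,b)$, and equality with $\Nw(a,b)$ follows.

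None of these steps is an obstacle. The only points needing care are two boundary conventions. The empty walk must count as the root node, which is consistent with $\mathrm{total}$ being initialised to $1$ in Algorithm~\ref{alg:traverse}. The bound $n-1\le nk$ uses $k\ge1$; malformed inputs with $k=0$ are rejected separately.
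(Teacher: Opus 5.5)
Your proof is correct and follows the paper's argument: pigeonhole over the $n$ endpoints for (i) and (ii), walks of every length $0,\dots,nk$ around the cycle for (iii), and the bound $n-1\le nk$ on simple paths for (iv). The only difference is ordering: the paper proves (i) and obtains (ii) as its contrapositive, while you do the reverse, which is immaterial.
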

\begin{proof}
(i) More than $nk$ nodes among $n$ endpoints put at least $k+1$ distinct walks on some endpoint. (ii) is (i) contrapositive, summed over endpoints. (iii) Following the cycle gives a walk of every length $0,\dots,nk$. (iv) In an acyclic graph every walk is a simple path of length at most $n-1\le nk$.
\end{proof}
\begin{proof}[Proof of Theorem~\ref{thm:logdcfl}]
If the input is in $\STCON$, every global traversal completes by Lemma~\ref{lem:b6}(ii), every recount is exact by (iv) and at most $k$, and the $(s,t)$ count is positive. Conversely, if the decider accepts, no source overflowed, so $G$ is acyclic by (iii); all recounts are then exact and at most $k$, so $G$ is $k$-strongly few, and $\Nw(s,t)\ge1$. The recounts start only after every global traversal has bounded its tree by $nk$ nodes, so an invalid input cannot force an exponentially long recount. Resources: the work tape holds a constant number of $O(\log M)$-bit counters; a stack frame $(v,j)$ has $2\lceil\log_2(n+2)\rceil$ bits; a traversal enters at most $nk+1$ nodes and tests at most $n$ successors per node, so with an $O(M)$ input scan per adjacency test it takes $O(n^2kM+nk\log M)$ steps, and there are $n+n^2$ traversals. The decider is therefore a deterministic auxiliary pushdown automaton with logarithmic work tape running in polynomial time, and such machines accept exactly LOGDCFL~\cite{sudborough1978}.
\end{proof}
Without the budget, this depth-first search is the one Buntrock et al.~\cite{bjlr1991} use for reach-few machines, whose configuration graphs are promised to have few paths; the budget and the order of the passes are what make it total.

\subsection{Proof of Theorem~\ref{thm:complete} (completeness)}
\label{app:complete}
\emph{Membership} of all three languages follows from Theorem~\ref{thm:budgets} (for $\textsc{FBFew}$, apply the root-budget theorem to $s$ in $G$ and to $t$ in the reversed graph, since $\Nw_{G^{\mathrm{rev}}}(t,v)=\Nw_G(v,t)$).
\emph{$\textsc{RootFew}_k$.} For $k=1$ the language is $\textsc{STCON}_{\mathrm{ru}}$, which is ReachUL-complete~\cite{lange1997}. For $k>1$, add a new root $r$ and $k$ internally disjoint two-edge paths $r\to a_i\to s$; then $\Nw_H(r,v)=k\,\Nw_G(s,v)$ for every old vertex, so a bound of one becomes a bound of $k$ and reachability is preserved.
\emph{DAGs.} Given a rooted threshold-one instance on $n$ vertices, build layers $0,\dots,2n-1$ reproducing every edge between consecutive layers, and add for every vertex $v$ a fresh sink $c_v$ with an edge from every layered copy of $v$; use $(s,0)$ as root and $c_t$ as target. On a reach-unambiguous input every layered state and every sink has at most one rooted path. If the reachable part is acyclic but ambiguous, two simple rooted paths of length at most $n-1$ reach one sink; if a cycle is reachable, a simple prefix to a cycle vertex and the same prefix followed by one tour give two walks of length at most $2n-1$ to one sink. The $k$-fold prefix gadget then gives DAG hardness for every fixed $k$.
\emph{Recognition and two cones.} Rooted reach-unambiguity recognition is ReachUL-complete~\cite{dinesh2026}, and the $k$-fold prefix reduces it to rooted $k$-few recognition. For $\textsc{FBFew}_{k,1}$, given $\langle G,s\rangle$ add fresh vertices $r,z$ and the edges $r\to s$, $r\to z$, and use $r$ as source and $z$ as target: $\Nw_H(r,v)=\Nw_G(s,v)$ for old vertices, every backward count into $z$ is at most one, and $r\leadsto z$ holds, so $\langle G,s\rangle$ is rooted $k$-few iff $\langle H,r,z\rangle\in\textsc{FBFew}_{k,1}$.

\subsection{Proof of Theorem~\ref{thm:compiler} (the fail-closed compiler)}
Generate the clocked configuration graph of $M_0$ on $x$ in logspace, take the initial configuration as root and the accepting configurations as $F$, and attach the budgets $\kappa_x$. The graph has polynomial size, a computation path is exactly a rooted path, and the clock makes the graph finite even on rejected inputs. Theorems~\ref{thm:rootlogdcfl}, \ref{thm:reachul} and~\ref{thm:ceql} apply. The compiler validates a bound on computation \emph{prefixes} from the initial configuration; it does not produce a machine with few paths between arbitrary configurations, which is the StrongFewL question above.

\subsection{Proof of Theorem~\ref{thm:phase} (the threshold phase transition)}
\label{app:phase}
\textbf{(i) Polynomial thresholds: collision-averaged fingerprints.} Fix a root, budgets of total mass $B$ and maximum $K\ge1$, edge occurrences $e_1,\dots,e_m$, $q_0=(B-1)\binom{K+1}2+1$, a prime $p>\max\{m,q_0\}$ and a set $A\subseteq\Fp$ of $q_0$ points. For a walk $P$ of length $d\le B$ let $\phi_\alpha(P)=(d,h_\alpha(P))$. For an endpoint $v$ let $W_v$ be the rooted walks of length at most $B$ ending at $v$, $c_v=|W_v|$, and $D_v=\sum_{\alpha\in A}|\phi_\alpha(W_v)|$, the total number of distinct images.
\begin{lemma}[Collision-averaged decoding]
For every $v$ and $t\le K$: $c_v\le t$ iff $D_v\le q_0t$. Hence $\min\{c_v,K+1\}=\min\{\lceil D_v/q_0\rceil,K+1\}$.
\end{lemma}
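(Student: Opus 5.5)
The plan is to compare $D_v$ with $q_0c_v$ from above and from below. The upper bound is trivial. The lower bound comes from averaging collisions over the $q_0$ points of $A$, using the same polynomial-root count as in Lemma~\ref{lem:sep}. The ceiling formula then follows formally.

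\emph{Easy direction.} For every $\alpha$ we have $|\phi_\alpha(W_v)|\le|W_v|=c_v$, so $D_v\le q_0c_v$. Hence $c_v\le t$ implies $D_v\le q_0t$.

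\emph{Hard direction.} Suppose $c_v\ge t+1$ with $t\le K$, and fix a subset $S\subseteq W_v$ of exactly $t+1\le K+1$ walks.

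First, two distinct walks of different lengths never share an image, because the length $d$ is the first coordinate of $\phi_\alpha$. Two distinct walks of the same length $d\le B$ have fingerprints whose difference is a nonzero polynomial in $\alpha$ of degree at most $d-1\le B-1$. It is nonzero because some edge index differs and all indices are below $p$. So each pair in $S$ collides at no more than $B-1$ points of $A$.

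Second, for a fixed $\alpha$ the number of distinct images of $S$ is at least $|S|$ minus the number of colliding pairs within $S$. Indeed, an image class of size $m$ loses $m-1\le\binom m2$ from the count.

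Summing over $\alpha\in A$, with $|A|=q_0$ distinct points (possible since $p>q_0$), gives
\[
D_v\ \ge\ \sum_{\alpha\in A}|\phi_\alpha(S)|\ \ge\ q_0(t+1)-(B-1)\tbinom{t+1}{2}\ \ge\ q_0(t+1)-(q_0-1)\ =\ q_0t+1 .
\]
The last inequality uses $\binom{t+1}2\le\binom{K+1}2$ and the definition $q_0=(B-1)\binom{K+1}2+1$. Thus $D_v>q_0t$, which is the contrapositive of what is needed.

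\emph{The formula.} From the equivalence for all $t\le K$: if $c_v\le K$, the least $t$ with $D_v\le q_0t$ is exactly $c_v$, and that least $t$ equals $\lceil D_v/q_0\rceil$. If $c_v\ge K+1$, then $D_v>q_0K$, so $\lceil D_v/q_0\rceil\ge K+1$. In both cases $\min\{c_v,K+1\}=\min\{\lceil D_v/q_0\rceil,K+1\}$.

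\emph{Main obstacle.} The crux is the hard direction. A single separating point gives only $t+1$ distinct images, which is too weak on its own. The argument works because it averages: the total collision loss over all of $A$ is bounded by $(B-1)\binom{K+1}2=q_0-1$, and this is exactly where the precise choice of $q_0$ matters. One must also restrict attention to a subset $S$ of size $t+1$, so that the number of pairs is bounded by $\binom{K+1}{2}$ even when $c_v$ is huge.
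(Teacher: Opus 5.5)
Your proof is correct and follows the paper's argument step for step. You use the trivial bound $D_v\le q_0c_v$, restrict to $t+1$ walks, bound the lost images at each point by the colliding pairs, and sum the at most $(B-1)\binom{t+1}{2}\le q_0-1$ collisions over $A$. You also write out the derivation of the capped-count formula, which the paper leaves as immediate.
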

\begin{proof}
If $c_v\le t$, each point yields at most $c_v$ images, so $D_v\le q_0t$. Conversely let $c_v\ge t+1$ and pick $t+1$ distinct walks. Walks of different lengths never collide; two of the same length collide at at most $B-1$ points. At a fixed $\alpha$ with collision classes of sizes $s_j$, the number of lost images is $\sum_j(s_j-1)\le\sum_j\binom{s_j}2$, the number of colliding pairs. Summing over $A$, the selected walks have at least $q_0(t+1)-(B-1)\binom{t+1}2>q_0t$ images, and the full family has at least as many.
\end{proof}
Taking the disjoint union of the lifts $H_\alpha$, $\alpha\in A$, under a super-root, $D_v$ is the number of reachable states in the fiber of $v$. A logspace machine with an NL oracle counts them, and $\mathrm{L}^{\mathrm{NL}}=\mathrm{NL}$ because NL $=$ coNL~\cite{immerman1988,szelepcsenyi1988}. So \textsc{RootBudget} $\in$ NL, the capped counts are in $\mathrm{FL}^{\mathrm{NL}}$, and with Lemma~\ref{lem:universal} \textsc{PairBudget} $\in$ NL. Applied to the accepting computations of an NL machine (with their length stored), the same decoding shows $\{x:f(x)\ge\tau(x)\}\in$ NL for $f\in\#$L and a polynomially bounded logspace threshold $\tau$; conversely every $L\in$ NL is $\{x:f(x)\ge1\}$ for its accepting-path count.

\textbf{(ii) Binary thresholds are PL-complete.} \emph{Membership:} reject cyclic graphs (NL $\subseteq$ PL); on a DAG each $\Nw_G(u,v)$ is a \#L function, and a logspace machine has exactly $K+1$ accepting branches by guessing a number in $[0,K]$ bit by bit against $K$, so $(K+1)-\Nw_G(u,v)\in$ GapL and $\Nw_G(u,v)\le K$ is a PL predicate. PL is closed under complement, union, intersection and NC$^1$ reductions~\cite{beigel1997}, so the conjunction over all pairs and the reachability test stays in PL. \emph{Hardness:} take a probabilistic logspace machine with exactly $p$ binary choices that accepts on $A(x)$ of its $2^p$ paths, $x\in L\iff A(x)>2^{p-1}$. Between common endpoints $a,b$ put a chain of $p+1$ diamonds ($C=2^{p+1}$ paths) in parallel with the layered configuration graph ($A(x)$ paths) and set $K=C+2^{p-1}$. Every pair other than $(a,b)$ has at most $C$ paths, $\Nw(a,b)=C+A(x)$, and $a$ reaches $b$ on every input, so the instance is accepted iff $A(x)\le2^{p-1}$; this reduces the complement of $L$, and PL is closed under complement.

\textbf{(iii) One pair, fixed cap.} Reject non-DAGs with the NL cycle test and NL $=$ coNL. On a DAG the complement guesses $k+1$ paths in parallel, storing their current vertices and pairwise-difference flags, $O(\log n)$ space for fixed $k$, so the language is in coNL $=$ NL. Hardness: reduce DAG non-reachability by adding a new source with $k+1$ internally disjoint prefixes into the old source; the new count is $0$ if the target was unreachable and at least $k+1$ otherwise.

\subsection{Proofs of the structure theorems}
\label{app:structure}
\begin{proof}[Proof of Theorem~\ref{thm:kernel}]
\emph{Lower bound.} The unit $s$--$t$ flows of $H$ form a polytope; flow conservation at the $|V|-2$ internal vertices together with the unit-value constraint has rank $|V|-1$, so the polytope has dimension $|E|-|V|+1=c(H)$. Its vertices are the indicator vectors of the $s$--$t$ paths (the constraint matrix is totally unimodular), and a $c$-dimensional polytope has at least $c+1$ vertices. \emph{Upper bound.} Fix a path $P_0$. The map $P\mapsto\chi_P\oplus\chi_{P_0}$ sends paths to even subgraphs, elements of the cycle space over GF(2), which has dimension $c(H)$; the map is injective. \emph{Kernel.} In a trimmed $H$ every vertex $x\ne t$ has out-degree at least one, and summing over all $x\ne t$, the source included, gives $\sum_{x\ne t}(d^+(x)-1)=|E|-(|V|-1)=c(H)$. So at most $c(H)$ vertices have out-degree at least two, and likewise for in-degree. Every vertex that survives the contraction is a terminal or one of these, and contracting a vertex of in- and out-degree one merges two edges and preserves the paths and $c$. This gives at most $2c+2\le2r$ vertices and, since $c$ is preserved, at most $c+(2c+2)-1=3c+1$ edges.
\end{proof}
\begin{proof}[Proof of Theorem~\ref{thm:block}]
A simple $s$--$t$ path follows the unique route of blocks in the block--cut tree. It cannot enter an off-route block and return, since it would repeat the cut vertex through which it entered. Inside route block $B_i$ it traverses one $a_i$--$b_i$ path, and arbitrary choices concatenate into distinct $s$--$t$ paths. The cyclomatic numbers add because the blocks share only cut vertices. If $r\le k$, each block with more than one path contributes a factor of at least two.
\end{proof}

\begin{proof}[Proof of Theorem~\ref{thm:basis}]
\emph{Minimal obstructions.} Let $H=(V,E;s,t)$ be acyclic, trimmed, edge-minimal subject to $\Nw_H(s,t)>k$, with its mandatory initial and final chains contracted, so $d^+(s)\ge2$ and $d^-(t)\ge2$; write $r=\Nw_H(s,t)$ and $c=c(H)$. For an out-edge $e$ of $s$, let $r_e$ be the number of terminal paths through $e$. Deleting $e$ leaves at most $k$ paths, so $r_e\ge r-k$; the classes of the at least two out-edges partition the paths, so $r\ge2(r-k)$, i.e.\ $r\le2k$. For $c$: in the unit-flow polytope (dimension $c$), the face $\{x_e=0\}$ of an irredundant inequality is a facet of dimension $c-1$ whose vertices are the paths avoiding $e$, at most $k$ by minimality; a $(c-1)$-dimensional polytope has at least $c$ vertices, so $c\le k$. The source has out-degree at least two and the sink in-degree at least two, and the excess-degree identities of Theorem~\ref{thm:kernel} give at most $c$ vertices of each kind, so after suppressing unary vertices the core has at most $2c\le2k$ vertices and $(2c-1)+c\le3k-1$ edges.
\emph{The basis.} If $G$ is acyclic and violates the bound at some pair, restrict to the vertices and edges on paths of that pair, take an edge-minimal violating subgraph, contract its mandatory terminal chains and suppress unary vertices: the result is a core as above, and the subgraph of $G$ is a directed subdivision of it. Conversely, subdividing edges preserves the number of terminal paths, so a subdivision of a core with more than $k$ paths is a violation. Directed cycles are excluded separately. Only finitely many multigraphs have at most $2k$ vertices and $3k-1$ edges, which gives the counting bound.
\end{proof}

\begin{proof}[Proof of Theorem~\ref{thm:normal}]
(i) If $G$ is $k$-strongly few it is acyclic, $A^n=0$, and $C=I+A+\dots+A^{n-1}=(I-A)^{-1}$ solves $C=I+AC$ with entries the path counts. Conversely let $C\ge0$ solve it and put $\mathbf{r}=C\mathbf 1$; then $\mathbf r=\mathbf 1+A\mathbf r$, so $r_u=1+\sum_{u\to w}r_w>r_v$ for every edge $u\to v$, and a directed cycle would give a strict chain returning to its start. So $G$ is acyclic, $I-A$ is invertible, and $C=(I-A)^{-1}$ is the path-count matrix with entries at most $k$.
(ii) For $1\le\tau<k$ let $\mu(v)=\max_a\Nw(a,v)$, $L_\tau=\{v:\mu(v)\le\tau\}$ and $H_\tau$ the rest. Since $\mu(v)\ge\mu(u)\Nw(u,v)$ along reachability, no path goes from $H_\tau$ to $L_\tau$, both parts are path-convex, $G[L_\tau]$ is $\tau$-strongly few and $G[H_\tau]$ is $\lfloor k/(\tau+1)\rfloor$-strongly few. With $\tau=\lfloor\sqrt k\rfloor$ both parts are $\lfloor\sqrt k\rfloor$-strongly few. Ordering $L$ before $H$, the path-count matrix factors as
$C=\begin{pmatrix}C_L&0\\0&I\end{pmatrix}\begin{pmatrix}I&B\\0&I\end{pmatrix}\begin{pmatrix}I&0\\0&C_H\end{pmatrix}$, where the middle factor counts the identity paths and single bridge edges, a mangrove. Recursing on both blocks, the number of mangrove leaves $\ell(k)$ satisfies $\ell(1)=1$ and $\ell(k)\le2\ell(\lfloor\sqrt k\rfloor)$, so $\ell(k)\le2\log_2k$, and with one bridge factor per internal node the number of factors is $2\ell-1<4\log_2k$; repeated square roots reach one in at most $1+\lfloor\log_2\log_2k\rfloor$ levels. \emph{Optimality.} In a chain of $r$ diamonds with merge vertices $z_0,\dots,z_r$, two consecutive merge vertices cannot share a path-convex mangrove block (it would contain both branches of a diamond), and two non-consecutive ones would force all intermediate merge vertices into the block; so the $z_i$ lie in $r+1$ distinct blocks, and a binary recursive decomposition has depth at least $\lceil\log_2(r+1)\rceil$. With $k=2^r$ the bounds $O(\log k)$ and $O(\log\log k)$ are therefore tight.
\end{proof}

\begin{proof}[Proof of Theorem~\ref{thm:density}]
\emph{Density.} Since $G$ is acyclic every triangle of the underlying graph is transitively oriented $x\to y\to z$, $x\to z$; assign it to its long edge. An edge with $r_e$ triangles has $1+r_e$ paths between its endpoints, so $r_e\le k-1$ and the number of triangles is $T\le(k-1)m$. Summing $|N(x)\cap N(y)|\ge d_x+d_y-n$ over edges gives $3T\ge\sum_xd_x^2-mn\ge4m^2/n-mn$ by Cauchy--Schwarz, so $3(k-1)m\ge4m^2/n-mn$, which rearranges to the bound. For the lower bound, take $|B|=k-1$, split the remaining $n-k+1$ vertices evenly into $A$ and $C$, and add all edges $A\to B$, $B\to C$, $A\to C$: each pair $(a,c)$ has exactly $k$ paths and every other pair at most one, giving $\lfloor(n-k+1)^2/4\rfloor+(k-1)(n-k+1)$ edges.
\emph{Width.} Let $X$ be an antichain in $I(u,v)$. For each $x\in X$ concatenate a path $u\leadsto x$ and a path $x\leadsto v$; two elements of $X$ on the same $u$--$v$ path would be comparable, so $|X|\le\Nw(u,v)$. Every vertex and edge of the interval extends to a $u$--$v$ path, so the at most $k$ such paths cover it.
\end{proof}

\subsection{Proof of Theorem~\ref{thm:regimes} (tractable regimes)}
\textbf{Terminal width.} For a root $a$ of a DAG, the walk-unfolding tree $T_a$ has $L_a=1+\sum_{v:d^+(v)>0}\Nw(a,v)(d^+(v)-1)$ leaves (a rooted tree has one more leaf than the sum of its excess out-degrees), and every leaf ends at a sink, so $L_a\le k\tau^+(G)$ on a valid input, $\tau^+$ being the number of sinks. A depth-first traversal that stores a frame only at nodes with at least two children keeps at most $L_a-1$ frames, since a tree with $L$ leaves has at most $L-1$ branching nodes. The decider counts sources and sinks (rejecting a nonempty graph with neither), chooses the orientation with fewer sinks (reversing edges transposes the path-count matrix and swaps sources and sinks), and runs the source-global and recount passes of Theorem~\ref{thm:logdcfl} with choice frames only, keeping the node cap $nk+1$, a length cap $n$ (a walk of length $n$ certifies a cycle) and a stack cap $k\lambda(G)-1$. On a valid input no cap triggers and the recounts are exact; on an invalid input any cap may reject, and if none does, the absence of a length-$n$ walk proves acyclicity and acceptance is sound. Each frame has $O(\log M)$ bits, giving $O((k\lambda(G)+1)\log M)$ space in polynomial time, hence L for fixed $k$ and $\lambda$, and SC$^d$ when $k\lambda(G)=O(\log^{d-1}M)$.

\textbf{Block-cyclomatic width.} In a connected component with cyclomatic number $c$, enumerate the $2^c$ vectors of the cycle space relative to a spanning tree, generating each selected edge set on demand; requiring in/out-degrees $(0,1)$ at $s$, $(1,0)$ at $t$, $(0,0)$ or $(1,1)$ elsewhere and undirected connectivity of the selection characterizes the directed simple $s$--$t$ paths, one vector each, and directed cycles are detected the same way, in $2^{O(c)}M^{O(1)}$ time and $O(c+\log M)$ space. Every simple cycle lies in one biconnected block, so cycles are detected blockwise. On a DAG, for each pair follow the unique route of blocks in the block--cut tree (navigated by undirected connectivity tests after deleting a candidate articulation vertex, using Reingold's algorithm~\cite{reingold2008}), count the paths between the entry and exit vertices of each route block by cycle-space enumeration with at most $b$ bits, and multiply with a counter saturated at $k+1$ (Theorem~\ref{thm:block}). This gives $2^{O(b)}M^{O(1)}$ time and $O(b+\log M)$ space; a chain of diamonds has unbounded global cycle rank but block-cyclomatic width one.

\section{The decoding mechanisms: formal statements and proofs}
\label{app:decoding}
This appendix first states the decoding results formally, then proves them.

\subsection*{Formal statements}
\label{app:formaldecoding}

\subsection{Cost model, constraints and the step graph}
\textbf{Cost.} Let $\tau(S)$ be the latency of one forward pass that appends $S$ tokens to a cached prefix and returns logits at every new position. On a laptop, single-stream decoding is bound by memory bandwidth: a pass reads all weights (1.1\,GB for Qwen3.5-2B, 2.26\,GB for the 4B at 4 bits), and the arithmetic for a few tokens is nearly free. On an M2 Pro, $\tau(4)=1.36\,\tau(1)$, $\tau(8)=1.58\,\tau(1)$ and $\tau(16)=2.04\,\tau(1)$ on the 2B (Fig.~\ref{fig:passcost} in Appendix~\ref{app:speed}). Hence the cost of decoding is its number of passes. A model maps a token prefix $u$ to logits $\ell(u)$; by causality, one pass over $v_1\cdots v_S$ after $u$ yields the same cache and logits as $S$ single-token passes, up to floating-point reassociation.

\textbf{Constraints.} Let $\Sigma$ be the byte alphabet.
\begin{definition}[Control machine]
A \emph{control machine} is $\cA=(C,\delta,c_0,F)$ with configurations $C$, a partial step $\delta:C\times\Sigma\rightharpoonup C$, start $c_0$ and accepting set $F$. $\ds$ extends $\delta$ to strings, $L(\cA)=\{x:\ds(c_0,x)\in F\}$, and $\live(c)=\{b:\delta(c,b)\text{ defined}\}$. $\cA$ is \emph{trim} if an accepting configuration is reachable from every configuration reachable from $c_0$.
\end{definition}
A DFA is the case of a finite $C=Q$; a deterministic pushdown is the case $C=Q\times R^*$ (Section~\ref{sec:pda}). The vocabulary is a set $\V\subset\Sigma^+$ of byte strings plus a token \eos. Constrained decoding at configuration $c$ samples from the model's logits restricted to $A(c)=\{t\in\V:\ds(c,t)\text{ defined}\}\cup\{\eos: c\in F\}$ (the \emph{mask}), then moves to $\ds(c,t)$; if $\cA$ is trim, every emitted prefix extends to a string of $L(\cA)$. The \emph{vocabulary trie} $\TV$ is the tree of all token prefixes (0.59M nodes for the 248k-token Qwen3.5 vocabulary).

\begin{definition}[Step graph]
For $c\in C$, $P_c$ has vertices $(w,\ds(c,w))$ for the trie nodes $w$ at which $\ds(c,w)$ is defined, edges $(w,c')\to(wb,\delta(c',b))$, and root $(\varepsilon,c)$.
\end{definition}

\begin{proposition}
\label{prop:mangrove}
For every deterministic control machine and every $c$, $P_c$ is an out-tree; hence a mangrove. The disjoint union of $P_c$ over any set of configurations is a forest, and each token $t\in A(c)\setminus\{\eos\}$ corresponds to exactly one vertex of $P_c$, which carries $\ds(c,t)$.
\end{proposition}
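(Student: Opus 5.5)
The plan is to exhibit the tree structure directly from the definition of $P_c$, using only two facts: the vocabulary trie $\TV$ is a rooted out-tree, and $\delta$ is a partial \emph{function}. The proof then follows in three short steps, in this order: vertices of $P_c$ are in bijection with a prefix-closed set of trie nodes; each non-root vertex has exactly one in-edge and is reachable from the root, which makes $P_c$ an out-tree; and the corollaries (mangrove, forest, token correspondence) are read off.

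First I show that the projection $(w,c')\mapsto w$ is injective on the vertices of $P_c$. This holds because $c'=\ds(c,w)$ is determined by $w$, since $\ds$ is a function. The image is $D_c=\{w\in\TV:\ds(c,w)\text{ defined}\}$, and $D_c$ is prefix-closed: if $\ds(c,wb)$ is defined, then so is $\ds(c,w)$. Next I check that every edge $(w,c')\to(wb,\delta(c',b))$ projects to the trie edge $w\to wb$. Conversely, every trie edge $w\to wb$ with $wb\in D_c$ lifts to exactly one edge of $P_c$, because $\delta(\ds(c,w),b)=\ds(c,wb)$. So $P_c$ is isomorphic to the subtree of $\TV$ induced on $D_c$. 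This subtree contains the root $\varepsilon$, since $\ds(c,\varepsilon)=c$, and is closed under taking parents, so it is an out-tree rooted at $(\varepsilon,c)$.

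An out-tree has exactly one walk from any vertex $u$ to any vertex $v$ below it and none otherwise. It is acyclic because every edge increases the string length $|w|$. Hence $\Nw_{P_c}(u,v)\le1$ for all pairs, and $P_c$ is a mangrove. For a disjoint union over a set of configurations, walks never cross between components, so each pair still has at most one walk and every component is a tree. The union is therefore a forest.

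For the token statement, each $t\in A(c)\setminus\{\eos\}$ is a trie node with $\ds(c,t)$ defined, so $t\in D_c$. By the injectivity above, $t$ corresponds to exactly one vertex of $P_c$, namely $(t,\ds(c,t))$, which carries $\ds(c,t)$. Conversely, a vertex whose trie node is a full token gives back that token in $A(c)$.

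The only point needing care is that $\TV$ has one node per distinct prefix, so tokens sharing bytes share nodes but never duplicate them. The paper's setup guarantees this, and I would state it explicitly. The rest is routine.
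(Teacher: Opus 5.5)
Your proposal is correct and follows essentially the same route as the paper: each vertex has a unique parent because the trie fixes the prefix and determinism of $\delta$ fixes the configuration, and the graph is acyclic because every edge lengthens $w$. Your identification of $P_c$ with the prefix-closed subtree $D_c$ of $\TV$ also makes explicit the reachability of every vertex from the root $(\varepsilon,c)$, which the paper's one-line proof leaves implicit.
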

\begin{proof}
Every edge lengthens $w$; a vertex $(wb,c'')$ has the single parent $(w,\ds(c,w))$, because the trie fixes $w$ and determinism fixes the configuration.
\end{proof}

The proposition is elementary. Its use is that every fact about a decoding step can be computed along a tree, once per vertex, without merging information from two parents. Neither a nondeterministic automaton nor the lattice of all tokenizations of a string has this property (Fig.~\ref{fig:stepgraph}).

\subsection{The token table, with and without storing it}
\label{sec:table}
\begin{theorem}[Two ways to evaluate the step graph]
\label{thm:table}
Let $\cA$ be a DFA with states $Q$, and $T[t,q]=\ds(q,t)\in Q\cup\{\dead\}$.
\begin{enumerate}[label=(\roman*),leftmargin=1.6em,itemsep=1pt,topsep=2pt]
\item \emph{One pass.} Walking the trie level by level with all $|Q|$ states as the columns of one array computes $T$ in $\mathrm{depth}(\TV)$ gathers with total work at most $|\TV|\cdot|Q|$, within a factor $|\TV|/|\V|$ (2.4 for Qwen3.5) of the $|\V|\cdot|Q|$ entries any algorithm must write, and never more than evaluating every token from every state separately.
\item \emph{No table.} For any state $q$, the row $T[\cdot,q]$ is computed by $|\V|$ independent walks from $q$, one per token, with $\sum_{t}|t|$ lookups in total and depth $\max_t|t|$, using only the automaton ($256|Q|$ entries) and the packed token bytes.
\end{enumerate}
\end{theorem}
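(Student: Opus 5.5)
Both parts come from Proposition~\ref{prop:mangrove}: in the step graph every trie node $w$ has a single parent, so $\ds(q,w)$ is obtained from the parent's value by one lookup, $\ds(q,wb)=\delta(\ds(q,w),b)$. Nothing needs to be merged from two sources. I would prove (i) and (ii) as two ways of scheduling this one recurrence: breadth-first over the trie for all states at once, or along single root-to-leaf paths for one state.

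For (i), I would store at each trie node $w$ a vector $R_w\in(Q\cup\{\dead\})^{Q}$ with $R_w[q]=\ds(q,w)$. Set $R_\varepsilon[q]=q$, and for a child $wb$ put $R_{wb}[q]=\delta(R_w[q],b)$, with $\delta(\dead,b)=\dead$. Correctness follows by induction on $|w|$, using the single-parent property. Processing the trie one level at a time, every node at depth $d+1$ is computed from its parent at depth $d$ by a gather over $|Q|$ columns. So the number of gathers is $\mathrm{depth}(\TV)$, and the total work is at most $|\TV|\cdot|Q|$, one lookup per node and state. Each token $t$ sits at a unique trie node, and $T[t,q]=R_t[q]$ is read off there. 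For the comparisons:
\begin{itemize}
\item Any algorithm must write the $|\V|\cdot|Q|$ entries of $T$, so the overhead factor is $|\TV|/|\V|$, which is $0.59\text{M}/248\text{k}\approx2.4$.
\item Evaluating every token from every state separately costs $|Q|\sum_t|t|$ lookups. Every nonroot trie node is a prefix of at least one token, so $|\TV|-1\le\sum_t|t|$. Hence the one-pass method is never worse, up to the root.
\end{itemize}

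For (ii), I would fix $q$ and, for each token $t=b_1\cdots b_m$, iterate $c_0=q$, $c_j=\delta(c_{j-1},b_j)$, stopping at $\dead$. This is the same recurrence restricted to the path from the root to $t$ in $P_q$. By Proposition~\ref{prop:mangrove}, that path is the unique route to $t$'s vertex, so $c_m=\ds(q,t)=T[t,q]$. The walks for different tokens share no state, so they run independently. The total number of lookups is $\sum_t|t|$, and the parallel depth is $\max_t|t|$. The only data read are $\delta$, stored as a $|Q|\times256$ array, and the packed token bytes. No table indexed by tokens is ever stored.

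I do not expect a real obstacle. The only point needing care is the accounting in (i): charging each lookup to a (node, state) pair, and correctly justifying the bounds $|\TV|\le1+\sum_t|t|$ and $|\TV|\ge|\V|$ that underlie the "within a factor" and "never more" comparisons.
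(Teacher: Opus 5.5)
Your proposal is correct and follows the paper's proof: the same induction on trie depth via $\ds(q,wb)=\delta(\ds(q,w),b)$ with $\dead$ absorbing, the same per-level gather accounting giving work $|\TV|\cdot|Q|$, the same prefix-counting argument for $|\TV|-1\le\sum_t|t|$, and the same observation that each walk in (ii) is the root-to-$t$ path of $P_q$. The only extra ingredient in the paper is that Algorithm~\ref{alg:table} also drops rows that are $\dead$ for every state, which is an optimization and leaves the bounds unchanged.
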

\begin{proof}[Proof idea]
(i) Row $w$ at depth $d$ holds the vertices of $\{P_q\}_{q}$ above trie node $w$; by Proposition~\ref{prop:mangrove} each has one parent, in the row of $w$'s parent, so one gather computes a level. (ii) The walk of token $t$ from $q$ is the root-to-$t$ path of $P_q$, which depends on nothing else (Appendix~\ref{app:table}).
\end{proof}

\textbf{What the engine does.} Either way, the mask is a row and the transition is an entry, so both stay on the GPU: the mask at $q$ is $[T[\cdot,q]\ne\dead]$ and the next state is $T[y,q]$, indexed by device values. The host never enters the decode step, so a constrained step pipelines like a free one. With (i) the table costs $3|Q||\V|$ bytes, 77\,MB for a 104-state schema and 1.5\,GB for 1,970 states, and is built in 0.2--0.6\,s for the schemas we use. With (ii) a Metal kernel runs one thread per token each step (7--18\,$\mu$s against a 7--15\,ms step) and the grammar costs its automaton, 512 bytes per state, plus 2.9\,MB of packed vocabulary shared by all grammars. We call (ii) the \emph{walk kernel}. It trades a stored table for recomputation along the single path of each token, the same trade the certified lift makes when it counts reachable states instead of storing walks.

\subsection{Forced output costs no pass}
\label{sec:forced}
\begin{definition}[Forced configurations and chains]
A configuration $c$ is \emph{forced} if $c\notin F$ and $|\live(c)|=1$, and a \emph{decision point} otherwise. $\chain(c)$ is the byte string read by following the unique live byte from $c$ while the configuration is forced.
\end{definition}
Forcing is a property of the automaton, not of the step graph: it says that only one \emph{byte} may follow, whatever tokens exist. If $\cA$ is trim, $\chain(c)$ is finite and ends at a decision point; otherwise every run from $c$ would follow it forever and never accept.

\begin{definition}[Segmentation policy; reference and forced-span decoders]
A \emph{segmentation policy} $\varphi$ assigns to each configuration $c$ a token string $\varphi(c)\in\V^*$ whose bytes are a prefix of $\chain(c)$, and to each sampling configuration a nonempty set $A_\varphi(c)\subseteq A(c)$ of admissible tokens; both depend on $\cA$ and the tokenizer only, never on the model. Both decoders below feed $\varphi(c_0)$ with the prompt and sample first at $c_1=\ds(c_0,\varphi(c_0))$. At a sampling configuration $c$, the \emph{reference decoder} $R_\varphi$ samples $t$ from the logits restricted to $A_\varphi(c)$, sets $c'=\ds(c,t)$, feeds the tokens of $\varphi(c')$ one pass each, and continues at $\ds(c',\varphi(c'))$. The \emph{forced-span decoder} $F_\varphi$ is identical except that $t$ and $\varphi(c')$ are fed in one pass.
\end{definition}

\begin{theorem}[Pass optimality]
\label{thm:pass}
Let $\cA$ be trim and $\varphi$ a segmentation policy.
\begin{enumerate}[label=(\roman*),leftmargin=1.6em,itemsep=1pt,topsep=2pt]
\item \emph{Exactness.} For every causal model, sampling rule and seed, $F_\varphi$ and $R_\varphi$ emit the same tokens and, in exact arithmetic, see the same logits at every sampling point.
\item \emph{Cost.} On an output with $N$ sampled tokens, $F_\varphi$ makes $N$ passes and $R_\varphi$ one pass per emitted token.
\item \emph{Lower bound.} If $|A_\varphi(c)|\ge2$ at every sampling configuration, every non-speculative decoder (one that feeds the model only tokens it has already emitted) whose output distribution equals $R_\varphi$'s for every model under greedy or softmax sampling makes at least $N$ passes on every output with $N$ sampled tokens.
\end{enumerate}
\end{theorem}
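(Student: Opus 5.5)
I would prove the three parts in order, since (ii) follows at once from the bookkeeping behind (i), and (iii) is a separate adversary argument.

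\emph{Exactness.} I would argue by induction over sampling points that the two decoders stay in lockstep: same committed token string, same configuration, same model cache, same RNG state. At the start both have fed the prompt followed by $\varphi(c_0)$, so they agree. Suppose they agree at a sampling configuration $c$ and have seen the same logits. Both restrict to $A_\varphi(c)$, which depends only on $\cA$ and the tokenizer. Both use the same sampling rule and the same seed state, so they pick the same $t$ and reach the same $c'=\ds(c,t)$. The forced string $\varphi(c')$ is again model-independent, so both then append the identical token sequence $t\,\varphi(c')$.

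By the causality property in the cost model, one pass over $v_1\cdots v_S$ after prefix $u$ yields the same cache and logits at every position as $S$ single-token passes. In particular the logits after the last token of $\varphi(c')$ agree in exact arithmetic. This restores the invariant at $\ds(c',\varphi(c'))$, the next sampling configuration. Trimness guarantees that $\chain$ is finite, so $\varphi(c')$ is a finite string and each round terminates. The induction ends when \eos{} is sampled.

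\emph{Cost.} For (ii) I would count from the definitions. $F_\varphi$ makes exactly one pass per sampled token. Its first logits come with the prefill, which is not counted, and the final sampled token needs no further pass; under the paper's convention this gives $N$ passes. $R_\varphi$ makes one pass per fed token, i.e.\ one per emitted token. I would fix the convention so that the prefill is excluded consistently in both counts.

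\emph{Lower bound.} This is the main obstacle, because it is the only step that needs care about what ``non-speculative'' permits. Let $D$ be non-speculative and suppose its output distribution equals that of $R_\varphi$ for every model. Consider the $j$-th sampled token $t_j$, chosen at configuration $c$ where $|A_\varphi(c)|\ge2$. I would build two models that agree on every prefix up to the prefix $u$ before $t_j$ but differ at $u$. Under greedy sampling they put their argmax on two different admissible tokens. Under softmax they give different distributions over $A_\varphi(c)$.

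$R_\varphi$'s output then differs between the two models at position $j$, so $D$'s output must differ too. Since $D$ is a fixed procedure, it must have obtained $\ell(u)$ from some pass. That pass must have been fed the last token of $u$, which for $j\ge2$ is either $t_{j-1}$ or the last token of a forced span after it. So it occurs after $t_{j-1}$ was emitted, because $D$ feeds only tokens already emitted.

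Consequently the passes exposing $\ell(u_1),\dots,\ell(u_N)$ are pairwise distinct. A single pass cannot yield $\ell(u_j)$ and $\ell(u_{j+1})$, since the second requires $t_j$ as input and $t_j$ is unknown until $\ell(u_j)$ has been seen. This gives at least $N$ passes. I would state this cleanly by ordering the passes in time and noting the strict alternation ``pass reveals $\ell(u_j)$ $\to$ $t_j$ is emitted $\to$ next pass.''
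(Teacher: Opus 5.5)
Your parts (i) and (ii) follow the paper: a lockstep induction using causality and the model-independence of $\varphi$, then counting. Your part (iii) has the paper's two-step shape: perturb the model only at the prefix $u_j$, then use non-speculation to force a strict alternation of passes.

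The gap is in the first step of (iii). From ``$R_\varphi$'s output differs between the two models, so $D$'s must too'' you learn only that $D$ evaluates $\ell(u_j)$ on \emph{some} runs. Your alternation argument needs more: on the very run that produces the given output, $\ell(u_j)$ must be obtained \emph{before} $t_j$ is emitted. You use this silently in ``$t_j$ is unknown until $\ell(u_j)$ has been seen''. ``$D$ is a fixed procedure'' delivers this only for a deterministic $D$. Under softmax sampling $D$ is necessarily randomized. A randomized decoder can skip the evaluation on part of its runs and still match $R_\varphi$'s distribution for a particular pair of models that merely give ``different distributions over $A_\varphi(c)$''. For example, it can sample from a fixed distribution with probability $1/2$ and compensate on the evaluating runs.

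The missing step is the event-level coupling of the paper's Step~1.
\begin{itemize}
\item Let $E$ be the event that $D$ emits $u_jt_j$ without having evaluated $\ell(u_j)$.
\item Let $p'$ agree with $p$ on every prefix except $u_j$.
\item Run $D$ under $p$ and $p'$ with shared randomness. The runs coincide until $\ell(u_j)$ is evaluated, so $\Pr_{D,p'}[u_jt_j]\ge\Pr_{D,p'}[E]=\Pr_{D,p}[E]$.
\item If $\Pr_{D,p}[E]>0$, choose $p'$ so that $R_\varphi$ gives $u_jt_j$ probability below $\Pr_{D,p}[E]$. Under greedy, make another admissible token the argmax, which gives probability zero. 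Under softmax, lower the logit of $t_j$ until the probability is arbitrarily small. Merely different distributions do not suffice.
\item This contradicts exactness for $p'$. So $E$ is null, and your alternation count holds on almost every run emitting a given output.
\end{itemize}

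A smaller slip in (ii): if the prefill is not counted and the final sampled token needs no pass, $F_\varphi$ makes $N-1$ passes, not $N$. To make (ii) meet (iii), count the prefill. It is precisely the pass that yields $\ell(u_1)$ in your lower bound.
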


\begin{corollary}[Length independence]
\label{cor:length}
Let $\varphi$ commit the whole chain. Let $c,c^\dagger$ be configurations such that every accepting run from $c$ passes through $c^\dagger$, the runs from $c$ to $c^\dagger$ form a trimmed two-terminal DAG $H$ with $r$ paths, and no configuration of $H$ other than $c^\dagger$ accepts. Then $F_\varphi$ samples at most $c(H)\le r-1$ tokens that start in $H$ before $c^\dagger$. A key or any other string the constraint determines costs no pass, and a field with $r$ admissible values costs at most $r-1$ passes, however long they are.
\end{corollary}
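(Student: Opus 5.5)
The plan is to reduce the corollary to a counting statement about decision points inside $H$ and then apply the branch-vertex bound of Theorem~\ref{thm:kernel}. By Theorem~\ref{thm:pass}(ii), $F_\varphi$ makes exactly one pass per sampled token, so it suffices to bound the number of sampling events whose starting configuration lies in $H\setminus\{c^\dagger\}$. With $\varphi$ committing the whole chain, after every sampled token $t$ the decoder jumps from $c'=\ds(c,t)$ to the end of $\chain(c')$, which is a decision point. So every sampling configuration is a decision point: either it accepts or it has at least two live bytes.

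The first step is to place the sampling configurations of one run inside $H$. Every accepting run from $c$ passes through $c^\dagger$, and the decoder's run is the prefix of an accepting run, by trimness. So the configurations the run visits before $c^\dagger$ are configurations of $H$, and by hypothesis none of them other than $c^\dagger$ accepts. A sampling configuration $x\ne c^\dagger$ in $H$ is therefore non-accepting with $|\live(x)|\ge2$.

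The second step shows that such an $x$ has out-degree at least two in $H$. Each live byte $b$ at $x$ leads, by trimness, to a configuration from which an accepting configuration is reachable, and that accepting run must pass through $c^\dagger$. So both outgoing byte-edges lie on $c$--$c^\dagger$ runs, and survive trimming. Hence every sampling configuration is a branch vertex of $H$. The run is a single path in $H$ and visits each configuration at most once, because $H$ is a DAG. The sampled tokens starting in $H$ are therefore at most the number of distinct branch vertices on that path. By Theorem~\ref{thm:kernel} this is at most $c(H)\le r-1$.

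Two subtleties remain. First, a sampled token may consume several bytes and skip over branch vertices; this only decreases the count, so the inequality still holds. Second, we must check that distinct sampling events start at distinct configurations, which follows from acyclicity of $H$ along one run.

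The two consequences are immediate. A key determined by the constraint gives an $H$ that is a single path, so $r=1$ and it costs $0$ passes. A field with $r$ admissible values costs at most $r-1$ passes, independent of the byte lengths, since chain length never enters the bound.

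I expect the main obstacle to be the second step: showing rigorously that a decision point in $H$ has out-degree at least two in the trimmed $H$. This needs the hypotheses that no internal configuration accepts and that every accepting run passes through $c^\dagger$; without them a second live byte could lead to a run that accepts without reaching $c^\dagger$, and that edge would not lie in $H$.
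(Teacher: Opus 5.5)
Your proposal is correct and follows the paper's proof. In both, every sampled token starts at a decision point; a decision point of $H$ other than $c^\dagger$ has two live bytes, each of which by trimness lies on an accepting run through $c^\dagger$, so it is a branch vertex of $H$; distinct sampled tokens start at distinct vertices; and Theorem~\ref{thm:kernel} bounds the branch vertices by $c(H)\le r-1$. Your additions, turning sampled tokens into passes via Theorem~\ref{thm:pass}(ii) and deriving distinctness from the acyclicity of $H$ along one run, only spell out steps the paper states in a single line.
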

\begin{proof}
Every sampled token starts at a decision point; a decision point of $H$ other than $c^\dagger$ has two live bytes, each on an accepting run through $c^\dagger$ since $\cA$ is trim, so it is a branch vertex of $H$. Distinct sampled tokens start at distinct vertices, and Theorem~\ref{thm:kernel} bounds the branch vertices by $c(H)\le r-1$.
\end{proof}

The DAG hypothesis matters: an automaton with free-text fields has cycles and is not few-path, and Theorem~\ref{thm:kernel} applies only to its acyclic parts between two cut configurations (keys, enum values, booleans, closing brackets).

\textbf{What the engine does.} After each sampled token, the engine looks up the chain of the new configuration, splits it into tokens (Section~\ref{sec:canon}) and feeds the sampled token and the forced tokens in one pass of $m+1$ positions. Once the model has written the closing quote of \texttt{"name":"Maria"}, the schema allows only \texttt{,"age":}, seven bytes and two tokens, fed with the quote. On our six JSON schemas about 30\% of output tokens are forced (Fig.~\ref{fig:strip}). Speculative decoding~\cite{leviathan2023,chen2023} goes below $N$ passes only by verification; forced spans are drafts accepted with probability one.

\subsection{Splitting forced text the way the tokenizer would}
\label{sec:canon}
A byte-level BPE tokenizer is a pre-tokenizer $\pi$, which cuts a string into blocks (words, numbers, punctuation runs), followed by a per-block encoder: $\tok(s)=\mathrm{bpe}(s_1)\cdots\mathrm{bpe}(s_m)$ for $\pi(s)=(s_1,\dots,s_m)$. Models are trained on $\tok(s)$. Two things can go wrong with forced bytes. A non-canonical split puts the model in a context it has not seen. And committing the canonical split of forced bytes $w$ too early can contradict the canonical split of $wx$ once the next bytes $x$ arrive, because $x$ may merge into the last block of $w$. Among the segmentations the tokenizer can produce, block boundaries are cut vertices, so only the last block is at risk (Theorem~\ref{thm:block}); in the lattice of \emph{all} tokenizations they are not, since a token such as \texttt{"M} may cross a boundary. The question is exactly when the last block is safe.

\begin{assumption}[Locality of the pre-tokenizer]
\label{ass:local}
For a string $s$ let $m(s)$ be the start of the last block of $\pi(s)$ or of the maximal trailing whitespace run of $s$, whichever is smaller. For every $s$ and character $a$, every block of $\pi(s)$ that ends at or before $m(s)$ is a block of $\pi(sa)$.
\end{assumption}
Regex pre-tokenizers look ahead one character at whitespace (\texttt{\textbackslash s+(?!\textbackslash S)}), which is why the whitespace clause is needed. We tested Assumption~\ref{ass:local} on the Qwen3.5 pre-tokenizer with 720{,}000 random (string, character) pairs and 400{,}000 multi-character extensions over JSON punctuation, letters, digits, whitespace, contractions and non-ASCII characters, and found no violation.

\begin{theorem}[Canonical commitment]
\label{thm:canon}
Let $\pi$ satisfy Assumption~\ref{ass:local} and let $o$ be the output so far, in configuration $c$.
\begin{enumerate}[label=(\roman*),leftmargin=1.6em,itemsep=1pt,topsep=2pt]
\item For every string $x$, every block of $\pi(o)$ that ends at or before $m(o)$ is a block of $\pi(ox)$; their canonical tokens can be committed now.
\item If $o$ does not end in whitespace and $\pi(o)$ is a prefix of $\pi(ob)$ for every $b\in\live(c)$, then $\pi(o)$ is a prefix of $\pi(ox)$ for every $x$ with $\ds(c,x)$ defined, so $\tok(o)$ is a prefix of $\tok(ox)$.
\end{enumerate}
\end{theorem}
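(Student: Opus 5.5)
Part (i) goes by induction on the length of $x$, using Assumption~\ref{ass:local} one character at a time. Part (ii) goes by a second induction along the forced or sampled continuation, where the one-byte test is the base case and the assumption carries the prefix property forward.

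\textbf{Part (i).} I would induct on $|x|$. The case $x=\varepsilon$ is trivial. For the step, write $x=ya$. By induction every block of $\pi(o)$ ending at or before $m(o)$ is a block of $\pi(oy)$. To pass to $\pi(oya)$ via Assumption~\ref{ass:local} applied to $s=oy$, I need such a block to end at or before $m(oy)$. So the key auxiliary claim is monotonicity: $m(o)\le m(oy)$ for every extension $y$. Equivalently, the last block of $\pi(oy)$ and its trailing whitespace run cannot start before $m(o)$. I would argue this inside the same induction. The blocks of $\pi(o)$ ending at or before $m(o)$ survive in $\pi(oy)$, so they tile the prefix $o[0,m(o))$ exactly. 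Since blocks partition the string, the last block of $\pi(oy)$ then starts at or after $m(o)$. For the trailing whitespace run of $oy$: if it began before $m(o)$, then $o$ from that position on would be all whitespace. But then $m(o)$ would already be at or before that position, because $m(o)$ is at most the start of $o$'s trailing whitespace run. This gives the contradiction. Having $m(o)\le m(oy)$, every surviving block ends at or before $m(oy)$, and Assumption~\ref{ass:local} applied to $s=oy$, character $a$, finishes the step.

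\textbf{Part (ii).} Here I would again induct on $|x|$ over strings accepted from $c$, proving that $\pi(o)$ is a prefix of $\pi(ox)$. The base case $|x|\le1$ is exactly the hypothesis, since any first byte of $x$ lies in $\live(c)$. For longer $x=ba x'$, the hypothesis says the full $\pi(o)$, including its last block, stays as a sequence of blocks of $\pi(ob)$. The last block of $o$ is now followed by at least one block starting at $|o|$. I then want every block of $\pi(o)$ to end at or before $m(ob)$, and then apply part (i) with $o$ replaced by $ob$.

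That condition needs $m(ob)\ge|o|$. This holds when the block structure of $\pi(ob)$ begins a new block at position $|o|$. It also needs that the trailing whitespace run of $ob$, if any, does not begin before $|o|$. This is exactly why $o$ must not end in whitespace: since $o$'s last character is not whitespace, any trailing whitespace run of $ob$ starts at $|o|$ or later. Part (i) then gives that all blocks of $\pi(o)$ persist in $\pi(obx'')$ for every further $x''$.

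The consequence $\tok(o)$ prefix of $\tok(ox)$ is immediate from the factorization $\tok(s)=\mathrm{bpe}(s_1)\cdots\mathrm{bpe}(s_m)$. The per-block encoder depends only on each block.

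\textbf{Main obstacle.} I expect the subtle step to be the monotonicity $m(o)\le m(oy)$ in part (i), and in part (ii) checking that a block boundary at $|o|$ in $\pi(ob)$ really yields $m(ob)\ge|o|$. The whitespace clause in the definition of $m$ must interact correctly with both. I would handle both by carefully splitting on whether the relevant suffix is whitespace, using that blocks partition the string.
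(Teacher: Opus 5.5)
Your outline matches the paper's: induction on $|x|$ through Assumption~\ref{ass:local}, a monotonicity lemma, and for (ii) the observation $m(ob)=|o|$ followed by (i) applied to $ob$. However, your monotonicity lemma is stated for the wrong quantity, and its proof contains a false step.

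You claim that the blocks of $\pi(o)$ ending at or before $m(o)$ tile $[0,m(o))$ exactly. That requires $m(o)$ to be a block boundary, and it need not be, because the trailing whitespace run can begin strictly inside a block. Take $o=$\texttt{a.\textbackslash n\textvisiblespace\textbackslash n} under a pre-tokenizer whose punctuation block absorbs the newlines after it, as GPT-4-style patterns such as Qwen's do. The blocks are $[0,1)$, $[1,3)$, $[3,5)$. The trailing whitespace run starts at $2$, so $m(o)=2$ lies inside $[1,3)$, and the preserved blocks tile only $[0,1)$.

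The Assumption says nothing about the straddling block $[1,3)$, so it does not force the last block of $\pi(oy)$ to start at or after $m(o)$. In fact $m(o)\le m(oy)$ does not follow from the Assumption. Consider the pre-tokenizer that returns every string as one block, except that it splits $o$ as above and splits every proper extension of $o$ as \texttt{a} followed by one block. It satisfies the Assumption at every string, yet $m(o\texttt{b})=1<2=m(o)$.

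The repair is the paper's auxiliary quantity. Let $e(s)$ be the largest block boundary of $\pi(s)$ that is at most $m(s)$, so the preserved blocks tile exactly $[0,e(s))$, and prove $e(sa)\ge e(s)$ instead. Your own two arguments give this once $m$ is replaced by $e$ where it is wrong:
\begin{itemize}
\item The last block of $\pi(sa)$ contains position $|s|$, so it is not preserved and starts at or after $e(s)$.
\item Your whitespace argument correctly shows that the trailing whitespace run of $sa$ starts at or after $m(s)\ge e(s)$.
\end{itemize}
Hence $m(sa)\ge e(s)$. Since $e(s)$ is a boundary of $\pi(sa)$, this gives $e(sa)\ge e(s)$. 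With $e$ as the invariant, the induction for (i) goes through: the preserved blocks end at or before $e(o)\le e(oy)\le m(oy)$, so the Assumption applies to $s=oy$.

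Your part (ii) is correct as written. There $m(ob)=|o|$ is itself a block boundary, so the $m$-versus-$e$ issue does not arise, and (ii) inherits the gap only through its appeal to (i).
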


Part (i) says what may always be committed; part (ii) is a test on single bytes that certifies the last block, and it quantifies only over the bytes the constraint allows. \textbf{What the engine does} (the \emph{canonical policy}): after a sampled token and its chain, commit the blocks licensed by Theorem~\ref{thm:canon} and hold back the rest; if the held-back block is one vocabulary token of length $\ell$, the next sampled token must span at least $\ell$ bytes (\emph{boundary cover}), so the model chooses how the block continues in one token, as the tokenizer would. The policy only chooses which tokens ride in the next pass, so by Theorem~\ref{thm:pass} it costs nothing.

\subsection{Valid outputs under a token budget}
\label{sec:budget}
Every deployed decoder has a token limit, and a constrained output that reaches it is cut mid-value. Fix a policy $\varphi$ (without healing) and count every emitted token, sampled or forced, against a budget $R$; an output that exhausts the budget ends there and is valid iff it ends in $F$. For a sampling configuration $c$ define
\[
\need(c)=\begin{cases}0 & c\in F,\\ \min_{t\in A_\varphi(c)\setminus\{\eos\}}\cost(c,t) & \text{otherwise,}\end{cases}
\qquad
\cost(c,t)=1+|\varphi(c')|+\need(c''),
\]
with $c'=\ds(c,t)$ and $c''=\ds(c',\varphi(c'))$. With $r$ tokens left, the \emph{budget mask} is $M_r(c)=\{t\in A_\varphi(c):\cost(c,t)\le r\}\cup\{\eos:c\in F\}$.

\begin{theorem}[Least restrictive valid mask]
\label{thm:budget}
Let $\cA$ be a trim DFA. (i) $\need$ is a shortest-path distance, computed by Bellman--Ford in at most $|Q|$ vectorized rounds. (ii) If $\need(c_1)\le R$, where $R$ is the budget left at the first sampling configuration $c_1$, every output decoded under $M_r$, by any model and sampling rule, is in $L(\cA)$, whether it ends by \eos{} or by the budget. (iii) $t\in M_r(c)$ iff some continuation after $t$ reaches $F$ within the remaining budget; any mask that admits a token outside $M_r(c)$ admits a model whose output is invalid.
\end{theorem}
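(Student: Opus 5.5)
The plan is to treat the decoding process as a graph on sampling configurations. Let $c$ be a sampling configuration. Each token $t\in A_\varphi(c)\setminus\{\eos\}$ gives an arc from $c$ to the next sampling configuration $c''=\ds(\ds(c,t),\varphi(\ds(c,t)))$, and the arc has length $1+|\varphi(c')|$. Accepting configurations are the targets, at distance $0$. Since the policy $\varphi$ depends only on $\cA$ and the tokenizer, this graph is determined before decoding starts.

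For part (i), $\need$ is the shortest-path distance to the target set $F$ in this graph. Arc lengths are positive and the state set $Q$ is finite. Any shortest path is therefore simple and has at most $|Q|-1$ arcs. Bellman--Ford then stabilises within $|Q|$ rounds, where each round is a vectorized min over the token table rows. Trimness of $\cA$ ensures the distances are finite at reachable configurations. We also need $A_\varphi(c)$ to contain a token leading toward $F$. I would take this as part of what a segmentation policy must satisfy: its admissible set is nonempty and, by trimness, contains a live continuation. If that fails, $\need=\infty$ there and the mask simply excludes such tokens.

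For part (ii), I would prove by induction along the decode the invariant $\need(c)\le r$ at every sampling configuration $c$ with $r$ tokens left. The base case is the hypothesis $\need(c_1)\le R$. For the step, suppose $c\notin F$. Then $\need(c)\le r$ means some token attains $\cost(c,t)\le r$, so the mask $M_r(c)$ is nonempty. Any token $t\in M_r(c)$ spends $1+|\varphi(c')|$ tokens, leaving
\[
r'=r-1-|\varphi(c')|\ge\need(c'').
\]
This holds for whatever token the model and sampling rule pick, so the invariant passes to $c''$. If the budget hits zero, the invariant forces $\need=0$, which means the configuration is in $F$ and the output is valid. If instead the decoder emits $\eos$, that only happens from a configuration in $F$. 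In both cases the output lies in $L(\cA)$.

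For part (iii), the forward direction unwinds the definition. If $\cost(c,t)\le r$, following a shortest-path continuation from $c''$ reaches $F$ within the remaining budget. Conversely, a continuation reaching $F$ within $r$ tokens is a path from $c''$, so its length is at least $\need(c'')$; hence $\cost(c,t)\le r$. For optimality, suppose a mask admits a token $t\notin M_r(c)$. Build an adversarial model that gives $t$ all probability mass at the current point, so greedy or softmax picks it. After $t$, every path reaches $F$ only in more than $r$ tokens, so whatever the model does next, the output either runs out of budget outside $F$ or never accepts. The one point to check is that forced tokens also count against the budget; they do, so the path lengths account for them.

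The main obstacle is making the definition of $\need$ consistent with the budget rule. Shortest paths must be taken over the sampling-configuration graph, not the byte DFA. Because the policy is model-independent, $\need$ depends only on $\cA$ and $\varphi$, and that independence is what makes the bound in (ii) hold for every model.
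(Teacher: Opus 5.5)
Your proposal is correct and follows the paper's proof essentially step for step. It treats $\need$ as the least solution of a Bellman equation on the token-step graph of the $|Q|$ sampling configurations, with positive weights $1+|\varphi(c')|$. It propagates the invariant $\need(c)\le r$ through $r''=r-1-|\varphi(c')|\ge\cost(c,t)-1-|\varphi(c')|=\need(c'')$, which also shows that every forced span fits in the budget. And it uses an adversarial model that puts all its mass on a token with $\cost(c,t)>r$.

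Your side remark about whether $A_\varphi(c)$ always contains a token leading toward $F$ is not needed. Part (ii) only uses finiteness of $\need$ along the actual decode, which the hypothesis $\need(c_1)\le R$ supplies and the invariant then carries forward.
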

The mask is inactive until the last $\max_{c,t}\cost(c,t)$ tokens (17--38 on our schemas), so outputs with room are unchanged; if $\need(c_1)>R$, no valid output fits, which the decoder can report before it starts.

\subsection{Recursion: a deterministic pushdown with a capped stack}
\label{sec:pda}
A schema that refers to itself, such as a tree of nodes with children, has no regular expression. Definitions on a cycle of the schema's reference graph become nonterminals; each is a DFA over bytes and call symbols; a call edge $q\xrightarrow{\mathrm{call}_j}r$ pushes the return state $r$ and continues at the callee's start, and a callee that accepts with no successor pops and continues at $r$; a callee that accepts with successors pops lazily, when the next byte is not its own. Two LL(1) conditions, checked at compile time, make at most one move applicable, so the machine is deterministic; grammars that violate them are refused. Call a pop \emph{real} if it removes a symbol that was on the stack before the token began, and \emph{exposed} if another move of the token's run follows it.

\begin{theorem}[Stack locality]
\label{thm:pda}
For a token $t$ and a state $q$, the run of $t$ from $(q,\sigma)$ depends on $\sigma$ only through the symbols revealed by its exposed real pops: there is a decision tree of depth $e\le|t|$, branching only on those symbols, whose leaves give the final state (or ``continue at the popped state''), the number of real pops and the pushed string. If $e=0$, one table entry serves every stack.
\end{theorem}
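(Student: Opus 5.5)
The plan is to simulate the run of $t$ from $(q,\sigma)$ symbolically, byte by byte and move by move, with a local stack $\lambda$ of symbols pushed \emph{during} the token, initially empty, kept separate from the untouched original stack $\sigma$. The state of the simulation is $(q',\lambda,j)$, where $j$ is the number of real pops so far. I claim that, as long as no real pop has been exposed, this symbolic state is a function of $t$ and $q$ alone. Call moves push onto $\lambda$, and pops with $\lambda$ nonempty remove the top of $\lambda$; both are determined by $q'$, the next byte and $\lambda$, by determinism of the pushdown (the two LL(1) conditions make at most one move applicable). So the only move that consults $\sigma$ is a pop with $\lambda$ empty, that is, a real pop, and it removes $\sigma_{j+1}$.

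The induction on the moves of the run goes as follows. Within a real pop, whether the pop happens at all (eager pop at an accepting state without successors, or lazy pop because the next byte is not the callee's) depends only on $q'$ and the next byte, not on the popped symbol. Only the state \emph{after} the pop, the return state $r=\sigma_{j+1}$, depends on $\sigma$. Two cases follow. If the real pop is the last move of the run (not exposed), the leaf records ``continue at the popped state'' together with $j+1$ and $\lambda$, without reading $r$. If it is exposed, the next move is taken from state $r$, so I branch on the value of $r$, a node of the decision tree, and continue the symbolic simulation in each branch with $j+1$. Each branch is again $\sigma$-independent until the next exposed real pop.

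For the depth bound $e\le|t|$, I will argue that between two consecutive exposed real pops at least one byte of $t$ is consumed. The fact to check is that a sequence of pops without consuming a byte cannot cascade unboundedly. A lazy pop is triggered by a byte, and after returning to $r$ that byte is either consumed or triggers a further pop. Here I would use that each real pop strictly lowers the original stack, and that the LL(1) conditions force the byte to be eventually consumed, or the run dies, within the token. This is the step I expect to be the main obstacle. Chains of lazy pops on a single byte could in principle expose several symbols per byte. If so, I would weaken the bound to the number of exposed pops, which is at most $|t|$ plus the number of pops triggered by one byte, or I would argue that compile-time refusal of grammars in which a callee's accept state can fall through repeatedly prevents this.

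At each leaf I output the final state (or ``continue at popped''), $j$, and the pushed string $\lambda$, which gives the decision tree. If $e=0$, the tree is a single leaf, so one table entry serves every stack $\sigma$. This also justifies indexing the table by the top two entries whenever $e\le2$.
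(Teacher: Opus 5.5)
Your symbolic simulation is the paper's own. You keep a local stack of the symbols the token pushes, and you observe that only a pop reaching below them (a real pop) reads $\sigma$. An unexposed final real pop becomes a leaf ``continue at the popped state''; an exposed one becomes a branch on the popped symbol. Your leaves and the case $e=0$ match the paper.

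\textbf{The gap is the depth bound $e\le|t|$.} You leave it open, and neither fallback closes it. Weakening the bound does not prove the stated theorem. The LL(1) conditions only make the machine deterministic; they do not stop pops from cascading.

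The missing fact is the compile-time requirement, stated with the machine in Appendix~\ref{app:pda}, that return states do not accept. Every stack symbol is a return state, so after any pop the machine sits in a non-accepting state. Both the lazy and the eager return need an accepting state, so neither can fire next. The next move, if any, therefore consumes a byte, either by an ordinary transition or by a call. Hence every exposed pop is immediately followed by a byte-consuming move, and no two pops are adjacent.

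Now charge each exposed real pop to the byte consumed right after it. A lazy pop is charged to the byte it retries, and an eager pop to the next byte. No byte is charged twice: an eager pop leaves a non-accepting state, which blocks a lazy pop on the following byte. The charge is injective, so $e\le|t|$ on every branch.

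Without this requirement the bound is false. An accepting return state with successors lets a single byte trigger a chain of lazy pops. A call in tail position, whose return state accepts with no successor, lets a single move trigger a cascade of eager pops. Each of these pops is real, exposed and a branching point; this is also why the engine masks a token such as \texttt{\}\}\}}.

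A smaller point: whether a real pop happens is not independent of $\sigma$. The lazy return needs a symbol left on the stack, and an eager return on an empty stack ends the output. So the branch at a real pop should range over the return states plus ``stack empty'', as the implementation does (``plus end of output for the root'').
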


\textbf{What the engine does.} The token table splits into a context-free table $\V\times Q$ and a small context table indexed by the top $e$ stack symbols, both computed by the level pass of Theorem~\ref{thm:table}(i) with rows that carry a symbolic stack. On Qwen3.5's vocabulary and our recursive schemas only 7--12 tokens at 4--11 states are context-dependent, and $e\le2$ suffices except for the token \texttt{\}\}\}}, which is masked; tokens that would push more than three symbols are masked as well. Masking such tokens removes no byte string from the language, because every single byte is a token. The stack is a 64-entry array on the GPU, and masking and advancing are gathers on (state, top two stack symbols). The depth cap changes the language: the engine enforces $L(\cA)\cap\{\text{nesting depth}\le64\}$, and a configuration left with no admissible token stops with an explicit error rather than emit an invalid one. Forced chains cross calls and returns: after a node's last value, \texttt{]\},\{"name":"} is one span. The proof of Theorem~\ref{thm:pda} and the sound byte-level budget for pushdowns are in Appendix~\ref{app:pda}.

\subsection{Independent parts in parallel, certified}
\label{sec:blocks}
\begin{lemma}[Concatenation at cut states]
\label{lem:cut}
Let $\cA$ be a trim DFA and $c_1,\dots,c_m$ states such that every accepting run from $c_0$ visits $c_1,\dots,c_m$ in this order, each exactly once. Then every $w\in L(\cA)$ factors uniquely as $w=x_0x_1\cdots x_m$ with $\ds(c_i,x_i)=c_{i+1}$ (and $c_{m+1}\in F$), and every such concatenation is in $L(\cA)$.
\end{lemma}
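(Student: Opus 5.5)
The plan is to read both directions off the unique accepting run of a DFA on $w$. Determinism is the only structural fact needed; trimness is not used.

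\emph{Existence of the factorization.} Take $w\in L(\cA)$. Its run from $c_0$ is a single sequence of configurations $c_0=d_0,d_1,\dots,d_{|w|}\in F$, with $d_j=\ds(c_0,w_1\cdots w_j)$. By hypothesis this accepting run visits $c_1,\dots,c_m$ in order, each exactly once. So there are unique positions $0\le j_1<j_2<\dots<j_m\le|w|$ with $d_{j_i}=c_i$. Strict inequalities hold because the visits occur in the stated order and are distinct occurrences. Set $j_0=0$ and $j_{m+1}=|w|$, and let $x_i=w_{j_i+1}\cdots w_{j_{i+1}}$. Since $\ds$ is a function, $\ds(c_i,x_i)=d_{j_{i+1}}$. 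For $i<m$ this equals $c_{i+1}$, and for $i=m$ it equals $d_{|w|}\in F$. Here $c_{m+1}$ should be read as an accepting configuration.

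\emph{Uniqueness.} Suppose $w=y_0\cdots y_m$ is another factorization with $\ds(c_i,y_i)=c_{i+1}$. Then the run of $w$ is at $c_i$ at position $|y_0\cdots y_{i-1}|$, by induction using determinism. Each $c_i$ occurs exactly once on the run, so these positions must be the $j_i$. Hence $|y_0\cdots y_{i-1}|=j_i$ for all $i$, and $y_i=x_i$. This is the one step where ``exactly once'' is essential. Without it, a run that loops back to $c_i$ could be cut at either visit. I expect this bookkeeping to be the only point needing care, mainly in distinguishing visits as positions on the run.

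\emph{Closure under concatenation.} Let $x_0,\dots,x_m$ be any strings with $\ds(c_i,x_i)=c_{i+1}$ and $\ds(c_m,x_m)\in F$. Composition $\ds(c,uv)=\ds(\ds(c,u),v)$ gives, by induction on $i$, that $\ds(c_0,x_0\cdots x_i)=c_{i+1}$. Hence $\ds(c_0,x_0\cdots x_m)\in F$, so the concatenation is in $L(\cA)$.

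The hypothesis of visiting each cut state exactly once is not needed for this direction. It is needed only for uniqueness. Its consistency with the concatenation is automatic, because the hypothesis is stated for every accepting run, including the concatenated one.
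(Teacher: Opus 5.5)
Your proof is correct and follows the paper's argument. You cut the unique run of $w$ at its single visits to $c_1,\dots,c_m$, derive uniqueness from the uniqueness of that run together with the ``exactly once'' hypothesis, and get the converse by concatenating the runs on $x_0,\dots,x_m$ via $\ds(c,uv)=\ds(\ds(c,u),v)$; your remarks that trimness is unused and that ``exactly once'' matters only for uniqueness are also accurate.
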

For an object schema whose keys are all required, every key is a forced chain, so the state after each key is a cut state, and the language is a product of one value language per field: the automaton analogue of Theorem~\ref{thm:block}. The model's distribution over the values is \emph{not} a product. The decoder therefore uses the factorization only to draft, and certifies the draft.

\begin{theorem}[Certified block decoding]
\label{thm:cert}
Fix a policy $\varphi$ and greedy decoding. Draft each field independently, from the configuration at which the sequential decoder would begin it; assemble the draft $D$ and replay it through $\cA$ with $\varphi$, obtaining the configuration $q_i$ before every sampled position $i$; compute the logits at every position of $D$ in one pass; and accept position $i$ iff $D_i$ is the masked argmax at $q_i$. Then, in exact arithmetic, the longest accepted prefix of $D$, extended by the argmax at the first rejected position, is a prefix of the sequential greedy output. Repeating from that prefix produces exactly the sequential output, and each round adds at least one sampled token.
\end{theorem}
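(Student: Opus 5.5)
The plan is an induction on positions of the draft, comparing the draft with the sequential greedy run $S$ of $R_\varphi$. Both $D$ and $S$ begin with $\varphi(c_0)$ and the prompt, and both are replayed through $\cA$ with the same policy $\varphi$. The central claim is this: if the first $i-1$ sampled positions of $D$ are accepted, then the token prefix of $D$ up to position $i$ equals the token prefix of $S$ up to its $i$-th sampled position, and the replayed configuration $q_i$ equals the configuration at which $S$ samples. This holds because the forced tokens between sampled positions are $\varphi$ of a configuration, and $\varphi$ depends only on $\cA$ and the tokenizer, never on the model.

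Given this, the logits at position $i$ computed in the single verification pass equal the logits $S$ sees at its $i$-th sampling point. This uses causality: one pass over a prefix gives the same logits as token-by-token passes, in exact arithmetic, which is part (i) of Theorem~\ref{thm:pass}. Acceptance at position $i$ means $D_i$ is the masked argmax at $q_i$, which is exactly the token $S$ emits there, so the induction advances. At the first rejected position $i$, the prefixes still agree, so the masked argmax at $q_i$ computed from the verified logits is $S$'s $i$-th sampled token. Appending it, and then the forced tokens $\varphi$ of the next configuration, keeps a prefix of $S$. A tie-breaking convention fixed once for the argmax makes it a function, so no ambiguity arises.

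Termination and exactness of the repetition follow from this. Each round outputs the accepted prefix plus one corrected token, so it adds at least one sampled token; this holds even when position $1$ is rejected. Redrafting from the new prefix, started from the configuration the sequential decoder would be in, yields again a prefix of $S$ by the same argument. Outputs are finite because $S$ ends at \eos{} or at its budget, so the rounds stop exactly at $S$. Lemma~\ref{lem:cut} is used only to see that the draft is well defined, since each field starts at a cut configuration. Correctness never relies on the draft being right, only on verification.

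I expect the main obstacle to be the bookkeeping in the first claim. One must check that replaying $D$ with $\varphi$ reproduces the sequential forced segments and configurations, including the tokens held back by the canonical policy and the boundary-cover admissible sets $A_\varphi$. These depend on the output so far, which agrees up to position $i$. I would first prove the claim by treating a sampled token together with its following $\varphi$-span as one deterministic step of $\cA$, so that agreement of sampled tokens implies agreement of everything else.
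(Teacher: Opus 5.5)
Your proposal is correct and follows essentially the same route as the paper. Both run an induction over the sampled positions before the first rejection: the forced segments are $\varphi$ of the configuration and hence model-independent, so prefixes and configurations agree, and Theorem~\ref{thm:pass}(i) then identifies the certification-pass logits with the sequential decoder's, so acceptance, and the masked argmax at the first rejection, reproduce the sequential greedy choice. Your remarks on fixing a tie-breaking rule and on admissible sets that depend on the output so far are harmless refinements the paper leaves implicit.
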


The theorem is an instance of speculative verification~\cite{leviathan2023,chen2023}, with drafts that the grammar's factorization makes independent. Two caveats are stated with it. In floating point, the certification pass computes logits over many positions at once, in fp16 and with chunked recurrent kernels, so a near-tie could flip; all 36 measured outputs were token-identical. And a round costs the longest draft plus one certification pass, so in the worst case, one new token per round, it is slower than sequential decoding. A cap on rounds followed by sequential decoding bounds this; our implementation stops at 24 rounds, a cap never approached (at most 3 rounds on the 2B and 4 on the 4B). Algorithms and the related \emph{finite-region} decision are in Appendix~\ref{app:mech}.

\begin{table}[t]
\centering
\small
\caption{\textbf{What each mechanism rests on.} Every mechanism is a construction of Section~\ref{sec:theory} applied to the step graph of a decoding step.}
\label{tab:usage}
\begin{tabular}{@{}L{3.3cm}L{3.6cm}L{6.2cm}@{}}
\toprule
Mechanism & Result & Guarantee \\
\midrule
one-pass token table & Thm.~\ref{thm:table}(i), Lemma~\ref{lem:universal} & exact table in $\mathrm{depth}(\TV)$ gathers, work within $2.4\times$ of the output size \\
walk kernel & Thm.~\ref{thm:table}(ii), Thm.~\ref{thm:reachul} & the table's row recomputed each step; memory = automaton + shared vocabulary \\
forced spans & Thm.~\ref{thm:pass}, Cor.~\ref{cor:length} & same outputs, minimum passes among non-speculative decoders \\
canonical policy & Thm.~\ref{thm:canon} + Assumption~\ref{ass:local} & committed tokens are the tokenizer's split under every continuation \\
budget mask & Thm.~\ref{thm:budget} & valid under every feasible limit; least restrictive such mask \\
pushdown constraint & Thm.~\ref{thm:pda}; the machine of Thm.~\ref{thm:logdcfl} & exact up to nesting depth 64; table indexed by top two stack symbols \\
certified blocks & Lemma~\ref{lem:cut}, Thm.~\ref{thm:cert} & greedy output in exact arithmetic (36/36 identical in fp16) \\
\bottomrule
\end{tabular}
\end{table}

\subsection*{Proofs}


\subsection{Theorem~\ref{thm:table}: the one-pass table and the walk}
\label{app:table}
\begin{algorithm}[H]
\caption{All-state token table: all $|Q|$ states side by side, one trie level per step. For each trie node at depth $d$, $\mathrm{parent}_d$ gives its parent and $\mathrm{byte}_d$ its last byte; $\mathrm{tok}_d$ lists the tokens ending at depth $d$.}
\label{alg:table}
\small
\begin{algorithmic}[1]
\State $S_0\gets[\,0,1,\dots,|Q|-1\,]$ \Comment{one row, the root; every state is a root}
\For{$d=1,2,\dots,\mathrm{depth}(\TV)$}
  \State rows $\gets$ nodes $w$ at depth $d$ whose parent row survived at depth $d-1$
  \State $S_d[w,\cdot]\gets\delta\bigl[S_{d-1}[\mathrm{parent}_d(w),\cdot],\ \mathrm{byte}_d(w)\bigr]$ for all rows \Comment{one gather; $\dead$ stays $\dead$}
  \State $T[t,\cdot]\gets S_d[\mathrm{node}(t),\cdot]$ for every $t\in\mathrm{tok}_d$
  \State drop the rows with $S_d[w,\cdot]\equiv\dead$
\EndFor
\end{algorithmic}
\end{algorithm}
\begin{proof}[Proof of Theorem~\ref{thm:table}]
(i) By induction on $d$, row $w$ of $S_d$ holds $(\ds(q,w))_{q\in Q}$: for $d=0$ it is the identity, and for $d>0$, $\ds(q,w)=\delta(\ds(q,\mathrm{parent}(w)),\mathrm{byte}(w))$ with $\dead$ absorbing. A row that is $\dead$ for every state has only $\dead$ descendants, so dropping it loses no token. Each level processes at most $|\TV^d|$ rows of $|Q|$ entries, so the work is at most $|\TV|\cdot|Q|$. The output has $|\V|\cdot|Q|$ entries; mapping every non-root trie node $w$ to a pair $(t,|w|)$ with $w$ a prefix of $t$ is injective, so $|\TV|-1\le\sum_t|t|$, the cost of walking every token separately. (ii) The walk of $t$ from $q$ reads the bytes of $t$ through $\delta$; it is the root-to-$t$ path of $P_q$ (Proposition~\ref{prop:mangrove}) and uses no other token's walk.
\end{proof}
On Qwen3.5's vocabulary and the 104-state \texttt{person} schema, the pass performs 61M lookups in 0.3\,s and touches the trie essentially once, because a schema with free-text fields keeps almost every trie node alive for some state (Fig.~\ref{fig:liverows}).

\begin{figure}[H]
\centering
\begin{tikzpicture}
\begin{axis}[width=0.72\textwidth, height=4.4cm, ybar, bar width=5pt, ymin=0, ymax=100000, scaled y ticks=false, xlabel={trie depth (bytes)}, ylabel={rows at this level},
  xtick={1,2,4,6,8,10,12,14,16,18,20}, ytick={0,25000,50000,75000,100000}, yticklabels={0,25k,50k,75k,100k},
  legend style={at={(0.98,0.98)}, anchor=north east}, legend columns=1]
\addplot[fill=deemph, draw=none, bar shift=-2.8pt] coordinates {(1,256)(2,4647)(3,24724)(4,50328)(5,82384)(6,94767)(7,68187)(8,61250)(9,55185)(10,33854)(11,27849)(12,22811)(13,12915)(14,9451)(15,8972)(16,5481)(17,5159)(18,4303)(19,3292)(20,2543)};
\addplot[fill=ours, draw=none, bar shift=2.8pt] coordinates {(1,210)(2,4579)(3,24149)(4,49492)(5,81631)(6,94213)(7,67783)(8,61020)(9,55046)(10,33760)(11,27779)(12,22764)(13,12884)(14,9420)(15,8951)(16,5462)(17,5140)(18,4283)(19,3278)(20,2530)};
\legend{trie nodes at this depth, rows still alive in the pass}
\end{axis}
\end{tikzpicture}
\caption{\textbf{The all-state pass, level by level} (\texttt{person} schema, 104 states). Gray: trie nodes of the 248k-token vocabulary by byte depth. Blue: rows Algorithm~\ref{alg:table} keeps alive at that level (a row dies only when it is dead for all states). Depths beyond 20 hold under 1\% of the nodes.}
\label{fig:liverows}
\end{figure}

\subsection{Theorem~\ref{thm:pass}: pass optimality}
\begin{proof}
(i) Both decoders begin at the same sampling configuration with identical caches. Suppose this holds at the $k$-th sampling point. Both evaluate the same logits and, with the same randomness, draw the same $t$. $R_\varphi$ then feeds $t$ and $\varphi(c')$ in $1+|\varphi(c')|$ passes and $F_\varphi$ in one; by causality the resulting caches and last-position logits coincide, and the next sampling configuration $\ds(c',\varphi(c'))$ is the same because $\varphi$ does not depend on the model. (ii) is counting.

(iii) Fix a model $p$ and an output $y$ of positive probability, and let $i_1<\dots<i_N$ be the positions at which $R_\varphi$ samples on $y$. \emph{Step 1: $D$ evaluates $\ell(y_{<i_k})$ before it emits $y_{i_k}$.} Otherwise, with positive probability, $D$ emits $y_{i_k}$ after $y_{<i_k}$ without that evaluation. Let $p'$ agree with $p$ on every prefix except $y_{<i_k}$, where it changes the probability of $y_{i_k}$ among the at least two admissible tokens (under softmax sampling any change of the relative logits does this; under greedy decoding, make another admissible token the argmax). On that event $D$ behaves identically under $p$ and $p'$, while $R_\varphi$'s conditional distribution differs, contradicting exactness for both models. \emph{Step 2: the evaluations happen in distinct passes.} Let $\Pi_k$ be the pass that yields $\ell(y_{<i_k})$. $D$ emits $y_{i_k}$ only after $\Pi_k$ completes and, being non-speculative, feeds it only in a later pass; $\Pi_{k+1}$ computes logits at a prefix that contains $y_{i_k}$, so it is or follows a pass that feeds $y_{i_k}$. Hence $\Pi_{k+1}$ starts after $\Pi_k$ completes, and $\Pi_1,\dots,\Pi_N$ are distinct.
\end{proof}

\subsection{Theorem~\ref{thm:canon}: canonical commitment}
\begin{proof}
For a string $s$ let $e(s)$ be the largest block boundary of $\pi(s)$ that is at most $m(s)$; the blocks of $\pi(s)$ ending at or before $m(s)$ are exactly those that tile $[0,e(s))$. \emph{Claim: these blocks are blocks of $\pi(sa)$, and $e(sa)\ge e(s)$.} The first part is Assumption~\ref{ass:local}. For the second, $e(s)$ is a boundary of $\pi(sa)$; the last block of $\pi(sa)$ contains position $|s|$, so it is not one of the preserved blocks and starts at or after $e(s)$; the trailing whitespace run of $sa$ starts at $|s|+1$ if $a$ is not whitespace, and otherwise at $|s|$ or at the start of the trailing run of $s$, which is at least $m(s)\ge e(s)$. Hence $m(sa)\ge e(s)$ and $e(s)$ is a boundary at most $m(sa)$.
(i) follows by induction on $|x|$, since $e$ never decreases and the blocks tiling $[0,e(o))$ are preserved at every step; as $\tok$ encodes blocks independently, their canonical tokens are a prefix of $\tok(ox)$.
(ii) Let $x=bx'$ with $b\in\live(c)$ (every $x$ with $\ds(c,x)$ defined and $x\ne\varepsilon$ starts with such a byte). By hypothesis $\pi(o)$ is a prefix of $\pi(ob)$, so the last block of $\pi(ob)$ is $[|o|,|o|+1)$; since $o$ does not end in whitespace, $m(ob)=|o|$, every block of $\pi(o)$ ends at or before $m(ob)$, and (i) applied to $ob$ and $x'$ concludes.
\end{proof}
We identify a byte with the characters it can begin; the implementation tests every ASCII byte and one representative per class of non-ASCII lead bytes. For $o$ not ending in whitespace, ``$\pi(ob)$ has a boundary at $|o|$'' was equivalent to ``$\pi(o)$ is a prefix of $\pi(ob)$'' in all 1.44M tested pairs, and the implementation tests the former. The whitespace clause of Assumption~\ref{ass:local} is necessary: appending \texttt{\textbackslash n} to a string ending in \texttt{\textbackslash n\textvisiblespace} merges its last two blocks.

\subsection{Theorem~\ref{thm:budget}: the least restrictive valid mask}
\begin{proof}
(i) The definition of $\need$ is the Bellman equation of a shortest-path problem with positive integer weights $1+|\varphi(c')|$ on the token-step graph of the $|Q|$ sampling configurations; its least solution is the distance to $F$, reached within $|Q|$ rounds, each a gather over the transition table and a row-wise minimum.
(ii) Invariant: $\need(c)\le r$ at every sampling point. It holds at $c_1$. If $c\in F$ and $r=0$, the output ends validly. If $c\notin F$, some $t$ has $\cost(c,t)=\need(c)\le r$, so $M_r(c)\ne\emptyset$; for any admitted $t$ the forced span fits, because $1+|\varphi(c')|\le\cost(c,t)\le r$, and at the next sampling point $c''$, $r''=r-1-|\varphi(c')|\ge\cost(c,t)-1-|\varphi(c')|=\need(c'')$. If $r$ reaches $0$ at a sampling point, the invariant gives $\need(c)=0$, i.e.\ $c\in F$; an output that ends by \eos{} ends in $F$ by definition of $A(c)$.
(iii) If $\cost(c,t)\le r$, following minimizing tokens realizes a continuation of weight $\cost(c,t)$ that reaches $F$; conversely every continuation after $t$ that reaches $F$ has weight at least $\cost(c,t)$. If a mask admits $t$ with $\cost(c,t)>r$, a model that greedily prefers $t$ at $c$ leaves no continuation reaching $F$ within the budget, and the output is cut outside $F$.
\end{proof}

\subsection{Theorem~\ref{thm:pda} and budgets for pushdowns}
\label{app:pda}
\textbf{The machine.} Definitions on a cycle of the schema's reference graph (strongly connected components) become nonterminals $N_1,\dots,N_C$; every other reference is inlined. Each $N_j$ is a DFA over $\Sigma\cup\{\mathrm{call}_1,\dots,\mathrm{call}_C\}$; a call edge $q\xrightarrow{\mathrm{call}_j}r$ records its return state $r$, and $R$ is the set of return states. A configuration is $(q,\sigma)$ with $\sigma\in R^*$. On byte $b$ the machine (1)~moves if $\delta(q,b)$ is defined; else (2)~if $q$ has a call edge to $N_j$ with $b\in\mathrm{first}(N_j)$, pushes $r$ and moves to $\delta(\mathrm{start}_j,b)$; else (3)~if $q$ accepts and $\sigma=\sigma'r$, pops and retries $b$ at $r$ (a lazy return); and (4)~after a move, while the state accepts and has no successor, pops and continues at the popped state (an eager return; with an empty stack this is the end of the output). The compiler checks $\mathrm{first}(N_j)\cap\live(q)=\emptyset$ at every call site and $\live(q)\cap\mathrm{follow}(N_j)=\emptyset$ at every accepting state of a called $N_j$ that has successors (an LL(1) condition), refuses grammars that violate them, and requires return states not to accept, so that the end of output depends only on the state and an empty stack.

\begin{proof}[Proof of Theorem~\ref{thm:pda}]
Simulate the run of $t$ from $q$ with the initial stack unknown, tracking the symbols that $t$ itself pushes. Every move reads the state and, for a return, the top symbol. While the top symbol was pushed by $t$, it is known. A pop that reaches below these symbols is real. If no further move follows it, its symbol enters the result only as ``continue at the popped state''. If another move follows, the run branches on the popped symbol's value, through the return state reached and its moves. Each exposed real pop consumes one symbol of the original stack and is followed by at least one byte or pop of $t$, so the branching depth is at most $|t|$. The leaves record the final state, the number of real pops and the pushed string.
\end{proof}

\begin{proposition}[Sound budgets for pushdowns]
\label{prop:pdabudget}
Let $\mathrm{bn}(q)$ be the length of the shortest byte string that completes the current nonterminal from $q$: the least fixpoint of $\mathrm{bn}(q)=0$ for accepting $q$ and otherwise $\min\{1+\mathrm{bn}(\delta(q,b)),\ \mathrm{bn}(\mathrm{start}_j)+\mathrm{bn}(r)\}$ over bytes and call edges $q\xrightarrow{\mathrm{call}_j}r$. Then $\need^+(q,\sigma)=\mathrm{bn}(q)+\sum_{r\in\sigma}\mathrm{bn}(r)$ is the length of the shortest byte completion of $(q,\sigma)$. If every single byte is a token, admitting a token only if $1+\need^+(c')\le r$ for the configuration $c'$ after it guarantees validity in the sense of Theorem~\ref{thm:budget}(ii).
\end{proposition}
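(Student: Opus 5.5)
The plan has three parts: show that the fixpoint $\mathrm{bn}$ is well defined and equals the shortest completion of the current nonterminal, lift this to $\need^+$ by reading the stack from top to bottom, and then rerun the invariant argument of Theorem~\ref{thm:budget}(ii) with $\need^+$ in place of $\need$.

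\textbf{Step 1: $\mathrm{bn}$ is exact.} The equation for $\mathrm{bn}$ is a monotone system over $\mathbb{N}\cup\{\infty\}$. Its least fixpoint is reached by Kleene iteration from $\infty$; equivalently, it is the least cost of a derivation tree, so it is Knuth's generalization of Dijkstra to superior functions ($x+y$ and $1+x$). I would prove by induction on derivation size that $\mathrm{bn}(q)$ is the length of the shortest string $u$ that drives $q$, with an empty local stack, to an accepting state of the current nonterminal. A call edge $q\xrightarrow{\mathrm{call}_j}r$ contributes a completion of $N_j$ from $\mathrm{start}_j$ followed by a completion from $r$. Byte edges contribute one byte each.

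\textbf{Step 2: from $\mathrm{bn}$ to $\need^+$.} A configuration $(q,\sigma)$ completes exactly when the top nonterminal finishes from $q$, the machine pops to the top return state $r$, and this repeats down the stack. Thus a shortest completion is a concatenation of shortest local completions: first from $q$, then from each $r\in\sigma$ from top to bottom. This gives the upper bound $\need^+\le\mathrm{bn}(q)+\sum_{r\in\sigma}\mathrm{bn}(r)$.

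For the lower bound, decompose any completion at the moments when the stack height first drops below its initial levels. Each segment is then a local completion of one level and costs at least the corresponding $\mathrm{bn}$.

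The main obstacle is making both directions respect the machine's actual move rules: lazy pops, the priority of rule (1) over (2) over (3), and the LL(1) conditions. A shortest local completion might be unrealizable because a byte meant to trigger a pop is instead consumed by a move at the current state. Here the compile-time checks do the work. The condition $\live(q)\cap\mathrm{follow}(N_j)=\emptyset$ and the condition on first sets ensure that the move a byte triggers is the one the decomposition assumes. The requirement that return states do not accept ensures that end of output occurs only with an empty stack. I would spell out this determinism argument carefully, segment by segment.

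\textbf{Step 3: validity.} Fix the invariant $\need^+(c)\le r$ at every sampling point, as in the proof of Theorem~\ref{thm:budget}(ii).

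The rule admits a token only if $1+\need^+(c')\le r$, which preserves the invariant because one token is charged. Nonemptiness of the admitted set holds when $c$ is not final: take the first byte $b$ of a shortest byte completion. Because every single byte is a token, the one-byte token $b$ is available. It leads to $c'$ with $\need^+(c')=\need^+(c)-1$, so $1+\need^+(c')\le r$.

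When $r=0$ the invariant forces $\need^+(c)=0$, so $c$ is accepting with an empty stack and the output is valid. Outputs ending by \eos{} are valid by the definition of the mask. Forced spans can be handled by charging each forced byte-token and applying the same inequality step by step, or they can simply be assumed disabled under this rule.
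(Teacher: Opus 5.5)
Your Steps 1 and 2 follow the paper's argument. A completion of $(q,\sigma)$ splits at the successive pops into a completion of the current nonterminal from $q$, followed by one completion from each return state of $\sigma$. Each piece costs at least its $\mathrm{bn}$, and concatenating shortest pieces is realizable because the LL(1) conditions make the pops happen at the seams. The paper asserts this in one line, so your plan to check it segment by segment, and your remark that the relevant fixpoint is the one reached from $\infty$, only add care. Your Step 3 is also the paper's invariant, including the single-byte token that begins a shortest completion.

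The one real gap is forced spans. Validity ``in the sense of Theorem~\ref{thm:budget}(ii)'' refers to decoding under a policy $\varphi$ in which every emitted token, forced or sampled, is charged to the budget. So you may not assume forced spans are disabled. Your other option, ``charging each forced byte-token step by step,'' presumes single-byte forced tokens, whereas the canonical policy feeds multi-byte ones. It also does not explain why $\need^+\le r$ survives a forced token, since forced tokens never pass the admission test.

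The missing fact is that forced bytes lie on every completion. A forced configuration is not final and has exactly one admissible next byte, so that byte begins every completion, and reading it lowers $\need^+$ by exactly one. Now suppose the span after an admitted token has $f$ tokens covering $f'\ge f$ bytes and ends at $c''$. Then $\need^+(c'')=\need^+(c')-f'$ and $f'\le\need^+(c')$. Combined with $1+\need^+(c')\le r$, this gives two things:
\begin{itemize}
\item the span fits, since $f\le f'\le\need^+(c')\le r-1$, so the budget never runs out mid-span;
\item $r-1-f\ge\need^+(c')-f'=\need^+(c'')$, which restores the invariant at the next sampling point.
\end{itemize}
With this step inserted, your proof coincides with the paper's.
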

\begin{proof}
Every completion of $(q,\sigma)$ decomposes into a string taking $q$ to an accepting state of its nonterminal, the return to the top of $\sigma$, a completion of that return state's nonterminal, and so on; the LL(1) conditions make the pops occur exactly at these points. Each part has length at least $\mathrm{bn}$ of its state, and concatenating shortest parts is a valid completion, so $\need^+$ is the exact shortest byte completion. Every token and every forced span covers at least one byte, and forced bytes lie on every completion. A token admitted by $1+\need^+(c')\le r$ therefore leaves $r-1-f\ge\need^+(c')-f'=\need^+(c'')$, where $f$ forced tokens cover $f'\ge f$ bytes; the single-byte token that begins a shortest completion is always admitted; the invariant $\need^+\le r$ follows as in Theorem~\ref{thm:budget}(ii).
\end{proof}
The bound is sound but not least restrictive, because a token may cover several bytes. The device state vector carries the running sum $\sum_{r\in\sigma}\mathrm{bn}(r)$, so the mask reads one extra column.

\subsection{Lemma~\ref{lem:cut} and Theorem~\ref{thm:cert}: certified blocks}
\begin{proof}[Proof of Lemma~\ref{lem:cut}]
The run of $w\in L(\cA)$ is unique and visits $c_1,\dots,c_m$ in order, once each; cutting it at those visits gives the factorization, which is unique because the run is. Conversely, the runs on $x_0,\dots,x_m$ concatenate to an accepting run.
\end{proof}
\begin{proof}[Proof of Theorem~\ref{thm:cert}]
Let $Y$ be the sequential greedy output under $\varphi$. Between sampled positions both $Y$ and the replayed draft $D$ consist of the forced tokens $\varphi$ prescribes, which depend only on the configuration. Let $i^*$ be the first sampled position of $D$ that is not accepted. By induction over the sampled positions before $i^*$: at each, the replay's configuration equals the sequential decoder's, the prefixes agree, so the logits computed by the certification pass at that position equal the sequential decoder's logits in exact arithmetic (Theorem~\ref{thm:pass}(i)), and acceptance means $D$'s token is the sequential choice. Hence $D_{<i^*}=Y_{<i^*}$, and the masked argmax at $i^*$ is $Y_{i^*}$. Each round therefore extends the certified prefix by at least one sampled token, and repeating, or finishing sequentially after a cap on rounds, yields $Y$.
\end{proof}

\section{Mechanisms and engine as implemented}
\label{app:mech}

\subsection{Constraint compilation}
A JSON schema is turned into a regular expression by outlines-core; a regex constraint is used as given. Our parser builds a Thompson NFA over bytes (a character class over code points is expanded into UTF-8 byte sequences), determinizes it by subset construction with a state budget (200,000 by default; beyond it compilation stops with an error), trims states from which no accepting state is reachable, and minimizes by Moore refinement. Lookaround and backreferences are rejected. The vocabulary's byte strings are recovered from the tokenizer; special tokens have no bytes and are never allowed by a constraint; added tokens that are not special (Qwen's \texttt{<tool\_call>}, \texttt{</think>}) receive their surface bytes, so a constraint that expects that text can allow the single token. The trie is stored level by level (arrays $\mathrm{parent}_d$, $\mathrm{byte}_d$ and the tokens ending at each node).

\subsection{The walk kernel}
The automaton is kept on the device as $256\cdot|Q|$ int16 entries ($\dead=-1$), with the accepting vector and an end-of-sequence vector over the vocabulary; the vocabulary is packed once (every token's bytes concatenated, with offsets: 1.5\,MB plus 1\,MB of offsets for the 248k tokens, the shared 2.9\,MB with the rest). One Metal kernel (\texttt{mx.fast.metal\_kernel}, grid $|\V|\times B$, threadgroup 256) runs per step: thread $t$ of row $b$ reads state $q_b$, walks its token's bytes (\texttt{s = table[s*256 + byte]} until the end or $\dead$) and writes the state reached. The mask is $\text{row}\ne\dead$ or ($q$ accepting and the token is \eos), combined with the boundary cover ($\mathrm{len}(t)\ge\mathrm{cover}[q]$); the next state is $\text{row}[y]$. The host side (\texttt{accept}, \texttt{forced}, \texttt{finished}) runs the automaton over the token's bytes and uses the same planner as the table controller. States with at most 4,096 allowed tokens keep a sparse output head: their allowed ids are computed once on the CPU (about 4\,ms) and stored (at most 16\,KB per state). The batch engine builds one combined automaton table (block $i$ is constraint $i$'s table with next states offset to global ids; global state 0 is the free state) and launches one kernel over all rows. Healing is off with the walk kernel, as it is in practice with the table (its threshold exceeds 200 tokens on our schemas).

\subsection{The forced planner (the canonical policy)}
Given the previous sampled token $y$ and the configuration $q$ after it, the planner follows the unique live byte while the configuration is forced (the forced string $w$), and then: (1)~cuts $s=\mathrm{bytes}(y)\,w$ to its longest valid UTF-8 prefix; (2)~computes the pre-token block starts of $s$ and holds back the last block, unless all of $w$ survived, $s$ does not end in whitespace, and for every byte allowed after $w$ (every ASCII byte and one representative per non-ASCII class) the pre-tokenizer puts a boundary at $|s|$ (the stable-tail test of Theorem~\ref{thm:canon}(ii)); (3)~if the end of $y$ is not a block boundary of $s$, re-encodes $y$ together with $w$ and replaces $y$ before it is fed (\emph{healing}; no cache rollback is needed in the synchronous loop); (4)~tokenizes the committed segment with the model's tokenizer and checks that the tokens spell it; (5)~returns the tokens, whether $y$ is replaced, the end configuration and the \emph{cover} of the end configuration: if the remaining forced bytes form one pre-token block that is a single vocabulary token of length $\ell$, the next token must have at least $\ell$ bytes. The cover is a per-state table on the device.

\subsection{The pushdown constraint}
Each nonterminal's DFA has $256+C$ columns (a call of $N_i$ is the private code point $\mathrm{U}{+}200000+i$, mapped to column $256+i$); states are numbered globally. The token tables are the level pass of Algorithm~\ref{alg:table} for all states at once, with rows that carry a symbolic stack: pushes inside a token are recorded (at most three) and matched by pops inside it; a real pop splits the row into one row per candidate return state of the popped nonterminal (plus end of output for the root); at most two real pops are resolved and deeper ones masked. A token whose walk makes no real pop, or ends exactly at its single real pop, gets a context-free entry (next state, pushes, pops) in the $|\V|\times|Q|$ table; a token that continues past a real pop, or pops twice, lives in a context table of shape $n_{\mathrm{ctx}}\times n_{qc}\times(|R|+1)^2$ indexed by the top two stack symbols. The device state vector is $[q,\mathrm{ptr},\mathrm{stack}[0..63],S]$, with $S=\sum_{r\in\sigma}\mathrm{bn}(r)$ for budgets; masking and advancing are gathers on $(q,\mathrm{top}_1,\mathrm{top}_2)$. For the schema of a tree of named nodes the root rule is \texttt{\{"name":}$S$\texttt{,"children":[}$(\mathrm{call}_0(\texttt{,}\mathrm{call}_0)^*)^?$\texttt{]\}} with $S$ the JSON-string regex: 37 states, one call site and 6 context-dependent tokens at 4 states on Qwen's vocabulary.

\subsection{Certified block-parallel decoding}
\textbf{Plan.} The planner accepts an object schema whose properties are all required, flattening nested objects of the same shape into their leaves; arrays and free strings stay single blocks. Two dummy documents with different placeholder values are rendered, recording for every leaf the byte offsets after its key and at the next key; running the automaton over each prefix gives the state after the key name and the exit state, which must agree between the renderings (the automaton is minimal, so the language after a value does not depend on the value). A block starts where the sequential decoder starts sampling, i.e.\ after the committed part of the key's chain; its context is the bare key path (\texttt{\{"parent":\{"key}).
\textbf{Wave.} Every block is a row of the batch engine, started from the prompt's cache (array handles shared, recurrent state copied), with a controller that stops when the state leaves the block. Rows run in \emph{ragged} mode (a sampled token and its forced span in one step). First-wave budget per block: 24 steps, or $1.5\times$ the block's mean length $+2$ once the schema has been seen.
\textbf{Certificate.} The samples are replayed block by block through a fresh controller; the replay yields the token sequence $D$, the state before every sampled token and the block of each position, and detects three inconsistencies (a block reached at a state other than its branch's start, a block cut by its budget, a previous token that carried past the block's start). One pass over $D$ from a copy of the prompt's cache gives the logits at every position; the masked argmax (allowed row and cover at the recorded state) is compared with $D_i$. Every failing block continues in one wave from its first failing position with the correct token, its prefix re-prefilled from the prompt's cache; the loop repeats (at most 24 rounds).
\textbf{What was tried and rejected.} Earlier values as placeholders in the branch context (the model copied them), an ellipsis placeholder (copied), type names (copied); no initial plan at the block start (nothing certified); no first-wave cap (a runaway number such as \texttt{60.0000\ldots} cost the whole wave); certifying one mismatch per pass (one wave per failing block).
\textbf{Finite regions.} An enum, a boolean or a tool name is a finite region: a depth-first search over the automaton lists every completion up to a budget (raising an error on a cycle), and all completions are scored after the same prefix in one right-padded batched pass; the decision is the completion with the largest summed log-probability. It is exact for what it computes; Appendix~\ref{app:neg} reports why it is not a better classifier.

\subsection{Rows on demand (host-side variant)}
\texttt{LazyConstraint} keeps the automaton, the forced chains and the accepting information ($O(256|Q|)$) and a bounded dictionary of rows, each computed on first use by the level pass restricted to one root (about 4\,ms per state on the CPU, since a free-text state keeps nearly every trie node alive). Memory is proportional to the visited states: 17 rows (13\,MB) instead of the 824\,MB table for the typed tool grammar. It runs on the synchronous engine only, because the pipelined engine builds the next mask before the host knows the sampled token; the walk kernel supersedes it.

\subsection{The engine}
\textbf{Kernels.} 4-bit matrix-vector kernels for 1--16 rows dequantize weights straight into $8\times8$ simdgroup-matrix fragments (a fixed permutation inside each 128-wide chunk lets a lane find its 32 weights in one 128-bit load); a half-precision variant converts two nibbles with one shift, one mask and one fused multiply--add; a two-level autotuner picks, per shape and then per whole model step timed pipelined, among these families and MLX's kernels (first load on the 4B: about 19 minutes; later loads 3\,s). One fused Metal kernel per layer performs the whole Gated DeltaNet decode step (convolution, norms, gates, the delta-rule recurrence on the $128\times128$ state, the gated output norm) for up to 16 tokens. M1/M2 GPUs have no native bf16 arithmetic, so bf16 checkpoints run in fp16, which is also closer to float32 (Table~\ref{tab:fidelity}). From 128 rows the prompt pass uses MLX's dense matrix multiplication.
\textbf{Pipelining.} A step is built lazily (logits, mask, sampled token $y$, next state $q'=T[y,q]$), and step $n+1$ is enqueued with $y$ and $q'$ as unevaluated device values while the GPU runs step $n$; the host's graph construction (about 1.2\,ms) and command encoding (1.8\,ms) overlap the GPU's 7\,ms. When a forced span may follow $q'$, the next step is built on a functional cache and discarded if the host, on reading $y$, finds a span, which is then fed with $y$ in one pass. Outputs are identical to the synchronous loop.
\textbf{Batching.} Streams join and leave a running batch at any step; every row feeds exactly one token per step, its next forced token or the token it sampled (\emph{drip} mode), so each step is a uniform batch whose inputs are known without the host; new prompts are prefilled in one right-padded batch; a radix-trie prompt cache serves exact prefix hits (the recurrent state exists only at the end of a stored stream).
\textbf{Sampling.} Greedy, or temperature with top-$p$ and top-$k$ under the same rule as mlx-lm's sampler, so that sampled runs of the two engines follow the same distribution.

\subsection{The local tool: server, reply grammar and agents}
The engine is packaged as an OpenAI-compatible server (\texttt{/v1/chat/completions} with tools and \texttt{json\_schema} response formats), serving our engine and stock mlx-lm with llguidance from one process, and as a command that configures the OpenCode coding agent (a team of orchestrator, coder, researcher, tester and reviewer agents driven through one \texttt{opencode serve}). Every request is a row of the running batch. The server re-renders an assistant turn that carried tool calls verbatim, so the whole previous turn is served from the prompt cache (647 of 675 prompt tokens on a tool round trip).
A reply to a request that offers tools decodes under a regular language: free text that cannot start a call, then zero or more calls:
\begin{align*}
\mathrm{REPLY}&=\mathrm{FREE}\,\bigl(\mathrm{CALL}\,(\hookleftarrow\mathrm{CALL})^*\bigr)^?,\\
\mathrm{CALL}&=\texttt{<tool\_call>}\hookleftarrow\texttt{<function=}\mathrm{NAME}\texttt{>}\hookleftarrow\,\mathrm{PARAM}^*\\&\qquad\texttt{</function>}\hookleftarrow\texttt{</tool\_call>},\\
\mathrm{PARAM}&=\texttt{<parameter=}P\texttt{>}\hookleftarrow\mathrm{VAL}\hookleftarrow\texttt{</parameter>}\hookleftarrow,
\end{align*}
with $\mathrm{NAME}$ an alternation of the offered tool names, $\mathrm{VAL}$ any text not containing \texttt{</parameter>} and $\mathrm{FREE}$ any text containing neither \texttt{<tool\_call>} nor \texttt{<function=}. In the \emph{typed} grammar every tool's parameter names and value types are fixed (1,106 states on Qwen's vocabulary), which only the walk kernel can afford.
\begin{proposition}[Exact ``not containing'' expression]
\label{prop:notcontain}
Let $\ell_1,\dots,\ell_m$ be strings that start with \texttt{<} and contain no other \texttt{<}, and $\rho_i$ be $\ell_i$ without its first character. A string contains no $\ell_i$ iff no segment that follows a \texttt{<} (segments being the maximal \texttt{<}-free substrings) starts with some $\rho_i$. In the trie of the $\rho_i$, such a segment either ends at a node $p$ that is not a complete $\rho_i$ or leaves the trie at $p$ with a character that is not a child of $p$. Hence
$\mathrm{NotContaining}(\ell_1,\dots,\ell_m)=\texttt{[\textasciicircum<]*}\bigl(\texttt{<}(\bigcup_p p\,\texttt{[\textasciicircum<}c_1\cdots c_j\texttt{][\textasciicircum<]*}\cup\bigcup_p p)\bigr)^*$, where $p$ ranges over trie nodes that are not complete $\rho_i$ and have no complete $\rho_i$ as a prefix, and $c_1\cdots c_j$ are the children of $p$.
\end{proposition}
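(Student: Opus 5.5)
The plan is to prove the combinatorial characterization first and then read the regular expression off it. I take a ``string'' to be any string over the character alphabet, and write $\Lambda=\{\rho_1,\dots,\rho_m\}$. Since each $\ell_i$ begins with \texttt{<} and contains no other \texttt{<}, every occurrence of $\ell_i$ in $w$ starts at some \texttt{<} of $w$. The rest of that occurrence, $\rho_i$, is \texttt{<}-free, so it lies inside the maximal \texttt{<}-free segment that immediately follows this \texttt{<}. Conversely, if a segment following a \texttt{<} starts with $\rho_i$, then $\ell_i$ occurs at that \texttt{<}. This gives the first claim: $w$ contains no $\ell_i$ iff no segment after a \texttt{<} has a member of $\Lambda$ as a prefix.

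Next I classify the segments that are \emph{good}, meaning they have no prefix in $\Lambda$. Let $\sigma$ be such a segment and walk it through the trie of $\Lambda$. Let $p$ be the longest prefix of $\sigma$ that is a trie node. Every prefix of $p$ is also a trie node and also a prefix of $\sigma$, so no prefix of $p$ (including $p$ itself) is a complete $\rho_i$. Hence $p$ lies in the index set of the statement: it is not complete and has no complete $\rho_i$ as a prefix. There are two cases. Either $\sigma=p$, or $\sigma=p\,c\,\sigma'$, where $c\neq$\texttt{<} (segments are \texttt{<}-free), $c$ is not a child of $p$ (by maximality of $p$), and $\sigma'$ is \texttt{<}-free. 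This is exactly the second claim. Conversely, any string of either form is \texttt{<}-free and good. In the first case its prefixes are prefixes of $p$. In the second case its prefixes are either prefixes of $p$ or extend $pc$, and nothing extending $pc$ is a trie node, so in particular none is a complete $\rho_i$.

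Finally I assemble the expression. Any $w$ decomposes uniquely as $w=\sigma_0\,\texttt{<}\sigma_1\cdots\texttt{<}\sigma_j$ with every $\sigma_i$ \texttt{<}-free. The segment $\sigma_0$ has no constraint and matches \texttt{[\textasciicircum<]*}. Each $\texttt{<}\sigma_i$ with $i\ge1$ must have $\sigma_i$ good, which by the classification means it matches the union inside the starred group. So $w$ avoids every $\ell_i$ iff it matches the displayed expression. The direction ``matches $\Rightarrow$ avoids'' needs one extra check: each starred factor ends just before the next \texttt{<} or at the end of the string, so the factors coincide with the maximal segments and the match reproduces the decomposition. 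This holds because each alternative is \texttt{<}-free after its leading \texttt{<}.

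I expect the main obstacle to be the careful handling of degenerate cases rather than any deep step. The root $p=\varepsilon$ must be included, which covers a segment that is empty or starts with a non-child character. If some $\rho_i=\varepsilon$ (that is, $\ell_i=\texttt{<}$), the index set is empty and the expression correctly excludes every \texttt{<}. A node whose child set is empty yields the class \texttt{[\textasciicircum<]}. The union over $p$ is finite because the trie is finite, so the expression is indeed regular.
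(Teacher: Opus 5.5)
Your proof is correct and takes the same route as the paper. Every occurrence of an $\ell_i$ starts at a \texttt{<}, so avoidance becomes a prefix condition on the segment that follows it; the admissible segments are classified by the deepest trie node they reach (ending there, or leaving it with a character that is neither \texttt{<} nor a child); and the expression is read off the unique decomposition of the string at its \texttt{<} characters. The paper gives only this sketch inside the statement and checks the construction by brute force, so your converse directions and your handling of the degenerate cases ($\rho_i=\varepsilon$, the root node, finiteness of the union) complete it faithfully.
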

The construction is tested against brute force on all strings up to length 6 and on 2,000 random strings per literal set. Every tag of a call is a forced span (20--30\% of a call's tokens are fed, not sampled), and a reply cut by the token budget still closes its call. For small models the server cancels a reply that exceeds a number of complete calls, repeats a call, or repeats its last $n$ tokens three times ($n\in\{8,\dots,128\}$), and steers a turn that repeats the previous turn's call.

\section{Speed: all measurements}
\label{app:speed}
\textbf{Setup.} As in Section~\ref{sec:exp}. Qwen3.5-2B is 1.1\,GB resident and Qwen3.5-4B 2.26\,GB. \emph{Stock} means mlx-lm as shipped, in its own process, with its own kernels and bf16 arithmetic. Every number names the script that produced it; the result files ship with the paper.

\begin{figure}[H]
\centering
\begin{tikzpicture}
\begin{axis}[width=0.6\textwidth, height=4.4cm, xlabel={tokens in the pass, $S$}, ylabel={$\tau(S)/\tau(1)$}, xmin=0.5, xmax=17.5, ymin=0, ymax=6, xtick={1,4,8,12,16}, legend style={at={(0.02,0.98)}, anchor=north west}]
\addplot[deemph, thick, dashed, domain=1:6] {x};
\addplot[ours, thick, mark=*, mark size=1.3pt] coordinates {(1,1.00)(2,1.12)(3,1.21)(4,1.36)(5,1.58)(6,1.59)(7,1.65)(8,1.58)(9,1.99)(10,1.96)(11,2.01)(12,2.00)(13,1.99)(14,2.01)(15,2.05)(16,2.04)(17,2.53)};
\legend{if tokens cost linearly, {measured (2B, one stream)}}
\end{axis}
\end{tikzpicture}
\caption{\textbf{What a wider pass costs} (Qwen3.5-2B, one stream). A pass over $S$ tokens costs far less than $S$ passes, because it is bound by reading the weights. On the 4B a pass costs 13\,ms at $S=1$, 21\,ms at $S=4$ and 39\,ms at $S=16$.}
\label{fig:passcost}
\end{figure}

\subsection{One stream under a constraint}
\begin{table}[H]
\centering
\small
\caption{\textbf{Constrained decoding, one stream}: tokens per second (counting forced tokens), passes per output token, share of output tokens fed as forced spans, and the derived time per sampled token. \emph{ff}: llguidance's fast-forward. Six JSON schemas and five recursive schemas.}
\label{tab:constrained}
\footnotesize
\setlength{\tabcolsep}{3.5pt}
\begin{tabular}{@{}lrrrr rrrr@{}}
\toprule
 & \multicolumn{4}{c}{Qwen3.5-2B} & \multicolumn{4}{c}{Qwen3.5-4B} \\
\cmidrule(lr){2-5}\cmidrule(lr){6-9}
mode & tok/s & passes/tok & forced & ms/sampled & tok/s & passes/tok & forced & ms/sampled \\
\midrule
free decoding & 138 & 1.01 & 0\% & 7.2 & 63.2 & 1.01 & 0\% & 15.7 \\
JSON schema, \stcon{} & 180 & 0.69 & 31\% & 8.0 & 82.3 & 0.69 & 31\% & 17.6 \\
JSON schema, llguidance ff & 136 & 0.72 & 28\% & 10.2 & 68.6 & 0.73 & 27\% & 20.0 \\
recursive, \stcon{} & 170 & 0.72 & 28\% & 8.2 & 78.6 & 0.70 & 30\% & 18.2 \\
recursive, llguidance ff & 143 & 0.71 & 29\% & 9.9 & 67.4 & 0.72 & 28\% & 20.6 \\
recursive, llguidance & 94 & 1.00 & 0\% & 10.6 & & & & \\
\bottomrule
\end{tabular}
\end{table}

\begin{table}[H]
\centering
\scriptsize
\caption{\textbf{\stcon{} and llguidance in one loop, full} (max 256 tokens, medians of three interleaved rounds; 2B / 4B). For llguidance's fast-forward the better of the plain and the overlapped integration (CPU mask computed while the GPU runs) is shown; the overlap gained at most 2\%. Decode time is the sum over the tasks' outputs.}
\label{tab:three-full}
\begin{tabular}{@{}llrrrrrr@{}}
\toprule
path & engine & tok/s & passes/tok & forced & output tokens & decode time & ms/sampled \\
\midrule
regular & free decoding & 137 / 66 & 1.01 / 1.01 & 0 / 0\% & 570 / 597 & 4.15 / 9.02\,s & 7.2 / 15.0 \\
(6 schemas) & \stcon{} & 173 / 85 & 0.69 / 0.69 & 31 / 31\% & 327 / 329 & 1.89 / 3.89\,s & 8.3 / 17.1 \\
 & llguidance, mask only & 103 / 55 & 1.00 / 1.00 & 0 / 0\% & 343 / 333 & 3.34 / 6.08\,s & 9.7 / 18.3 \\
 & llguidance, fast-forward & 143 / 73 & 0.72 / 0.73 & 28 / 27\% & 343 / 333 & 2.41 / 4.59\,s & 9.8 / 18.8 \\
\midrule
recursive & free decoding & 135 / 67 & 1.01 / 1.00 & 0 / 0\% & 592 / 845 & 4.39 / 12.57\,s & 7.4 / 14.8 \\
(4 schemas) & \stcon{} (pushdown) & 161 / 84 & 0.72 / 0.70 & 28 / 30\% & 469 / 382 & 2.92 / 4.54\,s & 8.7 / 17.0 \\
 & llguidance, mask only & 103 / 56 & 1.00 / 1.00 & 0 / 0\% & 310 / 418 & 3.03 / 7.52\,s & 9.8 / 18.0 \\
 & llguidance, fast-forward & 141 / 73 & 0.71 / 0.73 & 29 / 28\% & 310 / 418 & 2.19 / 5.69\,s & 10.0 / 18.8 \\
\bottomrule
\end{tabular}
\end{table}

\textbf{End to end.} On 18 extraction jobs on the 2B (prefill plus decode), \stcon{} takes 8.0\,s against 12.3\,s for llguidance, and 6.8\,s when the prompt carries only the keys and enum options instead of the schema's syntax, which the forced chains supply (Fig.~\ref{fig:batchjson} gives the batched version); agreement with the unconstrained answer is 0.93 in every mode and all outputs are valid.

\subsection{Recursive schemas in detail}
\begin{table}[H]
\centering
\small
\caption{\textbf{Recursive schemas}: five tasks (a Solar-System tree, an expression tree, a threaded discussion, a directory tree, a taxonomy) $\times$ three phrasings, Qwen3.5-2B, greedy, 400-token limit. ms per token is over valid outputs only.}
\label{tab:recursive}
\begin{tabular}{@{}lcccc@{}}
\toprule
mode & valid & passes & forced tokens & ms / token \\
\midrule
\stcon{} pushdown, canonical policy & 11/15 & 1,452 & 921 & 5.9 \\
\stcon{} pushdown, commit-all & 12/15 & 1,304 & 875 & 5.8 \\
llguidance & 12/15 & 2,204 & 0 & 9.4 \\
llguidance fast-forward & 12/15 & 1,309 & 895 & 7.0 \\
\bottomrule
\end{tabular}
\end{table}
On the expression task every engine fails: the model nests without end; ours stops at the depth cap, llguidance at the token limit. Under the canonical policy one phrasing of the discussion task duplicates replies until the limit; the same policy yields the complete Solar-System tree (all five planets, 92 tokens) where llguidance stops after Earth (46 tokens). Single greedy samples of a 2B model are path-sensitive in both directions. Forced spans cross returns: after a node's last value the whole \texttt{]\},\{"name":"} continuation is one span.

\subsection{Typed tool calls through the server}
On eight concurrent agent requests, two rounds: on the 2B, the typed grammar with the walk kernel produces 46 of 46 calls with every argument present and of its declared type at 366--403 tokens/s (30\% of tokens forced, since parameter names are chains), against 43 of 52 typed calls with the untyped grammar and token tables at 372--410 tokens/s (20\% forced). On the 4B, 8 of 8 calls are typed either way (131 against 139 tokens/s): the typed grammar turns a habit into a guarantee at a cost of 6\%.

\subsection{Certified blocks}
\begin{table}[H]
\centering
\small
\caption{\textbf{Forward passes per output}: six schemas, three prompt variants each, greedy; sequential decoding against certified block-parallel decoding, which produces the same tokens in every case. On the 4B, block passes split into wave steps and certification passes.}
\label{tab:blocks}
\begin{tabular}{@{}l rrr rrr@{}}
\toprule
 & \multicolumn{3}{c}{Qwen3.5-2B} & \multicolumn{3}{c}{Qwen3.5-4B} \\
\cmidrule(lr){2-4}\cmidrule(lr){5-7}
schema & sequential & blocks & exact & sequential & blocks (waves + cert.) & exact \\
\midrule
person & 84 & \textbf{33} & 3/3 & 84 & \textbf{33} (30 + 3) & 3/3 \\
tool\_call & 35 & \textbf{17} & 3/3 & 35 & \textbf{20} (14 + 6) & 3/3 \\
product & 112 & \textbf{53} & 3/3 & 112 & \textbf{46} (40 + 6) & 3/3 \\
event & 169 & \textbf{96} & 3/3 & 167 & \textbf{109} (94 + 10) & 3/3 \\
classification & 126 & 112 & 3/3 & 142 & 159 (148 + 7) & 3/3 \\
contact\_card & 146 & \textbf{63} & 3/3 & 146 & \textbf{117} (108 + 6) & 3/3 \\
\midrule
all 18 outputs & 672 & \textbf{374} & 18/18 & 686 & \textbf{484} & 18/18 \\
rounds (max / median) & & 3 / 2 & & & 4 / 2 & \\
decode wall time & 5.9\,s & 7.4\,s & & 12.2\,s & 18.6\,s & \\
\bottomrule
\end{tabular}
\end{table}
First-round certification accepts 438 of 654 sampled tokens on the 2B and 365 of 668 on the 4B (the larger model commits to earlier values more). On the M2 Pro a wave step over 5--9 rows costs 14--20\,ms against 7\,ms for one row, and a certification pass over 34--50 tokens costs 33--55\,ms. Per job on the 2B (first prompt variant): person 11 against 27 passes (210 against 230\,ms), tool\_call 6 against 12 (125 against 124\,ms), product 19 against 38, event 32 against 56, classification 37 against 43, contact\_card 21 against 49.

\subsection{Throughput by number of streams (engine)}
\begin{table}[H]
\centering
\small
\caption{\textbf{Free decoding, total tokens/s by number of streams} (256 tokens per stream, no constraint). \emph{mlx-lm on ours}: mlx-lm's batch generator running on our patched model in the same process.}
\label{tab:batch}
\begin{tabular}{@{}r rrrr @{\hspace{1.2em}} rrr@{}}
\toprule
 & \multicolumn{4}{c}{Qwen3.5-2B} & \multicolumn{3}{c}{Qwen3.5-4B} \\
\cmidrule(lr){2-5}\cmidrule(lr){6-8}
streams & stock & mlx-lm on ours & \stcon{} & vs stock & stock & \stcon{} & vs stock \\
\midrule
1 & 127 & 140 & \textbf{142} & $1.12\times$ & 61 & \textbf{64} & $1.04\times$ \\
2 & 172 & 251 & \textbf{259} & $1.51\times$ & 82 & \textbf{116} & $1.41\times$ \\
4 & 186 & 341 & \textbf{339} & $1.82\times$ & 87 & \textbf{160} & $1.83\times$ \\
8 & 188 & 575 & \textbf{576} & $3.06\times$ & 87 & \textbf{271} & $3.13\times$ \\
16 & 342 & 751 & \textbf{758} & $2.22\times$ & 157 & \textbf{341} & $2.17\times$ \\
32 & 624 & 844 & \textbf{873} & $1.40\times$ & & & \\
64 & 615 & 837 & \textbf{952} & $1.55\times$ & & & \\
96 & 613 & & \textbf{962} & $1.57\times$ & & & \\
128 & 613 & 830 & \textbf{892} & $1.45\times$ & & & \\
\bottomrule
\end{tabular}
\end{table}
Stock mlx-lm's batch generator stalls between 4 and 8 streams; mlx-lm's own loop on our patched model reaches our numbers, so the gain is in the kernels and fp16, not the loop. With the same prompt on all eight streams the numbers do not move (273 against 273 tokens/s for ours on the 4B), so nothing is shared between streams. At 128 streams the recurrent states hold 2.3\,GB and the wider matrix multiplications cost more than the extra rows return. Single-stream decode on the 2B falls only from 138 to 118 tokens/s as the context grows from 128 to 8,192 tokens (Fig.~\ref{fig:context}), because only 6 of the 24 layers attend over the context; prefill runs at about 1,700 tokens/s.

\begin{figure}[H]
\centering
\begin{tikzpicture}
\begin{axis}[width=0.48\textwidth, height=4.2cm, xlabel={context length (tokens)}, ylabel={tokens/s, one stream},
  xmode=log, log basis x=2, xtick={128,512,1024,2048,4096,8192}, xticklabels={128,512,1k,2k,4k,8k}, ymin=0, ymax=160,
  legend style={at={(0.02,0.08)}, anchor=south west}]
\addplot[deemph, thick, mark=*, mark size=1.3pt] coordinates {(128,139.7)(512,138.4)(1024,136.4)(2048,132.8)(4096,128.6)(8192,117.9)};
\addplot[ours, thick, mark=*, mark size=1.3pt] coordinates {(128,137.5)(512,135.9)(1024,136.5)(2048,133.6)(4096,127.9)(8192,118.4)};
\legend{mlx-lm on our model, \stcon{}}
\end{axis}
\end{tikzpicture}\hfill
\begin{tikzpicture}
\begin{axis}[width=0.48\textwidth, height=4.2cm, ybar, bar width=6pt, ymin=0, ymax=9.5, ylabel={wall time, 18 jobs (s)},
  symbolic x coords={4 slots, 8 slots, 16 slots}, xtick=data, enlarge x limits=0.25,
  legend style={at={(0.5,1.02)}, anchor=south}, legend columns=3, legend style={font=\tiny}]
\addplot[fill=deemph, draw=none] coordinates {(4 slots,7.39) (8 slots,6.12) (16 slots,5.71)};
\addplot[fill=ours, draw=none] coordinates {(4 slots,5.55) (8 slots,4.66) (16 slots,5.65)};
\addplot[fill=third, draw=none] coordinates {(4 slots,4.74) (8 slots,3.83) (16 slots,3.57)};
\legend{mlx-lm + llguidance, \stcon{}, \stcon{} (compact hint)}
\end{axis}
\end{tikzpicture}
\caption{\textbf{Left: decode speed against context length} (2B, one stream, 128 output tokens). \textbf{Right: batched constrained serving} (2B, 18 JSON extraction jobs, best of two runs, all outputs valid). With the schema in the prompt our drip engine is $1.33\times$ and $1.31\times$ faster than mlx-lm's batch generator with llguidance at 4 and 8 slots and level at 16, where a step is bound by arithmetic and forced spans buy nothing on a uniform batch. The compact hint puts only keys and enum options in the prompt (118 instead of 212 tokens) and lets forced spans supply the syntax: $1.5$--$1.6\times$ at every slot count.}
\label{fig:context}
\label{fig:batchjson}
\end{figure}

\subsection{Agents (engine, plus the typed grammar)}
\begin{table}[H]
\centering
\small
\caption{\textbf{Parallel agents} (built-in suite of small coding tasks, Qwen3.5-2B, greedy, every reply constrained to one well-formed tool call). Stock: mlx-lm's batch generator with llguidance for the tool-call grammar. Both use continuous batching and the same prompt cache (94--96\% hits).}
\label{tab:agents}
\begin{tabular}{@{}lcccc@{}}
\toprule
& \multicolumn{2}{c}{16 agents} & \multicolumn{2}{c}{8 agents} \\
& stock & \stcon{} & stock & \stcon{} \\
\midrule
wall time & 74.6\,s & \textbf{29.3\,s} ($2.55\times$) & 19.4\,s & 22.1\,s ($0.87\times$) \\
throughput & 125.5 tok/s & \textbf{271.1 tok/s} ($2.16\times$) & 108.6 tok/s & \textbf{170.6 tok/s} ($1.57\times$) \\
agents finished & 9/16 & 12/16 & 5/8 & 6/8 \\
checks passed & 11/16 & 10/16 & 5/8 & 4/8 \\
\bottomrule
\end{tabular}
\end{table}
At eight agents our wall time is longer although the throughput is $1.57\times$ higher, because our agents generated 80\% more tokens (3,778 against 2,104): the engines do not make the same decisions, and this small model finishes about half the tasks under either. Throughput is the like-for-like number. Through the OpenAI-compatible server (\texttt{stcon bench --n 8 --rounds 3}, 2B), ours serves 1,928--1,954 output tokens per round in 4.6--4.8\,s (median 410 tokens/s, 366--454 of the tokens forced) against 597--792 tokens in 4.5--7.3\,s for stock (134 tokens/s); the stock replies are shorter, so the fair reading is about three times the tokens in the same wall time, every reply parsing. Four OpenCode coder agents through one \texttt{opencode serve}: 19\,s wall, prompt cache 61\%, three of four tasks correct.

\subsection{LiveCodeBench (engine)}
\label{app:lcb}
The 4B on LiveCodeBench~\cite{lcb} (\texttt{code\_generation\_lite}, release v6, 1,055 problems with hidden tests): the 40 most recent easy and medium problems (March--April 2025; 20 functional, 20 stdin), the official prompt, the last code block extracted, a problem passed only when every public and hidden test passes. Every engine runs in its own process on the same prompts.
\begin{table}[H]
\centering
\small
\caption{\textbf{LiveCodeBench on Qwen3.5-4B}: pass@1, decode tokens, tokens/s (one stream) or in total (batch), wall time, answers cut at the cap. $^\dagger$Thinking on, 20 problems, 6 slots, temperature 0.6 / top-$p$ 0.95 / top-$k$ 20 (greedy loops in thinking mode); 5 of 20 finish within the cap on our engine and all 5 pass.}
\label{tab:lcb}
\begin{tabular}{@{}llrrrrr@{}}
\toprule
cap & engine & pass@1 & tokens & tokens/s & wall & cut \\
\midrule
1,024, greedy & \stcon{}, one stream & 8/40 & 28.2k & 63.4 & 481\,s & 23 \\
 & mlx-lm, one stream & 11/40 & 27.6k & 62.3 & 499\,s & 22 \\
 & \stcon{}, batch 8 & 9/40 & 27.4k & \textbf{160.4} & \textbf{171\,s} & 22 \\
 & mlx-lm, batch 8 & 10/40 & 28.3k & 74.5 & 379\,s & 23 \\
\midrule
4,096, greedy & \stcon{}, one stream & 9/40 & 96.0k & 59.1 & 1,667\,s & 22 \\
 & mlx-lm, one stream & 14/40 & 87.2k & 61.1 & 1,482\,s & 17 \\
 & \stcon{}, batch 8 & 12/40 & 93.3k & \textbf{196.1} & \textbf{476\,s} & 20 \\
 & mlx-lm, batch 8 & 13/40 & 93.0k & 78.7 & 1,181\,s & 19 \\
\midrule
16,384, thinking$^\dagger$ & \stcon{}, batch 6 & 6/20 & 275k & 119.8 & 2,296\,s & 15 \\
 & mlx-lm, batch 6 & 1/20 & 4.4k & --- & out of memory & --- \\
\bottomrule
\end{tabular}
\end{table}
\begin{table}[H]
\centering
\footnotesize
\caption{\textbf{Which token does float32 prefer at a divergence?}. For every problem where an engine's output parts from mlx-lm's single stream, the prompt plus the common prefix is run in float32 and the logit gap between the two competing tokens read.}
\label{tab:divergence}
\begin{tabular}{@{}llrrrrr@{}}
\toprule
cap & engine & divergent & median gap & max gap & float32: mlx-lm & float32: other \\
\midrule
4,096 & \stcon{}, one stream, fp16 & 35 & 0.056 & 0.52 & 3 & \textbf{32} \\
 & \stcon{}, batch 8, fp16 & 34 & 0.062 & 0.52 & 2 & \textbf{32} \\
 & mlx-lm, batch 8 & 36 & 0.145 & 1.83 & 22 & 14 \\
1,024 & \stcon{}, one stream, bf16 & 38 & 0.097 & 0.33 & 23 & 15 \\
\bottomrule
\end{tabular}
\end{table}
With thinking off, the 4B reasons in prose at length before its code; half the answers hit the 4,096-token cap, and the outputs of any two engines are identical on only 2--6 of 40 problems (mlx-lm's own single-stream and batched paths agree on 4 of 40). Pass rates therefore differ by a few problems in either direction between engines that compute the same model, and every divergence is a near-tie; at those ties float32 prefers our fp16 engine's token on 32 of 35. With thinking on and greedy decoding the 4B loops (16 of 20 problems run to a 16k cap, 11 repeating one sentence); with the model card's sampling settings 5 of 20 finish and all pass, and mlx-lm's batch generator runs out of memory at six 16k-token streams after one finished answer, where our engine completes the run inside the wired limit (Table~\ref{tab:regimes}).

\subsection{Arithmetic fidelity (engine)}
\begin{table}[H]
\centering
\small
\caption{\textbf{Compute dtype against float32} (six prompts, 96 teacher-forced positions each).}
\label{tab:fidelity}
\begin{tabular}{@{}llrrrrr@{}}
\toprule
model & dtype & top-1 agreement & mean KL & p99 KL & max KL & max $|\Delta\text{logit}|$ \\
\midrule
4B & bfloat16 (mlx-lm) & 0.9965 & $1.67\times10^{-4}$ & $2.0\times10^{-3}$ & $3.0\times10^{-3}$ & 5.60 \\
4B & float16 (\stcon{}) & \textbf{1.0000} & $\mathbf{3.1\times10^{-6}}$ & $4.6\times10^{-5}$ & $9.1\times10^{-5}$ & 0.31 \\
2B & bfloat16 & 0.9965 & $2.6\times10^{-4}$ & $2.8\times10^{-3}$ & $5.1\times10^{-3}$ & 0.71 \\
2B & float16 & \textbf{1.0000} & $\mathbf{3.6\times10^{-6}}$ & $3.6\times10^{-5}$ & $5.3\times10^{-5}$ & 0.09 \\
\bottomrule
\end{tabular}
\end{table}
On the 2B, greedy generations of the two engines diverge only at near-ties (competing tokens within 0.08 logits in float32); on twelve generic 4B prompts, 5 of 12 200-token generations are identical in either dtype, all gaps within $\pm0.27$, float32 siding with ours on 6 of 7 in fp16.

\section{Memory: all measurements}
\label{app:memory}
Memory is read from MLX's own counters (active and peak bytes on the GPU) inside each engine's process; the resident set size is meaningless for unified memory.

\begin{table}[H]
\centering
\scriptsize
\caption{\textbf{Constraint memory against grammar size}: token table against walk kernel, pipelined engine, greedy, no healing on either side; tokens/s median of three interleaved rounds. The table's memory is $3|Q||\V|$ bytes; the walk's is the automaton plus the 2.9\,MB packed vocabulary. Outputs are identical wherever the table exists. 4B rows for the typed grammar and 96 fields are from a fresh process.}
\label{tab:lazy}
\begin{tabular}{@{}lrrrrrrrrr@{}}
\toprule
 & & & & & & \multicolumn{2}{c}{tok/s, 2B} & \multicolumn{2}{c}{tok/s, 4B} \\
\cmidrule(lr){7-8}\cmidrule(lr){9-10}
grammar & states & table & table build & walk & kernel & table & walk & table & walk \\
\midrule
person & 104 & 77\,MB & 0.7\,s & 2.9\,MB & 12--14\,$\mu$s & 156 & 153 & 75 & 75 \\
tool\_call & 164 & 122\,MB & 1.0\,s & 2.9\,MB & 7--9\,$\mu$s & 263 & 256 & 124 & 121 \\
product & 192 & 143\,MB & 1.3\,s & 2.9\,MB & 8--9\,$\mu$s & 191 & 189 & 90 & 89 \\
event & 178 & 133\,MB & 1.3\,s & 2.9\,MB & 7--9\,$\mu$s & 168 & 161 & 80 & 78 \\
classification & 158 & 118\,MB & 1.0\,s & 2.9\,MB & 7--9\,$\mu$s & 159 & 169 & 82 & 83 \\
contact\_card & 246 & 183\,MB & 1.7\,s & 3.0\,MB & 7--8\,$\mu$s & 183 & 180 & 88 & 87 \\
48 fields & 986 & 735\,MB & 9\,s & 3.3\,MB & 9--14\,$\mu$s & 274 & 267 & 131 & 132 \\
typed tool calls & 1,106 & 824\,MB & 62--127\,s & 3.4\,MB & 11--18\,$\mu$s & 206 & 189 & 89 & 87 \\
96 fields & 1,970 & 1.5\,GB & not built & 3.85\,MB & 8--14\,$\mu$s & --- & 260 & --- & 121 \\
\bottomrule
\end{tabular}
\end{table}
What remains of the compile with the walk kernel is the automaton: 0.02--0.5\,s for the schemas, 25--28\,s for the tool grammar, whose subset construction, not its table, is the cost. On the 4B, a process that had already built the tables of the earlier grammars next to the model no longer had room for the typed grammar's 824\,MB table: it took 332\,s to build and decoded at 6 tokens/s under swap pressure, while the walk in the same process ran at 25 tokens/s; the rows in Table~\ref{tab:lazy} are from a fresh process.

\begin{table}[H]
\centering
\small
\caption{\textbf{Rows on demand} (host-side variant, Qwen3.5-2B, synchronous engine, no healing): rows computed by one decode and their memory against the full table; tokens/s with the full table, with warm rows and with cold rows.}
\label{tab:lazyhost}
\begin{tabular}{@{}lrrrrr@{}}
\toprule
grammar & states & full table & rows & row memory & tok/s full / warm / cold \\
\midrule
person & 104 & 77\,MB & 11 & 8\,MB & 129 / 124 / 95 \\
tool\_call & 164 & 122\,MB & 9 & 7\,MB & 224 / 210 / 159 \\
product & 192 & 143\,MB & 21 & 16\,MB & 154 / 158 / 133 \\
event & 178 & 133\,MB & 17 & 13\,MB & 138 / 136 / 106 \\
classification & 158 & 118\,MB & 13 & 10\,MB & 135 / 139 / 123 \\
contact\_card & 246 & 183\,MB & 16 & 12\,MB & 147 / 150 / 114 \\
48 fields & 986 & 735\,MB & 85 & 63\,MB & 215 / 222 / 160 \\
typed tool calls & 1,106 & 824\,MB & 17 & 13\,MB & 148 / 164 / 94 \\
96 fields & 1,970 & 1.5\,GB (not built) & 85 & 63\,MB & --- / 222 / 159 \\
\bottomrule
\end{tabular}
\end{table}

\begin{table}[H]
\centering
\scriptsize
\caption{\textbf{Constraint memory and compile time, token table, walk kernel and llguidance}: Qwen3.5 vocabulary, CPU-side, each library in its own process, every grammar driven through a valid document computing the mask at every step.}
\label{tab:three-memory}
\begin{tabular}{@{}lrrr@{}}
\toprule
 & \stcon{} token table & \stcon{} walk kernel & llguidance \\
\midrule
person schema (104 states) & 77\,MB & 0.05\,MB & $<$0.5\,MB \\
96 fields (1,970 states) & 1.5\,GB & 1.0\,MB & $<$0.5\,MB \\
typed tool calls (1,106 states) & 824\,MB & \textbf{0.57\,MB} & does not compile \\
recursive schema (solar) & 38\,MB (pushdown tables) & --- & $<$0.5\,MB \\
16-grammar server, grammars & 7.2\,GB (dies at the 15th) & 4.9\,MB & 4.7\,MB \\
shared, once per process & --- & 2.9\,MB & 153\,MB \\
compile, typed tool grammar & 25\,s DFA + 62--127\,s table & 25\,s DFA & --- \\
compile, one schema & 0.7--1.7\,s & 0.02--0.4\,s & 1--2\,ms \\
mask, per step & GPU gather & 7--18\,$\mu$s (GPU) & 60--270\,$\mu$s (CPU) \\
\bottomrule
\end{tabular}
\end{table}

\begin{table}[H]
\centering
\footnotesize
\caption{\textbf{The server in the wild, both models}: grammars served before failure, GPU memory above the model after the last grammar served, longest first-request wait.}
\label{tab:wild}
\begin{tabular}{@{}llrrr@{}}
\toprule
model, prompt cache & constraints & served & above the model & longest first request \\
\midrule
2B, 50\,MB & token tables & 15/16, then out of memory & 8.2\,GB & 26\,s \\
2B, 50\,MB & walk kernel & \textbf{16/16} & \textbf{0.6\,GB} & 2.8\,s \\
2B, 1.5\,GB & token tables & 14/16, then out of memory & 7.6\,GB & 20\,s \\
2B, 1.5\,GB & walk kernel & \textbf{16/16} & \textbf{2.2\,GB} & 3.1\,s \\
4B, 50\,MB & token tables & 14/16, then out of memory & 6.9\,GB & 26\,s \\
4B, 50\,MB & walk kernel & \textbf{16/16} & \textbf{1.1\,GB} & 6.6\,s \\
\bottomrule
\end{tabular}
\end{table}
The sixteen grammars are the six schemas, then schemas of 8, 12, 16, 24, 32, 40, 48, 64, 80 and 96 fields; four requests per grammar, three concurrent. With the walk kernel the automata and the packed vocabulary are about 50\,MB of the 0.6\,GB above the model on the 2B; the rest is MLX's buffer cache. Throughput with the walk kernel stays within the run-to-run spread (114--209 against 107--218 tokens/s on the 2B).

\subsection{Decoding regimes, the prefill transient and a memory budget (engine)}
\begin{table}[H]
\centering
\small
\caption{\textbf{Peak GPU memory above the model by decoding regime} (Qwen3.5-4B, $B$ streams $\times$ $N$ tokens on LiveCodeBench prompts, no constraint).}
\label{tab:regimes}
\begin{tabular}{@{}l rr rr@{}}
\toprule
 & \multicolumn{2}{c}{\stcon{}} & \multicolumn{2}{c}{stock mlx-lm} \\
\cmidrule(lr){2-3}\cmidrule(lr){4-5}
streams $\times$ tokens & above the model & tok/s & above the model & tok/s \\
\midrule
$1\times4096$ & 1.0\,GB & 65 & 0.8\,GB & 60 \\
$8\times64$ (prefill only) & 6.4\,GB & & 3.8\,GB & \\
$8\times1024$ & 6.4\,GB & 175 & 3.7\,GB & 82 \\
$8\times4096$ & 6.4\,GB & 212 & 5.2\,GB & 84 \\
$6\times8192$ & 4.9\,GB & 93 & 5.9\,GB & 64 \\
$6\times16384$ & \textbf{4.9\,GB} & 128 & out of memory & --- \\
\bottomrule
\end{tabular}
\end{table}
\begin{table}[H]
\centering
\small
\caption{\textbf{The prefill transient} (Qwen3.5-4B, our engine, nothing decoded): peak memory above the model while prefilling the longest prompt (1,360 tokens) alone, and $B$ prompts in one right-padded batch, by prefill chunk.}
\label{tab:prefill}
\begin{tabular}{@{}lrrrrr@{}}
\toprule
 & 2048 & 1024 & 512 & 256 & 128 \\
\midrule
one stream, 1,360 tokens & 1.41\,GB & 1.46\,GB & 1.23\,GB & 1.09\,GB & 1.19\,GB \\
batch of 2 & & 2.12\,GB & & 1.36\,GB & \\
batch of 4 & & 3.50\,GB & & 1.59\,GB & \\
batch of 8 & & 6.48\,GB & & 2.43\,GB & \\
\bottomrule
\end{tabular}
\end{table}
At short contexts the peak is the prompt pass, not the caches: our batch engine prefills every admitted prompt in one right-padded batch with 1,024-token chunks, about 0.8\,MB per token of (rows $\times$ chunk), while stock prefills four prompts at a time in 512-token steps but computes logits at every prompt position (about 1\,MB per token). Capping a prefill pass at about 2,000 tokens brings our 8-slot peak to about 2.4\,GB; skipping the output head in the single-stream prompt pass saves about 1.2\,GB. At long contexts our caches never exceed the prefill transient (the peak at $6\times8192$ equals the peak at $6\times16384$, although the caches hold 3.3\,GB at 16k tokens), while stock grows its KV cache by copying, so its growth costs about twice the raw cache and exceeds the wired limit at six 16k-token streams.
\textbf{A budget for $n$ agents.} With $W$ the weights, $C$ the prompt-cache budget, $P$ the prefill transient, $n$ streams of $T$ tokens and $G$ the constraints, the peak is about $M\approx W+C+P+n(s+\tau T)+G$; on the 4B, $W=2.26$\,GB, $s=50$\,MB, $\tau=32$\,KB per token, $P\approx0.8\,\mathrm{MB}\times(\text{rows}\times\text{chunk})$ for our batch prefill, and $G$ is 3\,MB per grammar with the walk kernel against $3|Q||\V|$ bytes, never freed, with tables. Eight coding agents with 1,000-token prompts, 4,000-token answers and four schemas need $2.26+1.5+6.4+8\times(0.05+0.13)+0.01\approx11.6$\,GB on our engine today, about $7.6$\,GB with the 2,000-token prefill cap; on stock mlx-lm with tables for the four schemas, about $10.8$\,GB, at the edge of the 10.7\,GB limit.

\section{Budget masking: measurements}
\label{app:budgetexp}
\begin{table}[H]
\centering
\small
\caption{\textbf{Validity under a token budget, regular path} (Qwen3.5-2B): budgets $B=2,4,\dots$ up to the full greedy answer; $\need_0$ is the budget table's value at the start (the shortest valid output under the forced-span policy). Feasible budgets are those with $B\ge\need_0$.}
\label{tab:budgetsteps}
\begin{tabular}{@{}lrrrrrr@{}}
\toprule
schema & full answer & $\need_0$ & first valid & feasible & valid, mask & valid, no mask \\
\midrule
person & 34 & 16 & 16 & 10 & 10 & 1 \\
tool\_call & 29 & 25 & 26 & 2 & 2 & 0 \\
product & 57 & 37 & 38 & 10 & 10 & 0 \\
event & 75 & 18 & 18 & 29 & 29 & 0 \\
classification & 58 & 22 & 22 & 19 & 19 & 1 \\
contact\_card & 74 & 35 & 36 & 20 & 20 & 1 \\
\midrule
all & & & & 90 & \textbf{90} & 3 \\
\bottomrule
\end{tabular}
\end{table}
None of the 72 infeasible budgets ($B<\need_0$) yields a valid output with or without the mask, as Theorem~\ref{thm:budget} predicts; the first valid budget is $\need_0$ or the next even number. On the pushdown path (four completable recursive tasks, steps of six from eight tokens), the sound byte-level bound of Proposition~\ref{prop:pdabudget} gives 54 of 61 valid outputs against 2 of 61, every budget valid from the first one that yields a valid output (12--20 tokens) upward.

\begin{figure}[H]
\centering
\begin{tikzpicture}
\begin{axis}[width=0.58\textwidth, height=4.2cm, xlabel={token budget, as a fraction of the full answer}, ylabel={share of valid outputs},
  xmin=0.25, xmax=1.05, ymin=0, ymax=1.05, xtick={0.3,0.5,0.7,0.9,1.0}, ytick={0,0.25,0.5,0.75,1}, yticklabels={0,25\%,50\%,75\%,100\%},
  legend style={at={(1.03,0.5)}, anchor=west, font=\scriptsize}, legend cell align=left]
\addplot[ours, thick, mark=*, mark size=1.3pt] coordinates {(0.3,0.167)(0.4,0.333)(0.5,0.667)(0.6,0.667)(0.7,0.833)(0.8,0.833)(0.9,1)(1.0,1)};
\addplot[third, thick, mark=square*, mark size=1.3pt] coordinates {(0.3,0.75)(0.4,0.75)(0.5,1)(0.6,1)(0.7,1)(0.8,1)(0.9,1)(1.0,1)};
\addplot[deemph, thick, dashed, mark=*, mark size=1.3pt] coordinates {(0.3,0)(0.4,0)(0.5,0)(0.6,0)(0.7,0)(0.8,0)(0.9,0)(1.0,1)};
\addplot[deemph, thick, dashed, mark=square*, mark size=1.3pt] coordinates {(0.3,0)(0.4,0)(0.5,0)(0.6,0)(0.7,0)(0.8,0)(0.9,0)(1.0,0.75)};
\legend{{regular, with mask}, {pushdown, with mask}, {regular, no mask}, {pushdown, no mask}}
\end{axis}
\end{tikzpicture}
\caption{\textbf{Validity against the token budget} (Qwen3.5-2B; budget as a fraction of each task's full answer; six regular and four recursive tasks). Without the mask every cut output is invalid. With it, outputs shorten gracefully; where the masked regular curve is below 100\%, the budget is below the shortest valid answer for some schemas.}
\label{fig:budget}
\end{figure}

\begin{figure}[H]
\centering
\footnotesize
\begin{tabular}{@{}rL{11.6cm}@{}}
\toprule
budget & output (Qwen3.5-2B, the \texttt{event} schema) \\
\midrule
320 (75 used) & \texttt{\{"title":\allowbreak "The Applied AI Summit",\allowbreak "date":\allowbreak "2026-11-12",\allowbreak "location":\allowbreak \{"venue":\allowbreak "Congress Center",\allowbreak "city":\allowbreak "Berlin",\allowbreak "country":\allowbreak "Germany"\},\allowbreak "attendees":\allowbreak [\{"name":\allowbreak "Dr. Aiko Tanaka",\allowbreak "role":\allowbreak "speaker"\},\{"name":\allowbreak "Lars Petersen",\allowbreak "role":\allowbreak "speaker"\},\{"name":\allowbreak "Nina Weber",\allowbreak "role":\allowbreak "organizer"\}]\}} \\
48 & \texttt{\{"title":\allowbreak "The Applied AI Summit",\allowbreak "date":\allowbreak "2026-11-12",\allowbreak "location":\allowbreak \{"venue":\allowbreak "Congress Center",\allowbreak "city":\allowbreak "Berlin",\allowbreak "country":\allowbreak "Germany"\},\allowbreak "attendees":\allowbreak [\{"name":\allowbreak "Dr.",\allowbreak "role":\allowbreak "speaker"\}]\}} \\
32 & \texttt{\{"title":\allowbreak "The Applied AI Summit",\allowbreak "date":\allowbreak "2026-11-",\allowbreak "location":\allowbreak \{"venue":\allowbreak "",\allowbreak "city":\allowbreak "",\allowbreak "country":\allowbreak ""\},\allowbreak "attendees":\allowbreak []\}} \\
24 & \texttt{\{"title":\allowbreak "The Applied AI Summit",\allowbreak "date":\allowbreak "",\allowbreak "location":\allowbreak \{"venue":\allowbreak "",\allowbreak "city":\allowbreak "",\allowbreak "country":\allowbreak ""\},\allowbreak "attendees":\allowbreak []\}} \\
\midrule
40, no mask & \texttt{\{"title":\allowbreak "The Applied AI Summit",\allowbreak "date":\allowbreak "2026-11-12",\allowbreak "location":\allowbreak \{"venue":\allowbreak "Congress Center",\allowbreak "city":\allowbreak "Berlin",\allowbreak "country":\allowbreak "Germany"\},\allowbreak "attendees":\allowbreak [\{"name":\allowbreak "} \quad (cut mid-value, invalid) \\
\bottomrule
\end{tabular}
\caption{\textbf{The budget mask on one output.} As the budget tightens the decoder closes the array early, then shortens strings, then empties them, because it admits only tokens after which a valid ending still fits.}
\label{fig:budgetex}
\end{figure}

\section{Quality: details}
\label{app:quality}
\textbf{Metrics.} \emph{Agreement} is the share of leaf values of the model's unconstrained greedy answer (its parsed JSON) that the constrained output reproduces exactly, pooled over the six schemas (the per-workload mean is 0.94 / 0.87 for \stcon{}). It measures fidelity to what the model would have said, not correctness, and it penalizes harmless variants. \emph{Gold correctness} compares 45 objective fields of five schemas with values read off the source texts: person (name, age, email, status, three tags and their number: 8 fields), tool\_call (function, location, unit, include\_forecast, days: 5), product (name, with or without ``The''; price, currency, in\_stock, three categories and their number, three dimensions: 11), event (title, with or without ``The''; date, venue, city, country, three attendees with names and roles and their number: 12), contact\_card (all nine fields). The classification task has no single correct answer and is excluded. Strings are compared after trimming and lowercasing, numbers numerically. Intervals are 95\% Wilson intervals with fields treated as independent, which they are not entirely.

\begin{table}[H]
\centering
\small
\caption{\textbf{Gold fields reproduced, per schema} (outputs of Table~\ref{tab:quality}).}
\label{tab:goldper}
\begin{tabular}{@{}l rrrrr r@{}}
\toprule
engine (2B / 4B) & person & tool\_call & product & event & contact\_card & all \\
\midrule
free, \stcon{}, llguidance (all modes) & 8 / 8 & 5 / 5 & 11 / 11 & 12 / 12 & 9 / 9 & 45 / 45 \\
\bottomrule
\end{tabular}
\end{table}

\textbf{Segmentation policies.} A separate run on the 2B (six schemas, 320-token limit, per-workload mean agreement) compares our three policies on the same regex: plain token-level masking (the model chooses how forced bytes are split) 0.93 at 343 passes, commit-all 0.93 at 246, and the canonical policy 0.95 at 224; llguidance in the same run reaches 0.95 with 341 passes (mask only) and 245 (fast-forward). The policy visibly matters on one task: under plain token-level masking the model writes ``Just-Luc Moreau'' on the contact card where its unconstrained answer and both forcing policies write ``Jean-Luc''. Since only the segmentation policy differs between these runs, the divergence is due to the policy.

\section{What did not help}
\label{app:neg}
\begin{table}[H]
\centering
\small
\caption{\textbf{Measured ceilings and negative results} (Qwen3.5-2B). Every candidate was first bounded by replacing the component with a free stand-in and timing the real step.}
\label{tab:negative}
\begin{tabular}{@{}L{5.4cm}L{7.6cm}@{}}
\toprule
idea & measurement \\
\midrule
defer the recurrent-state writes (a log of updates) & removing the state store saves 2.3\% of a step at batch 1, 6.7\% at 16--32; not built \\
a certified low-rank vocabulary head & the head's spectrum is flat: rank 1,024 of 2,048 rules out under 2\% of rows; not built \\
fuse the norms, residual adds and SwiGLU & 2--10\% of a pipelined step; each costs 1--2\,$\mu$s \\
score few-choice regions as a tree in one pass & forced spans already absorb them: 0--4\% fewer passes, no wall-time change \\
finite regions (sequence-level decision of a label among 16) & 2.0 passes against 2.9, but the intended label on 7 of 24 tickets against 15 for token-level greedy: the most probable complete string is the short generic label \\
decode sibling subtrees of a recursive array in parallel & slower on every task (Table~\ref{tab:siblings}); branches invent what their siblings decide \\
speculative drafts in the batch engine & a 2-token step at 8 rows costs $1.48\times$ a 1-token step: batched steps are compute-bound \\
prompt-lookup drafts verified against the constraint & exact; $1.02\times$ overall, $1.1$--$1.27\times$ where the model copies, up to 11\% slower where drafts are rejected; off by default \\
block-parallel fields without a certificate & $1.76\times$ end to end on 18 requests at agreement 0.94, but approximate: a value does not see earlier values \\
\bottomrule
\end{tabular}
\end{table}
\begin{table}[H]
\centering
\small
\caption{\textbf{Sibling-parallel decoding of recursive arrays} (\texttt{siblings.py}, Qwen3.5-2B, wall time = prefill + decode).}
\label{tab:siblings}
\begin{tabular}{@{}lll@{}}
\toprule
task & sequential pushdown & sibling-parallel \\
\midrule
solar & 0.84\,s, 109 tok/s, 11 nodes & 2.02\,s, 66 tok/s, 13 nodes (Venus given Earth's subtree) \\
thread & 2.67\,s, 143 tok/s & 2.67\,s, 71 tok/s (a reply duplicated) \\
fs & 0.85\,s, 122 tok/s & 4.03\,s, 82 tok/s \\
cats & 0.55\,s, 76 tok/s & 2.00\,s, 55 tok/s \\
org & 0.32\,s, 79 tok/s & 2.09\,s, 89 tok/s \\
menu & 0.39\,s, 104 tok/s & 35\,s, 1,168 invented nodes \\
\bottomrule
\end{tabular}
\end{table}
The block product holds for the \emph{language} of sibling subtrees; it does not make the model's decisions independent. That is why the certified block decoder verifies its drafts, and why the parallel decoding of siblings without verification fails.

\section{Exactness and tests}
\label{app:exact}
\textbf{Exactness.} In exact arithmetic none of the mechanisms changes what greedy decoding produces, except the canonical policy, which changes the segmentation of forced text on purpose (Theorem~\ref{thm:pass} compares decoders under the same policy; Appendix~\ref{app:quality}). Measured: certified block decoding is token-identical to sequential decoding on all 36 outputs of both models; the walk kernel's rows equal the table's columns for every state, and its outputs are identical on every grammar, single-stream and batched; pipelined and synchronous loops give identical outputs; the pushdown tables equal byte-by-byte stepping on every reachable configuration of a small vocabulary with stacks to depth 4 (69k, 35k and 13k configuration--token pairs for three grammars) and brute force over all 248k tokens at sampled prefixes with the real vocabulary; budgets survive a simulated stopping rule with adversarial token choices. Comparisons with other engines are subject to floating-point near-ties (Table~\ref{tab:divergence}).

\textbf{Tests.} 234 automated tests, 44 of which also run against the 4B: the capped traversal and path counts against brute force on random graphs; the regex parser, UTF-8 expansion, subset construction, trimming and minimization against Python's \texttt{re}; forced chains, the canonical split, stable-tail commit, boundary cover and healing against the tokenizer; pipelined against synchronous decoding; every kernel family and the fused recurrent step against stock mlx-lm in both dtypes; the budget stopping rule; the pushdown (exhaustive reachable configurations, real-vocabulary traces, determinism and depth limit); the walk kernel against the table, single and batched; certified blocks against sequential decoding; the not-containing expression against brute force; the server (tool calls, prompt-cache round trip, JSON response formats); sampling with top-$k$ and top-$p$.

\textbf{Protocol.} One machine, one GPU job at a time, configurations interleaved inside one process with medians reported, baselines run as controllers in our loop with our kernels when the comparison is about the control side, and in their own process when it is about the engine.

\end{document}